\newcommand{\algorithmfootnote}[2][\footnotesize]{%
  \let\old@algocf@finish\@algocf@finish% Store algorithm finish macro
  \def\@algocf@finish{\old@algocf@finish% Update finish macro to insert "footnote"
    \leavevmode\rlap{\begin{minipage}{\linewidth}
    #1#2
    \end{minipage}}%
  }%
}
\newtheorem{lemma}{Lemma}
\newtheorem{theorem}{Theorem}
\newtheorem{definition}{Definition}
\newtheorem{assumption}{Assumption}
\newtheorem{corollary}{Corollary}
\newtheorem{remark}{Remark}
\newtheorem{appclaim}{Claim A\ignorespaces}
\DeclareMathOperator*{\argmin}{arg\,min}
\begin{document}

\begin{frontmatter}
    \title{GenMod: A generative modeling approach for spectral representation of PDEs with random inputs}

    \author[1]{Jacqueline Wentz}

    \author[1]{Alireza Doostan\texorpdfstring{\corref{cor1}}{}}
    \ead{alireza.doostan@colorado.edu}

    \address[1]{Aerospace Engineering Sciences Department, University of Colorado, Boulder, CO 80309, USA}

    \cortext[cor1]{Corresponding author}

    \begin{abstract}
        We propose a method for quantifying uncertainty in high-dimensional PDE systems with random parameters, where the number of solution evaluations is small. Parametric PDE solutions are often approximated using a spectral decomposition based on polynomial chaos expansions. For the class of systems we consider (i.e., high dimensional with limited solution evaluations) the coefficients are given by an underdetermined linear system in a regression formulation. This implies additional assumptions, such as sparsity of the coefficient vector, are needed to approximate the solution. Here, we present an approach where we assume the coefficients are close to the range of a generative model that maps from a low to a high dimensional space of coefficients. Our approach is inspired be recent work examining how generative models can be used for compressed sensing in systems with random Gaussian measurement matrices. Using results from PDE theory on coefficient decay rates, we construct an explicit generative model that predicts the polynomial chaos coefficient magnitudes. The algorithm we developed to find the coefficients, which we call GenMod, is composed of two main steps. First, we predict the coefficient signs using Orthogonal Matching Pursuit. Then, we assume the coefficients are within a sparse deviation from the range of a sign-adjusted generative model. This allows us to find the coefficients by solving a nonconvex optimization problem, over the input space of the generative model and the space of sparse vectors. We obtain theoretical recovery results for a Lipschitz continuous generative model and for a more specific generative model, based on coefficient decay rate bounds. We examine three high-dimensional problems and show that, for all three examples, the generative model approach outperforms sparsity promoting methods at small sample sizes.
    \end{abstract}

    \begin{keyword}
        uncertainty quantification; polynomial chaos expansion; compressed sensing; nonlinear approximation; sparse approximation
    \end{keyword}
\end{frontmatter}

\section{Introduction}

The goal of uncertainty quantification (UQ) is to predict how uncertain or random parameters influence a quantity of interest (QoI). Here, we consider UQ for physical systems that are modeled using partial differential equations (PDEs), where the QoI could, for example, be the steady-state solution at a specific point in space. Uncertainty in this system can result from either natural variability or from a lack of knowledge on the precise input parameter values. We are interested in scenarios where the physical system is complex and the number of uncertain parameters is large. Due to the curse of dimensionality, the number of samples needed at a specific resolution grows rapidly with the dimension (i.e., with the number of uncertain parameters). The complexity of the system, and hence the PDE, makes it computationally infeasible to perform the required number of simulations at this large number of input vector realizations. In this paper, we develop a novel method for performing UQ for this class of systems.

For approximating stochastic PDE solutions, a common UQ approach is a spectral method based on polynomial chaos (PC) expansions \cite{Wiener1938,Ghanem1991,Xiu2002,Xiu2003a}. Suppose that $\bm{Y}\in\mathbb{R}^d$ represents a vector of $d$ random parameters  with probability density function $\rho(\bm{Y})$ and assume that the scalar-valued QoI, given as $u(\bm{Y})$, has finite variance. The PC expansion of the QoI is written as the sum of coefficients $\{c_j\}$ multiplied by orthogonal polynomials $\{\psi_j\}$,
\begin{equation}\label{eq:intro-sum}
    u(\bm{Y}) = \sum_{j=1}^\infty c_j \psi_j(\bm{Y}) \approx \sum_{j=1}^P c_j \psi_j(\bm{Y}).
\end{equation}
Since $u(\bm{Y})$ has finite variance, the infinite sum given by (\ref{eq:intro-sum}) converges, and a finite truncation of the series, say of the first $P$ terms with a suitable ordering of $\{\psi_j\}$, provides an approximate solution. Note that the probability density function $\rho(\bm{Y})$ defines how to sample the uncertain parameters as well as the specific form of the orthogonal polynomials. For example, the uniform density distribution implies uniform sampling and the use of Legendre polynomials \cite{Xiu2002}.

Our goal is to estimate the coefficients $\{c_j\}$ using the available QoI evaluations. Then, using (\ref{eq:intro-sum}), we can estimate the QoI at arbitrary values of the random input vector. Specifically, given $N$ realizations of $\bm{Y}$, i.e., $\bm{y}^{(i)} \in  \mathbb{R}^d$ for $i=1,\dots, N$, and the corresponding, possibly noisy, QoI evaluation vector, $\bm{u} = [u(\bm{y}^{(1)}),u(\bm{y}^{(2)}),\dots,u(\bm{y}^{(N)})]^T$, we find the coefficient vector $\bm{\hat{c}}=[\hat{c}_1,\hat{c}_2,\dots,\hat{c}_P]^T$ that minimizes the distance between the approximate solution, $\hat{\bm{u}} = \Psi \bm{\hat{c}}$, and the observed solution $\bm{u}$. Here, $\Psi \in \mathbb{R}^{N\times P}$ is the measurement matrix that contains the orthogonal polynomial evaluations at the realizations of the random input, i.e., $\Psi_{i,j} = \psi_j(\bm{y}^{(i)})$.

For high dimensional PDEs with limited measurements, the linear system $\bm{u} = \Psi \bm{\hat{c}}$ may be underdetermined, i.e. $N<P$, and, therefore, additional assumptions are needed to approximate the coefficient vector. For specific classes of PDEs, the magnitude of the coefficients show exponential decay as the degree of the contributing polynomial increases \cite{Cohen2010,Beck2012,Tran2017}. This implies that the coefficient vector is compressible \cite{Candes2006}, and hence, can be approximated using a sparsity assumption \cite{Doostan2011,Rauhut2012,Chkifa2017} and compressed sensing \cite{Cohen2008}. However, this assumption does not take advantage of the known structure behind the exponential decay. Not only do higher order polynomials likely contribute less, but the rate of exponential decay can vary based on the specific direction considered in the random input space (e.g., see Proposition 3.1 from \cite{Beck2012}). Methods exist for enforcing a structured sparsity by limiting the possible support sets of the signal \cite{Baraniuk2010}, using alternative basis selection methods \cite{Chkifa2017}, or via a weighted $\ell_1$ minimization \cite{Peng2014,Chkifa2017}. However, these methods do not directly enforce structured exponential decay of the coefficients when it exists, and, instead, continue to rely on the assumption of sparsity in the coefficient vector.

We propose estimating the PC coefficients using a generative model that takes into account the decaying structure of the coefficient vector. For our purposes, we refer to a generative model as a model or function that maps from a lower to a higher dimensional space, i.e. $G:\mathbb{R}^k \rightarrow \mathbb{R}^P$ where $k<P$. Often, the term ``generative model" refers to a statistical model that is trained to create an output (e.g., image of a face) given a random input drawn from a probability distribution. For instance, in machine learning two well known types of generative models that use neural networks include variational auto-encoders (VAEs) \cite{Kingma2014} and generative adversarial networks (GANs) \cite{Goodfellow2020}. In contrast to these trainable generative models, here we define and evaluate a generative model that does not require initial training but is rather based on the predicted structure of the coefficient vector. Notably, the true coefficient vector may not precisely follow the structure defined by the generative model. However, given the limited number of solution evaluations, our approach provides a feasible method for regularizing the regression problem. Additionally, to account for scenarios where the generative model does not adequately capture the coefficient structure, we allow for sparse deviations from the range of the generative model.

The idea of using a generative model to approximate the signal vector in compressed sensing was recently introduced \cite{Bora2017}. To the authors' knowledge, this idea has only been applied to systems with a random Gaussian measurement matrix, $A\in\mathbb{R}^{N\times P}$. In this context, the goal is to find the signal $\bm{x} \in \mathbb{R}^P$ such that $A\bm{x} = \bm{b}$ where the observation vector $\bm{b}\in\mathbb{R}^N$ is known and the signal $\bm{x}$ is in the range of a generative model $G:\mathbb{R}^k \rightarrow \mathbb{R}^P$ (i.e., $\exists \bm{z} \in \mathbb{R}^k$ such that $G(\bm{z})=\bm{x}$). In a later work, the assumption that $\bm{x}$ is in the range of $G$ is relaxed by allowing for sparse deviations \cite{Dhar2018}. For the class of systems considered by \cite{Bora2017,Dhar2018}, recovery is guaranteed with high probability given the sample size exceeds a threshold value. The proof of this result relies on showing that the random Gaussian matrix $A$ satisfies the Set-Restricted Eigenvalue Condition (S-REC), which is a generalization of the Restricted Eigenvalue Condition (REC) \cite{Bickel2009}.
Note that in \cite{Bora2017,Dhar2018}, the generative models explored numerically were GANs and VAEs.

\subsection{Our contribution}

We extend the concept of compressed sensing with generative models to systems where the measurement matrix contains orthogonal polynomial evaluations, allowing for the recovery of structured signals. This allows for the generative modeling approach to be applied to perform UQ. We specifically focus on the Legendre measurement matrix and provide recovery results for a general $L$-Lipschitz function $G$. Along the way we derive Johnson-Lindenstrauss like distributional bounds for the Legendre measurement matrix. We note that our theoretical recovery results are strongly based on the previous work done by \cite{Bora2017,Dhar2018}. Specifically, we show that the Legendre measurement matrix satisfies the S-REC.

We derive an explicit generative model that is based on known bounds for the PC coefficients. The basic model predicts that, as the degree of the contributing polynomial increases, the magnitude of the corresponding coefficient decays exponentially. We use a separate method, based on Orthogonal Matching Pursuit (OMP) \cite{Pati1993}, to predict the coefficient signs, as these are not given by the generative model. Note that other methods, such as those based on $\ell_1$ minimization, could also be used to help determine the coefficient signs. Although we are biasing the system towards having a specific decay structure, our approach allows for deviations from this structure in multiple ways. First, the generative model includes features other then exponential decay, and, second, we allow for sparse deviations from the range of the sign-adjusted generative model, i.e., the generative model output multiplied by the predicted coefficient signs. Including sparse deviations also allows for coefficients with incorrect signs to flip.

The goal behind the generative model approach is to increase approximation accuracy at low sample sizes. Because the dimension of the latent space (i.e., the space that the generative model acts on) does not increase with the sample size, we expect that at some threshold sample size our approach will no longer outperform sparsity promoting methods such as those based on $\ell_1$ minimization or OMP \cite{Doostan2011,Yang2013,Peng2014}. Our numerical results show that, given the sample size is small enough, the generative model approach does outperform these other methods.

The structure of this paper is outlined as follows. In Section~\ref{sec:methods} we present an overview of the types of PDE systems under consideration (Section~\ref{sec:methods-pde}) and how the solution to these PDEs can be approximated using PC expansions (Section~\ref{sec:methods-pce}). We provide an explicit equation for a generative model that is based on the decay of the PC coefficients (Section~\ref{sec:methods-genmod}) and describe the optimization algorithm that is used to find the PC coefficient vector (Section~\ref{sec:methods-alg}). In Section~\ref{sec:previous-theory}, we provide an overview of previous theoretical results that will be directly referred to in our proofs. In Section~\ref{sec:theory-results}, we present the main theoretical results of the paper, and, in Section~\ref{sec:proofs}, we provide the proofs. Note that in some instances proofs are moved to \ref{sec:appB}.  In Section~\ref{sec:numerical-results}, we present the numerical results for three example problems. We demonstrate that the generative model outperforms iteratively-reweighted Lasso \cite{Tibshirani1996,Candes2008} and OMP for each of these problems at small sample sizes. Finally, in Section~\ref{sec:discussion}, we conclude with a discussion of possible future improvements.

\section{Methods}\label{sec:methods}

\subsection{PDE}\label{sec:methods-pde}

Let $\bm{Y}=(Y_1,Y_2,\dots,Y_d)$ represent a $d$-dimensional random vector defined on the complete probability space $(\Gamma, \mathcal{F}, \mathcal{P})$ where $\Gamma = \Gamma_1 \times\dots\times\Gamma_d$. We  assume that the random variables $Y_i$ are independent and uniformly distributed on $\Gamma_i = [-1,1]$ for $i=1,\dots,d$. We consider scenarios where the random vector $\bm{Y}$ represents uncertain parameters (e.g., boundary conditions, the diffusion coefficient, physical constants) that influence the solution $u(\bm{x},\bm{Y})$ of a PDE. We will use $\bm{\alpha} \in (\mathbb{N} \cup \{0\})^d$ to represent a multi-index vector that corresponds to the $d$-dimensional random input space. Let $\bm{r} \in \mathbb{R}^d$ and define $\bm{r}^{\bm{\alpha}}=\sum_{i=1}^N r_i^{\alpha_i}$, $|\bm{\alpha}|=\sum_{i=1}^d \alpha_i$, and $\bm{\alpha}! = \sum_{i=1}^d \alpha_i!$.

For a class of elliptic PDEs, there exist known, exponentially decreasing, bounds on the PC coefficients used to describe the solution of the PDE \cite{Beck2012}. Specifically, consider the following stochastic system defined on a convex bounded polygonal domain $\mathcal{D} \in \mathbb{R}^D$ with boundary $\partial \mathcal{D}$
\begin{equation}\label{eq:PDE-intro}
    \begin{aligned}
        -\nabla \cdot (a(\bm{x},\bm{Y})\nabla u(\bm{x},\bm{Y})) &= f(\bm{x}), & \bm{x} &\in \mathcal{D} \\
        u(\bm{x},\bm{Y}) &= 0, & \bm{x} &\in \partial \mathcal{D},
    \end{aligned}
\end{equation}
where we make the following assumptions about the  diffusion coefficient $a(\bm{x},\bm{Y})$:
\begin{assumption}[see Assumption 2.1 from \mbox{\cite{Beck2012}}]
    There exists $a_{min}>0$ and $a_{max}<\infty$ such that
    \begin{equation}
        \mathbb{P}(a_{min} \le a(\bm{x},\bm{Y})\le a_{max}, \forall\bm{x}\in \bar{\mathcal{D}})=1,
    \end{equation}
    where $\mathbb{P}(\cdot)$ represents the probability of an event.
\end{assumption}
\begin{assumption}[see Assumption 2.3 from \mbox{\cite{Beck2012}}]\label{assump:2}
    The diffusion coefficient $a(\bm{x},\bm{Y})$ is infinitely many times differentiable with respect to $\bm{Y}$ and $\exists \bm{r} \in \mathbb{R}_+^d$ s.t. for all $\bm{Y} \in \Gamma$
    \begin{equation}
        \left\|\frac{\partial_{\bm{\alpha}} a}{a}(\cdot,\bm{Y})\right\|_{L^\infty(D)} \le \bm{r}^{\bm{\alpha}} \quad \text{ with } \quad \partial_{\bm{\alpha}} a = \frac{\partial^{\alpha_1 + \dots + \alpha_d} a}{\partial y_1^{\alpha_1}\dots\partial y_d^{\alpha_d}},
    \end{equation}
    where $\bm{\alpha} \in (\mathbb{N} \cup \{0\})^d$ is a multi-index and $\bm{r}$ is independent of $\bm{y}$.
\end{assumption}
\noindent
In Section~{\ref{sec:methods-genmod}}, we present results showing that given $\bm{r}$ satisfies additional conditions, there exist known bounds on the PC coefficients. Note that, for systems where the diffusion coefficient is given by an expansion of the form $a(\bm{x},\bm{Y}) = a_0 + \sum b_i(\bm{x}) Y_i$, Assumption {\ref{assump:2}} is satisfied with $\bm{r} = [r_1,r_2,\dots,r_d]$, where $r_i = \|b_i\|_{L^\infty(D)}/a_{min}$ \mbox{\cite{Beck2012}}.

For the example applications, we consider a 1D version of (\ref{eq:PDE-intro}) in Section~\ref{sec:Ex1}, whereas in Section~\ref{sec:Ex23} we consider a more complex PDE of a heat driven cavity flow problem.

\subsection{Polynomial chaos expansion}\label{sec:methods-pce}

To approximate the solution of a PDE with uniformly distributed inputs, we use the Legendre PC basis functions. Let $\{\psi_j\}$ represent the set of univariate orthonormal Legendre polynomials of degree $j$, which are normalized such that,
\begin{equation}
    \int_{[-1,1]} \psi_j^2(y)\rho(y)dy=1,
\end{equation}
where the probability measure is $\rho = 1/2$.

Define the set $\Lambda_{p,d} := \{\bm{\alpha} \in (\mathbb{N} \cup \{0\})^d \mid |\bm{\alpha}|\le p \}$ to contain the multi-indices that correspond to the $d$-dimensional Legendre polynomials with degree of at most $p$. Note that the cardinality of $\Lambda_{d,p}$ is
\begin{equation}\label{eq:P}
    P := |\Lambda_{d,p}| = \frac{(p+d)!}{p!d!}.
\end{equation}

Using the multi-indices given in $\Lambda_{p,d}$, the solution to a PDE is approximated as
\begin{equation}
    u(\bm{x},\bm{Y}) = \sum_{\alpha \in (\mathbb{N}\cup\{0\})^d} c_{\bm{\alpha}}(\bm{x}) \psi_{\bm{\alpha}}(\bm{Y}) \approx \sum_{\bm{\alpha} \in \Lambda_{p,d}} c_{\bm{\alpha}}(\bm{x}) \psi_{\bm{\alpha}}(\bm{Y}).
\end{equation}
Here, each $d$-dimensional Legendre polynomial $\psi_{\bm{\alpha}}(\bm{Y})$ is equal to the product of univariate Legendre polynomials with degrees that are defined by the multi-index $\bm{\alpha}$, i.e.,
\begin{equation}
    \psi_{\bm{\alpha}}(\bm{Y}) = \prod_{i=1}^d \psi_{\alpha_i} (Y_i).
\end{equation}

We will order the multi-indices $\Lambda_{p,d} = \{\bm{\alpha}^{(1)},\bm{\alpha}^{(2)},\dots,\bm{\alpha}^{(P)}\}$ according to the following rules. Multi-indices with smaller $\ell_1$ norms appear first, i.e., if $\|\bm{\alpha}^{(i)}\|_1 < \|\bm{\alpha}^{(j)}\|_1$ then $i<j$. If the $\ell_1$ norm is equal, multi-index vectors are sorted such that those with larger values at lower indices appear first. That is, if $\alpha_k^{(i)} > \alpha_k^{(j)}$ for $k=\min\{\ell \mid \alpha_{\ell}^{(i)} \ne \alpha_{\ell}^{(j)}\}$, then $i<j$. Using this ordering and suppressing the dependency of $u$ on $\bm{x}$, we write the approximate solution as follows,
\begin{equation}
    u(\bm{Y}) = \sum_{i=1}^P c_i \psi_{\bm{\alpha}^{(i)}}(\bm{Y}).
\end{equation}

Suppose we have $N<P$ realizations of the random vector $\bm{Y}$, denoted as $\bm{y}^{(i)}$ for $i=1,\dots,N$ and the corresponding, possibly noisy, solution evaluation vector, $\bm{u} =  [u(\bm{y}^{(1)}),u(\bm{y}^{(2)}),\dots,u(\bm{y}^{(N)})]^T$. We define the Legendre measurement matrix $\Psi \in \mathbb{R}^{N \times P}$, elementwise, as
\begin{equation}
    \Psi_{ij} = \psi_{\bm{\alpha}^{(j)}} (\bm{y}^{(i)}).
\end{equation}
Each row of $\Psi$ corresponds to a sample realization and each column corresponds to one of the $P$ multi-indices. Our goal is to find the approximate coefficient vector $\bm{\hat{c}} = [\hat{c}_1,\hat{c}_2,\dots,\hat{c}_P]$ such that $\Psi \bm{\hat{c}} \approx \bm{u}$.

\subsection{Generative model based on coefficient decay}\label{sec:methods-genmod}

Our goal is to approximate the coefficient vector as the output of a generative model plus a sparse vector \cite{Bora2017,Dhar2018}. To define the generative model, we use known structural characteristics of the coefficient vector for PC expansions of PDE solutions. For specific classes of PDEs, it is known that the coefficient vector is compressible, i.e., the vector features a rapid decay of coefficient amplitude such that $|c_{\mathcal{I}(i)}| \le C i^{-1/r}$, where $C$ and $r$ are positive constants and $\mathcal{I}(i)$ indexes the sorted coefficients, i.e.,  $|c_{\mathcal{I}(1)}|>|c_{\mathcal{I}(2)}|>\dots>|c_{\mathcal{I}(P)}|$~\cite{Candes2006,Cohen2010,Beck2012,Tran2017}.
The compressibility of the signal implies that a sparsity assumption will lead to accurate signal recovery via compressed sensing \cite{Doostan2011,Rauhut2012,Chkifa2017}. Here, we will leverage the compressible structure of the coefficient vector to generate a nonlinear decay model that maps from a lower to a higher dimensional coefficient space. 

Theoretical work has shown that, under certain conditions, the Legendre PC coefficients of (\ref{eq:PDE-intro}) decay exponentially, where the decay rate varies depending on the direction considered in the random input space. Specifically, from Proposition 3.1 in \cite{Beck2012} we have that
\begin{equation}\label{eq:c-bound}
    \|c_{\bm{\alpha}}\|_{H_0^1(D)}
    \le C_0 \frac{|\bm{\alpha}|!}{\bm{\alpha}!}e^{-\sum_i g_i \alpha_i},
    \quad g_i = -\log(r_i/(\sqrt{3}\log2)),
\end{equation}
where $C_0 > 0$ is a constant, $\bm{r} \in\mathbb{R}_+^d$ is as given in Assumption~\ref{assump:2}, and $\|\cdot\|_{H_0^1(D)}$ is the gradient norm, i.e., $\|c\|_{H_0^1(D)} = \|\nabla c\|_{L^2(D)}$. Clearly, if $r_i < \sqrt{3}\log 2$ for $i=1,\dots,d$, then the coefficients exhibit exponential decay. Although (\ref{eq:c-bound}) provides a useful starting point for predicting coefficient values, the coefficient magnitudes could show different trends within the bounding envelope. Additionally, (\ref{eq:c-bound}) does not provide information on the coefficient signs.

To predict the magnitude of the coefficients, we construct a generative model using a generalized form of (\ref{eq:c-bound}). This model allows for other characteristics besides exponential decay to exist but guarantees, as the polynomial degree increases, exponential decay will be the dominate effect. Specifically, the generative model is defined as follows: let $G:\mathbb{R}^{2d+1} \rightarrow \mathbb{R}^P$ with the $i$th element of $G$ given as
\begin{equation}\label{eq:G}
    G_i(C,\bm{g},\bm{h}) := C \frac{|\bm{\alpha}^{(i)}|!}{\bm{\alpha}^{(i)}!}\prod_{j=1}^d (1+\alpha_j^{(i)})^{h_j} e^{-g_j \alpha_j^{(i)}},
\end{equation}
where $C\in\mathbb{R}_+$, $\bm{g}\in\mathbb{R}_+^d$, and $\bm{h}\in\mathbb{R}^d$ represent the $2d+1$ parameters. Recall that $d$ represents the number of uncertain variables, and, thus, the dimension of the latent space $2d+1$, increases linearly, rather than exponentially, with the number of input parameters. We consider scenarios where $d$ is large, but due to the exponential dependency of $P$ on $d$ (see (\ref{eq:P})) we have that $2d+1 \ll P$. This implies that $G$ maps from a lower to a higher dimensional space as desired.
In contrast to (\ref{eq:c-bound}), the generative model given by (\ref{eq:G}) contains an algebraic growth/decay term which allows for the coefficients of low order polynomials to show different characteristics besides exponential decay.

\begin{remark}
    Alternative functions for $G$ could be explored that are still motivated by the concept of exponential decay (e.g., the summation of decaying terms).  However, the purpose of this work was not to find the ideal function $G$, but rather to demonstrate the utility of using a generative model with exponential and algebraic decay qualities.
\end{remark}

Because the generative model only represents the magnitude of the coefficients, we define the vector $\bm{\zeta} \in \{-1,1\}^P$ to contain the signs of the $P$ coefficients. The final coefficient vector is then given as
\begin{equation}\label{eq:c}
    \bm{c} = D_{\bm{\zeta}} G(\bm{z}) + \bm{\nu},
\end{equation}
where the $2d+1$ parameters are now contained in the vector $\bm{z}$, i.e. $\bm{z}=(C,\bm{g},\bm{h})$, $D_{\bm{\zeta}} = \text{diag}(\bm{\zeta}) \in \mathbb{R}^{P \times P}$, and the vector $\bm{\nu} \in \mathbb{R}^P$ is assumed to be sparse. The values of $\bm{z}$, $\bm{\zeta}$, and $\bm{\nu}$ are determined using the GenMod algorithm, i.e., Algorithm~\ref{alg}, which is presented in the next section.

\begin{remark}
    In previous work, using generative models for compressed sensing, the sparse vector $\bm{\nu}$ was included to allow the signal to deviate from the range of the generative model \cite{Dhar2018}. In the present work, adding the vector $\bm{\nu}$ has the additional benefit of fixing coefficient signs that are initially labeled incorrectly during the optimization procedure.
\end{remark}

\subsection{Optimization problem and algorithm}\label{sec:methods-alg}

Given $N$ realizations of the random vector $Y$,  $\bm{y}^{(i)}$ for $i=1,\dots,N$, and the resulting noisy QoI evaluation vector $\bm{u}=[u(\bm{y}^{(1)}),u(\bm{y}^{(2)}),\dots,u(\bm{y}^{(N)})]^T$, our goal is to find the true coefficient vector $\bm{c}^*$ such that
\begin{equation}
    \bm{u} = \Psi \bm{c}^* + \bm{\eta},
\end{equation}
where $\Psi\in \mathbb{R}^{N\times P}$ is the Legendre measurement matrix calculated using the $N$ realizations of $\bm{Y}$ and $\bm{\eta}\in\mathbb{R}^N$ represents possible measurement noise. As mentioned, we assume that $\bm{c}^*$ can be approximated using ({\ref{eq:c}}).

This results in a challenging optimization problem because, to find the coefficient vector, we must find the input vector $\bm{z}\in\mathbb{R}^{2d+1}$, the sparse vector $\bm{\nu}\in\mathbb{R}^P$, and the coefficient signs $\bm{\xi}\in\{-1,1\}^P$. We propose using a sequential approach, where we first predict $\bm{\xi}$ using OMP and then we find $\bm{z}$ and $\bm{\nu}$ by iteratively searching for a solution to the following optimization problem:
\begin{mini}
    {\substack{\bm{z}\in\mathbb{R}^{2d+1}, \bm{\nu}\in\mathbb{R}^P}}
    {L(\bm{z},\bm{\nu};\bm{\zeta},\Psi,\bm{u}) + \lambda \| W(\bm{z}) \bm{\nu}\|_1,}
    {\label{opt:1}}
    {}
\end{mini}
where
\begin{equation}\label{eq:L}
        L(\bm{z},\bm{\nu};\bm{\zeta},\Psi,\bm{u}) :=
        \|\Psi (D_{\bm{\zeta}}G(\bm{z})+\bm{\nu}) - \bm{u}\|_2^2
\end{equation}
and $W(\bm{z})$ is a diagonal weight matrix such that
\begin{equation}
    W_{ii}(\bm{z}) := \frac{1}{G_i(\bm{z}) + \epsilon}.
\end{equation}
We are weighting the vector $\bm{\nu}$ based on the generative model's prediction of the coefficient magnitude. This implies that the penalty of adjusting a coefficient's value is scaled by the generative model's prediction. Thus, if $\bm{\nu}$ causes a sign to flip, the penalty is equal amongst all coefficients. This helps in scenarios where the sign of a coefficient is incorrectly determined in $\bm{\zeta}$ and the coefficient magnitude predicted by the generative model is large.

Algorithm~\ref{alg} provides the details of the GenMod optimization procedure. In the description of the algorithm we use the following definition,
\begin{equation}\label{eq:DeltaL}
    \Delta L(\bm{z}^{(1)},\bm{z}^{(2)},\bm{\nu};\bm{\zeta},\Psi,\bm{u}) :=
        \frac{L(\bm{z}^{(1)},\bm{\nu};\bm{\zeta},\Psi,\bm{u})-L(\bm{z}^{(2)},\bm{\nu};\bm{\zeta},\Psi,\bm{u})}{L(\bm{z}^{(2)},\bm{\nu};\bm{\zeta},\Psi,\bm{u})}.
\end{equation}
Prior to calling the algorithm, the $N$ training data points are divided into $N_{op}$ points used for optimization and $N_{va}$ points used for validation. The algorithm takes as input measurement matrices that correspond to these two datasets (i.e., $\Psi_{op} \in \mathbb{R}^{N_{op} \times P}$ and $\Psi_{va} \in \mathbb{R}^{N_{va} \times P}$) as well as the corresponding observations (i.e., $\bm{u}_{op} \in \mathbb{R}^{N_{op}}$ and $\bm{u}_{va} \in \mathbb{R}^{N_{va}}$). For some steps of the algorithm these two datasets are combined, as described below.

As mentioned the algorithm has two main stages. In the first stage, we set the predicted signs of the coefficients, i.e. $\bm{\zeta} \in \mathbb{R}^P$, using OMP (see lines 5-10 of Algorithm~\ref{alg}). In the second stage, we approximate a solution to (\ref{opt:1}) (see lines 11-25 or Algorithm~\ref{alg}). One study considered a similar optimization problem and solved it using gradient descent \cite{Dhar2018}, noting that the cost function is non-differentiable at only a finite number of points. For our system we found that this approach led to instabilities and, therefore, in Algorithm~\ref{alg} we use a different method. Specifically, we propose an alternating approach where, at each iteration, we first use Adam gradient descent \cite{Kingma2015} to find
\begin{equation}
    \bm{z}^{(i)} = \argmin_{\bm{z}} L(\bm{z},\bm{\nu}^{(i-1)};\bm{\zeta},\Psi,\bm{u}).
\end{equation}
We then use weighted Lasso \cite{Tibshirani1996} to find
\begin{equation}
    \bm{\nu}^{(i)} = \argmin_{\bm{\nu}} \left(L(\bm{z}^{(i)},\bm{\nu};\bm{\zeta},\Psi,\bm{u}) + \lambda\|W^{(i)} \bm{\nu}\|_1\right),
\end{equation}
where
\begin{equation}
   W_{jj}^{(i)} = \frac{1}{G_j(\bm{z}^{(i)}) + \epsilon G_0 (\bm{z}^{(i)})}.
\end{equation}
For the algorithm we set $\epsilon=1e-4$. We repeat this process until a convergence criteria is met. Specifically, we stop the iterations if the loss function $L$ does not decrease when applied to the validation data.

In Algorithm~\ref{alg}, we keep the optimization and validation data separated for performing Adam optimization, but we combine these two datasets when performing OMP and Lasso. Each time Adam optimization is performed, we use the optimization dataset to train the model and the validation dataset to select the best point for performing the next iteration of Lasso. For the OMP and Lasso steps, the optimization and validation datasets are combined in order to use a cross-validation approach.  We implement OMP using OrthogonalMatchingPursuitCV from the scikit-learn python package \cite{Pedregosa2011}. OMP is performed using a $k$-fold cross-validation with $k=5$ folds.
For the Lasso steps, we again use a cross-validation approach, with $k=5$ folds, and select the value of $\lambda$ using the ``one-standard-error" rule \cite{Hastie2009}; see Algorithm~\ref{alg:Lasso}. This algorithm is similar to the standard form of Lasso with $k$-fold cross validation, but with an increased bias towards sparsity. Specifically, rather then picking the $\lambda$ that minimizes the mean reconstruction error, i.e., $\lambda_L$ in Algorithm~{\ref{alg:Lasso}}, we pick the maximum $\lambda$ that is within one standard error of $\lambda_L$. Within the implementation of this algorithm we use the LassoCV function from the scikit-learn python package.

\begin{algorithm}
    \DontPrintSemicolon
    \SetNoFillComment
    \caption{GenMod($\Psi_{op}$,$\bm{u}_{op}$,$\Psi_{va}$,$\bm{u}_{va}$).}
    \label{alg}
    $\Psi =$ [$\Psi_{op}$;  $\Psi_{va}$] \;
    $\bm{u} =$ [$\bm{u}_{op}$; $\bm{u}_{va}$] \;
    $\bm{z}^{(0)} = \left(\log\left(\left|\frac{1}{N}\sum_{i=1}^N u_i\right|\right),\bm{1}_{2d}\right)$ \;
    $\bm{\nu}^{(0)} = \bm{0}_P$ \;
    \tcc{Generate initial prediction of coefficient signs}
    $\tilde{\bm{c}}$ = OrthogonalMatchingPursuitCV($\Psi$,$\bm{u}$,folds=5) \;
    \uIf{$\tilde{c}_i \ne 0$}
    {
        $\zeta_i = \text{sign}(\tilde{c}_i)$
    }
    \uElse
    {
        $\zeta_i = \text{sign}((\Psi^{(i)})^T(\bm{u}-\Psi\tilde{c}))$
    }
    $\bm{\zeta} = [\zeta_1,\zeta_2,\dots,\zeta_P]^T$ \;
    \tcc{Alternate Adam and Lasso}
    \For{$k=1,2,\dots,\text{max\_iteration}$}
    {
        Set $\bm{z}^{(k,0)} = \bm{z}^{(k-1)}$ \;
        \tcc{Run Adam Optimization}
        $\bm{m} = \bm{0}_k$ \;
        $\bm{v} = \bm{0}_k$ \;
        \For{$t=1,2,\dots,\text{max\_adam\_iteration}$}
        {
            $\bm{z}^{(k,t)},\bm{m},\bm{v} =$ AdamStep($\bm{z}^{(k,t-1)}$,$\bm{\nu}^{(k-1)}$,$\bm{\zeta}$,$\Psi_{op}$,$\bm{u}_{op}$,$\bm{m}$,$\bm{v}$,$t$) \;
            \tcc{Enforce exponential decay}
            \If{$\bm{z}_i^{(k,t)} < 0$ for $i=2,\dots,d+1$}
            {
                $\bm{z}_i^{(k,t)} = 0$
            }
            Break if $\Delta L(\bm{z}^{(k,t)},\bm{z}^{(k,t-1)},\bm{\nu}^{(k-1)};\bm{\zeta},\Psi_{op},\bm{u}_{op}) < \varepsilon$\;
        }
        \tcc{Find best $\bm{z}$ given validation data}
        Set $\mathcal{Z}^{(k)} = \{\bm{z}^{(k,0)},\bm{z}^{(k,1)},\dots\}$ \;
        $\bm{z}^{(k)} = \argmin_{\bm{z} \in \mathcal{Z}^{(k)}} L(\bm{z},\bm{\nu}^{(k-1)};\bm{\zeta},\Psi_{va},\bm{u}_{va})$ \;
        \tcc{Run weighted lasso with cross validation}
        $\bm{x}$ = LassoWithStErRule($\Psi (W(\bm{z}^{(k)}))^{-1}$, $\bm{u} - \Psi D_{\bm{\zeta}} G(\bm{z}^{(k)})$;$\bm{\lambda}$)\;
        $\bm{\nu}^{(k)} = (W(\bm{z}^{(k)}))^{-1}\bm{x}$ \;
        \tcc{If no improvement on validation data, stop}
        \If{$L(\bm{z}^{(k)},\bm{\nu}^{(k)};\bm{\zeta},\Psi_{va},\bm{u}_{va}) > L(\bm{z}^{(k-1)},\bm{\nu}^{(k-1)};\bm{\zeta},\Psi_{va},\bm{u}_{va})$}
        {
            Break
        }
    }
    \Return{$\bm{z}^{(k)}$,$\bm{\nu}^{(k)}$,$\bm{\xi}$}
    \algorithmfootnote{The vector $\bm{1}_{2d} \in \mathbb{R}^{2d}$ is composed of ones and $\bm{0}_P \in \mathbb{R}^P$ is composed of zeros. Additionally, for the examples considered in Section~{\ref{sec:numerical-results}}, we set $\varepsilon=1e-6$ and the $\bm{\lambda}$ vector in the LassoWithStErRule() is constructed automatically by the scikit learn LassoCV function. For details on AdamStep() see \ref{sec:appA} or Algorithm 1 in \cite{Kingma2015}.}
\end{algorithm}

\begin{algorithm}
    \DontPrintSemicolon
    \SetNoFillComment
    \caption{LassoWithStErRule($\Phi$,$\bm{u}$;$\bm{\lambda}$).}
    \label{alg:Lasso}
    Split data $\bm{u}$ and matrix $\Phi$ into $k$ folds;  $\bm{u} = [\bm{u}^{(1)},\dots,\bm{u}^{(5)}]$ and $\Phi = [\Phi^{(1)};\dots;\Phi^{(k)}]$ \;
    Let $\bm{u}^{(j^C)}$ and $\Phi^{(j^C)}$ denote the data and matrix with the $j$th fold removed \;
    $N_{\lambda} = \text{size}(\bm{\lambda})$ \;
    \For{$\ell = 1,\dots,N_{\lambda}$} {
        \For{$j=1,\dots,k$} {
            \tcc{Perform Lasso using all data except the $j$th fold.}
            $\bm{x}$ = Lasso($\Phi^{(j^C)}$,$\bm{u}^{(j^C)}$,$\lambda$=$\lambda_{\ell}$) \;
            \tcc{Calculate the reconstruction error using the $j$th fold.}
            $e_{\ell,j} = \|\Phi^{(j)}\bm{x} - \bm{u}^{(j)}\|_2^2$
        }
        \tcc{Find the mean reconstruction error across the folds}
        $e_{\ell} = \frac{1}{k} \sum_{i=1}^{k} e_{\ell,i}$
    }
    \tcc{Find the final hyperparameter using the standard error rule}
    $L= \argmin_{\ell} e_{\ell}$ \;
    $s_{L}$ = $\frac{1}{\sqrt{N}} \text{std}([e_{L,1},e_{L,2},\dots,e_{L,k}])$ \;
    $\lambda_{final} = \text{maximum } \lambda_{\ell}$ such that $e_{\ell} < e_{L} + s_{L}$. \;
    \tcc{Perform Lasso using the final hyperparameter value on the full dataset}
    $\bm{\nu} =$ Lasso($\Phi,\bm{u},\lambda_{final}$). \;
    \Return{$\bm{\nu}$}
\end{algorithm}
For the remainder of the paper, we assume that Algorithm~\ref{alg} provides a close to optimal coefficient vector. We do not prove this result, but in practice we find the coefficient vectors predicted by Algorithm~\ref{alg} are consistently accurate (see Section~\ref{sec:numerical-results}).

\subsection{Lasso and Orthogonal Matching Pursuit}\label{sec:methods-las-omp}

We will compare our optimization algorithm to approaches where we use iteratively-reweighted (IRW) Lasso \cite{Candes2008} or OMP to directly find the coefficient vector. In both these approaches the underlying assumption is that the coefficient vector is sparse. As in Algorithm {\ref{alg}}, to perform  OMP we use the scikit-learn python package and perform $k$-fold cross validation using $k=5$ folds. Algorithm {\ref{alg:IRW-Lasso}} in \ref{sec:appA} describes the steps of the IRW Lasso approach. The goal of this algorithm is to find a solution to the following optimization problem:
\begin{mini}
    {\substack{\bm{c}\in\mathbb{R}^P}}
    {\|\Psi \bm{c} - \bm{u}\| + \lambda \| W \bm{c}\|_1,}
    {\label{opt:IRW-Lasso}}
    {}
\end{mini}
where $W$ is a diagonal matrix such that
\begin{equation}
    W_{ii} = \frac{1}{|c_i| + \tau}.
\end{equation}
Note, in contrast to previous implementations of IRW Lasso, if a convergence criteria is not met, we repeat the algorithm at increasing values of $\tau$. In our numerical examples, we set the initial value of $\tau=10^{-4}$ and found that in no cases did we need to increase $\tau$ greater than $\tau=10^{-1}$ to obtain convergence.

\subsection{Notational notes}

For the remainder of the paper we will use $A \in\mathbb{R}^{N\times P}$ to refer to an arbitrary measurement matrix and $\Psi \in\mathbb{R}^{N\times P}$ to refer to the more specific Legendre measurement matrix. We also define the matrix $\Phi = \frac{1}{\sqrt{N}} \Psi D_{\bm{\xi}}$, which will be used extensively in the theoretical results section. We define the vector $\bm{\xi} \in \{-1,1\}^P$ to represent a Rademacher sequence \mbox{\cite{Wainwright-Ch2-2019}}, i.e., uniformly distribute on $\{-1,1\}^P$ and set $D_{\bm{\xi}} = \text{diag}(\bm{\xi})$. Finally, we recall that the probability of an event $x$ is written as $\mathbb{P}(x)$.

\section{Previous Theoretical Results}\label{sec:previous-theory}

In this section we discuss previous work that is used in Section~\ref{sec:theory-results} to prove our main theoretical results. In Section~\ref{sec:previous-theory-1} we present details from studies that used generative models for compressed sensing \cite{Bora2017,Dhar2018}. The theory behind this work relies on showing that a random measurement matrix satisfies certain distributional properties with high probability. Therefore, in Section~\ref{sec:previous-theory-2}, we discuss background information on these properties in the context of the Legendre measurement matrix.

\subsection{Compressed sensing using generative models}\label{sec:previous-theory-1}

Whether a generative model leads to accurate signal recovery depends on both the measurement matrix $A$ and the form of the generative model \cite{Bora2017}. Suppose $S \subseteq \mathbb{R}^P$ is equal to the range of the generative model. Then, we are guaranteed accurate signal recovery if the measurement matrix $A$ satisfies the Set-Restricted Eigenvalue Condition (S-REC), first defined by \cite{Bora2017}, on $S$.
\begin{definition}
    Let $S \subseteq \mathbb{R}^P$. For $\gamma >0, \delta \ge 0$, a matrix $A \in \mathbb{R}^{N\times P}$ satisfies the S-REC($S,\gamma,\delta$), if $\forall \bm{x}_1,\bm{x}_2\in S$,
    \begin{equation}
        \|A(\bm{x}_1-\bm{x}_2)\|_2 \ge \gamma \|\bm{x}_1 - \bm{x}_2\|_2 - \delta.
    \end{equation}
\end{definition}

The random Gaussian matrix satisfies the S-REC($S$,$\gamma$,$\delta$) for specific sets $S$, if the number of measurements $N$ exceeds a threshold value dependent on $\gamma$ and $\delta$ \cite{Bora2017,Dhar2018}. Specifically, this result applies when $A \in \mathbb{R}^{N \times P}$ where $A_{ij} \sim \mathcal{N}(0,\frac{1}{N})$ and when $S$ is either the range of $G$ (see Lemma 4.1, \cite{Bora2017}) or the set of sparse deviations from the range of $G$ (see Lemma~2 from \cite{Dhar2018}).
To the authors' knowledge, the S-REC condition has not been studied for measurement matrices other than the random Gaussian matrix. That is, this is the  first paper that has studied the S-REC as applied to the Legendre measurement matrix.

If a matrix $A \in\mathbb{R}^{N\times P}$ satisfies the S-REC on the set $S$, then the solution to
\begin{mini}
    {\bm{x} \in S}
    {\|A \bm{x} - \bm{u}\|_2^2}
    {\label{opt:3}}
    {}
\end{mini}
is close to the true solution, $\bm{x}^* \in\mathbb{R}^P$, of the linear problem $\bm{u}=A \bm{x}^*+\bm{\eta}$, where $\bm{\eta}$ represents measurement noise. This is stated formally in the following lemma.
\begin{lemma}[Lemma 4.3 from \cite{Bora2017}]\label{lemma:bora-result}
    Let $A \in \mathbb{R}^{N\times P}$ be drawn from a distribution that (1) satisfies the S-REC($S,\gamma,\delta$) with probability at least $1-\mu$ and (2) for every fixed $\bm{x} \in \mathbb{R}^P$ has $\|A \bm{x}\|_2\le2\|\bm{x}\|_2$ with probability at least $1-\mu$. For any $\bm{x}^* \in \mathbb{R}^P$ and noise $\bm{\eta}$, let $\bm{u}=A \bm{x}^*+\bm{\eta}$. Let $\hat{\bm{x}}$ approximately minimize $\|\bm{u}-A \bm{x}\|_2$ over $\bm{x}\in S$, i.e., for some $\epsilon>0$,
    \begin{equation}
        \|\bm{u}-A\hat{\bm{x}}\|_2\le \min_{\bm{x}\in S} \|\bm{u}-A \bm{x}\|_2 +\epsilon.
    \end{equation}
    Then,
    \begin{equation}\label{eq:err-bound-bora}
        \|\hat{\bm{x}}-\bm{x}^*\|_2 \le \left(\frac{4}{\gamma} + 1\right) \min_{\bm{x} \in S} \|\bm{x}^* - \bm{x}\|_2 + \frac{2\|\bm{\eta}\|_2}{\gamma} + \frac{\epsilon}{\gamma}+ \frac{\delta}{\gamma}\
    \end{equation}
    with probability at least $1-2\mu$.
\end{lemma}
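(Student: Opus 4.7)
My overall strategy is the standard ``compare to a reference point'' argument: pick $\bar{\bm{x}} \in S$ that (approximately) attains $\min_{\bm{x} \in S} \|\bm{x}^* - \bm{x}\|_2$, and split
$$\|\hat{\bm{x}} - \bm{x}^*\|_2 \le \|\hat{\bm{x}} - \bar{\bm{x}}\|_2 + \|\bar{\bm{x}} - \bm{x}^*\|_2.$$
The second summand contributes the ``$1$'' in the coefficient $(4/\gamma + 1)$ in the final bound. All of the work is estimating $\|\hat{\bm{x}} - \bar{\bm{x}}\|_2$, which, crucially, is a difference of two elements of $S$, so it is controlled by S-REC.

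Since $\hat{\bm{x}},\bar{\bm{x}} \in S$, I would apply S-REC$(S,\gamma,\delta)$ to the pair and then bound the right-hand side by inserting $\bm{u}$:
$$\gamma\,\|\hat{\bm{x}} - \bar{\bm{x}}\|_2 - \delta \le \|A(\hat{\bm{x}} - \bar{\bm{x}})\|_2 \le \|A\hat{\bm{x}} - \bm{u}\|_2 + \|\bm{u} - A\bar{\bm{x}}\|_2.$$
The near-optimality of $\hat{\bm{x}}$, together with admissibility of $\bar{\bm{x}}$ in the inner minimization, yields $\|A\hat{\bm{x}} - \bm{u}\|_2 \le \|A\bar{\bm{x}} - \bm{u}\|_2 + \epsilon$, so the right-hand side becomes $2\|\bm{u} - A\bar{\bm{x}}\|_2 + \epsilon$. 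Substituting $\bm{u} = A\bm{x}^* + \bm{\eta}$ and using the triangle inequality a second time gives
$$\|\bm{u} - A\bar{\bm{x}}\|_2 \le \|A(\bm{x}^* - \bar{\bm{x}})\|_2 + \|\bm{\eta}\|_2.$$

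Next I would invoke assumption (2) on the vector $\bm{x}^* - \bar{\bm{x}}$, which is \emph{deterministic} (it depends only on $\bm{x}^*$ and $S$, not on the realization of $A$): with probability at least $1-\mu$, $\|A(\bm{x}^* - \bar{\bm{x}})\|_2 \le 2\|\bm{x}^* - \bar{\bm{x}}\|_2$. Chaining the previous inequalities, rearranging, and dividing by $\gamma$ produces
$$\|\hat{\bm{x}} - \bar{\bm{x}}\|_2 \le \frac{4\|\bm{x}^* - \bar{\bm{x}}\|_2 + 2\|\bm{\eta}\|_2 + \epsilon + \delta}{\gamma}.$$
Combining with the opening triangle inequality, and letting the slack in the choice of $\bar{\bm{x}}$ tend to zero so that $\|\bm{x}^* - \bar{\bm{x}}\|_2 \to \min_{\bm{x} \in S}\|\bm{x}^* - \bm{x}\|_2$, yields the stated bound. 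The probability estimate follows from a union bound over the S-REC event (failure probability $\le \mu$) and the concentration event on the fixed vector $\bm{x}^* - \bar{\bm{x}}$ (failure probability $\le \mu$), for a total of $2\mu$.

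The main subtlety to watch, rather than a genuine obstacle, is the separation of random and deterministic quantities. Assumption (2) is a pointwise, non-uniform statement: it cannot be applied to the random vector $\hat{\bm{x}}$, which depends on $A$. The proof works precisely because $\hat{\bm{x}} - \bar{\bm{x}}$ is handled through S-REC (a \emph{uniform} statement over pairs in $S$), while assumption (2) is only ever invoked on the deterministic vector $\bm{x}^* - \bar{\bm{x}}$. Keeping that distinction straight — and noting that the infimum over $S$ may not be attained, so an arbitrarily small slack in picking $\bar{\bm{x}}$ must be absorbed at the end — is the only care required.
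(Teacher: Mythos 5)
Your proof is correct and follows exactly the standard argument for this result: the paper does not reprove this lemma (it imports it as Lemma 4.3 of Bora et al.), and your decomposition via a reference point $\bar{\bm{x}}\in S$, S-REC applied to $\hat{\bm{x}}-\bar{\bm{x}}$, near-optimality of $\hat{\bm{x}}$, and assumption (2) applied only to the deterministic vector $\bm{x}^*-\bar{\bm{x}}$, with a union bound giving $1-2\mu$, is precisely that original proof. Your remark on keeping assumption (2) away from the $A$-dependent vector $\hat{\bm{x}}$ is the right point of care; nothing further is needed.
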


The error bound given by (\ref{eq:err-bound-bora}) contains four terms: The first is error caused by $S$ not containing $\bm{x}^*$ (i.e., error when the generative model does not approximate the solution well), the second is error from measurement noise, the third is error due to the optimization algorithm not finding the optimal solution, and the fourth is error caused by the slack term $\delta$ in the S-REC definition.

\begin{remark}
    Although the approximation error given by ({\ref{eq:err-bound-bora}}) scales with the noise magnitude $\|\bm\eta\|_2$, this error term will not grow as we increase the number of measurements $N$ for the examples considered here, i.e., the Gaussian random matrix $A$ where $A_{ij} \sim \mathcal{N}\left(0,\frac{1}{N}\right)$ and the Legendre measurement matrix $\frac{1}{\sqrt{N}}\Psi$. This is because both these matrices have a $1/\sqrt{N}$ scaling.
\end{remark}

To apply Lemma~\ref{lemma:bora-result} to the PC expansion system, we will show that $\Phi$, a variation of the Legendre measurement matrix, satisfies the S-REC and that for $\bm{x}\in\mathbb{R}^P$, $\|\Phi \bm{x} \|_2 \le 2\|\bm{x}\|_2$. To do this, we use an approach similar to methods presented in \cite{Bora2017,Dhar2018}. These studies proved the random Gaussian matrix satisfies the S-REC by using known concentration tail inequalities. For the random Gaussian matrix these concentration inequalities are as follows. If $A$ is such that $A_{ij} \sim \mathcal{N}(0,\frac{1}{N})$, then $Y = N \frac{\|A\bm{x}\|_2^2}{\|\bm{x}\|_2^2}$ follows a chi-squared distribution with $N$ degrees of freedom and, therefore, $Y$ is sub-exponential, i.e.,
\begin{equation}
    \mathbb{P}(|Y - \mathbb{E}(Y)| \ge t) \le
        \begin{cases}
            2e^{-\frac{t^2}{2\nu^2}} & 0 \le t \le \nu^2/b \\
            2e^{-\frac{t}{2b}} & t > \nu^2/b,
        \end{cases}
\end{equation}
where $(\nu,b)=(2\sqrt{N},4)$ \cite{Wainwright-Ch2-2019}. Setting $t = \epsilon N$ and rearranging we have that $A$ satisfies the following concentration inequalities,
\begin{equation}\label{eq:A-conc-bound}
    \mathbb{P}\left(\left|\frac{\|A\bm{x}\|_2^2}{\|\bm{x}\|_2^2} - 1\right| \ge \epsilon\right) \le
        \begin{cases}
            2e^{-\frac{N\epsilon^2}{8}} & 0 \le \epsilon \le 1\\
            2e^{-\frac{N\epsilon}{8}} & \epsilon > 1.
        \end{cases}
\end{equation}
In the present work our goal is to show that that the Legendre measurement matrix satisfies a probability bound similar to (\ref{eq:A-conc-bound}). Importantly, this bound must be valid for large values of $\epsilon$.

\subsection{The restricted isometry and Johnson-Lindenstrauss properties}\label{sec:previous-theory-2}

To show the Legendre measurement matrix $\Psi$ satisfies the necessary concentration inequalities, we will leverage previous results. Specifically, it is known that $\frac{1}{\sqrt{N}}\Psi$ satisfies the restricted isometry property (RIP) \cite{Peng2014} and, given a matrix satisfies the RIP, certain distributional tail inequalities follow with high probability \cite{Krahmer2011}.

The RIP is defined as follows:
\begin{definition}
    We say a matrix $A\in\mathbb{R}^{N\times P}$ satisfies the RIP($s$,$\delta$) if
    \begin{equation}
        (1-\delta) \|\bm{x}\|_2^2 \le \|A \bm{x}\|_2^2 \le (1+\delta)\|\bm{x}\|_2^2
    \end{equation}
    for any vector $\bm{x} \in \mathbb{R}^P$ such that $\|\bm{x}\|_0 \le s$.
\end{definition}
The following corollary states that, given a sufficient sample size, the Legendre measurement matrix satisfies the RIP with high probability. This is a modified version of Corollary 3.1 from \cite{Peng2014} based on the original results for bounded orthonormal systems \cite{Rauhut2010}.

\begin{corollary}\label{corollary:RIP}
    Let $\Psi\in\mathbb{R}^{N\times P}$ be a Legendre measurement matrix. If
    \begin{equation}\label{eq:RIP-Psi}
        N \ge C 3^p \delta^{-2} s \log^2(s)\log^2(P),
    \end{equation}
    then $\frac{1}{\sqrt{N}}\Psi$ satisfies the RIP($s$,$\delta$) with probability at least than $1-e^{-\gamma\log^2(s)\log^2(P)}$, where $C$ and $\gamma$ are constants independent of $N$ and $p$.
\end{corollary}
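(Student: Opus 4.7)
The plan is to apply the general restricted isometry theorem for bounded orthonormal systems (BOS) directly to the multivariate Legendre basis. The key quantity is the uniform bound
\[
K := \sup_{\bm{\alpha} \in \Lambda_{p,d}} \|\psi_{\bm{\alpha}}\|_{L^\infty([-1,1]^d)},
\]
because the standard BOS RIP theorem requires only $N \ge c K^2 \delta^{-2} s \log^2(s) \log^2(P)$ to guarantee RIP$(s,\delta)$ with the exponential-type probability stated. The entire content of the corollary is therefore the claim $K^2 \le 3^p$, combined with an appeal to the known BOS result (Rauhut 2010; Peng et al., Corollary 3.1).

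First, I would recall the classical uniform bound for the orthonormalized univariate Legendre polynomials on $[-1,1]$, namely $|\psi_j(y)| \le \sqrt{2j+1}$. Since the multivariate basis functions are tensor products, $\psi_{\bm{\alpha}}(\bm{y}) = \prod_{i=1}^d \psi_{\alpha_i}(y_i)$, the uniform bound factors as $\|\psi_{\bm{\alpha}}\|_\infty \le \prod_{i=1}^d \sqrt{2\alpha_i+1}$. Using the elementary inequality $2k+1 \le 3^k$ for $k \in \mathbb{N}\cup\{0\}$ (immediate for $k\in\{0,1\}$ and by induction thereafter), this yields
\[
\|\psi_{\bm{\alpha}}\|_\infty^2 \;\le\; \prod_{i=1}^d 3^{\alpha_i} \;=\; 3^{|\bm{\alpha}|} \;\le\; 3^p
\]
for every $\bm{\alpha}\in\Lambda_{p,d}$. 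Consequently the multivariate Legendre system is a BOS on $[-1,1]^d$ with respect to the uniform probability measure $\rho=2^{-d}$, with uniform bound $K \le 3^{p/2}$.

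Second, I would invoke the BOS RIP theorem: if the rows of $\Psi\in\mathbb{R}^{N\times P}$ are drawn i.i.d.\ by evaluating a BOS at points sampled from the orthogonality measure, then for any $\delta\in(0,1)$ and any $s\le P$, the rescaled matrix $\tfrac{1}{\sqrt{N}}\Psi$ satisfies RIP$(s,\delta)$ with probability at least $1 - \exp(-\gamma\log^2(s)\log^2(P))$ provided
\[
N \;\ge\; c\,K^2\,\delta^{-2}\,s\,\log^2(s)\,\log^2(P)
\]
for universal constants $c,\gamma>0$. Substituting the bound $K^2 \le 3^p$ produces exactly the stated sample complexity.

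The substantive work is not in this verification but inside the cited BOS theorem itself, whose proof relies on a generic-chaining / Dudley-type argument (originating with Rudelson--Vershynin and refined by Rauhut) to control the supremum of an empirical process over the unit sphere of $s$-sparse vectors. In the present proof we take that machinery as a black box; all we actually do is (i) verify the Legendre BOS hypothesis and (ii) track how the system-dependent constant $K$ enters the sample-complexity bound. The one point worth highlighting is the $3^p$ factor: it is precisely the unavoidable $L^\infty$ blow-up of tensorized Legendre polynomials of total degree $p$ that drives the exponential-in-$p$ dependence in the required sample size.
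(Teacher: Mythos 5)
Your proposal is in substance the same argument the paper uses: the corollary is proved by reducing to the bounded orthonormal system RIP theorem of Rauhut (as imported through Corollary 3.1 of Peng et al.), with the only Legendre-specific input being the uniform bound $\max_{\bm{\alpha}\in\Lambda_{p,d}}\|\psi_{\bm{\alpha}}\|_\infty \le 3^{p/2}$, which you verify directly (the paper simply cites it). One small point of precision: the BOS theorem you invoke as a black box is not usually stated in the form $N \ge cK^2\delta^{-2}s\log^2(s)\log^2(P)$; the standard statement reads $N/\log(N) \ge CK^2\delta^{-2}s\log^2(s)\log(P)$, and the paper's accompanying remark bridges this gap by observing that, since $N<P$, the condition \eqref{eq:RIP-Psi} with its extra factor of $\log(P)$ implies the $N/\log(N)$ condition. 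You should include that one-line translation (or cite a version of the BOS theorem literally in the form you state) rather than assume the black box already matches the corollary; with that added, your proof coincides with the paper's.
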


\begin{remark}
    Corollary~{\ref{corollary:RIP}} gives a slightly different result when compared to Corollary 3.1 from \mbox{\cite{Peng2014}}. The result given in \mbox{\cite{Peng2014}} relies on the assumption that $s \ge 3^p \delta^{-2} \log(P)$ (see \mbox{\cite{Foucart2010}}). In order to avoid this assumption, we consider the more general bound given by \mbox{\cite{Rauhut2010}},
    \begin{equation}
        \frac{N}{\log(N)} \ge C 3^p \delta^{-2} s \log^2(s) \log(P),
    \end{equation}
    and note that ({\ref{eq:RIP-Psi}}) implies this bound, assuming that $N<P$.
\end{remark}

\begin{remark}
    Typically, the restricted isometry constant $\delta$ is assumed to be such that $\delta \in (0,1)$. However, in later proofs, we allow for $\delta > 1$. The result in Corollary~\ref{corollary:RIP} is still valid; however, modifications to the constants, as presented in \cite{Rauhut2010}, might be necessary.
\end{remark}

Since $\frac{1}{\sqrt{N}}\Psi$ satisfies the RIP, we will use relationships between the RIP and Johnson-Lindenstrauss (JL) distributional property to obtain concentration inequalities similar to (\ref{eq:A-conc-bound}). The JL distribution property is defined as follows:
\begin{definition}
    We say a matrix $A \in \mathbb{R}^{N\times P}$ satisfies the JL distributional property at level $\epsilon$, i.e., JL($\epsilon$), if for any $x \in \mathbb{R}^P$
    \begin{equation}
        (1-\epsilon)\|\bm{x}\|_2^2 \le \|A \bm{x}\|_2^2 \le (1+\epsilon)\|\bm{x}\|_2^2.
    \end{equation}
\end{definition}
Given the matrix $A$ satisfies the RIP, the following theorem gives the probability that the matrix $AD_{\bm{\xi}}$ satisfies the JL distributional property, where we recall $\bm{\xi} \in \mathbb{R}^P$ is a Rademacher sequence.
\begin{theorem}[Theorem 3.1 from \cite{Krahmer2011}]\label{thm:JL}
    Fix $\eta > 0$ and $\epsilon \in (0,1)$, and consider a finite set $E \subset \mathbb{R}^P$ of cardinality $|E| = e$. Set $s \ge 40 \log \frac{4e}{\eta}$ to be an even integer and suppose $A \in \mathbb{R}^{N\times P}$ satisfies the RIP of order $s$ and level $\delta \le \frac{\epsilon}{4}$. Let $\bm{\xi} \in \mathbb{R}^P$ be a Rademacher sequence. Then with probability exceeding $1-\eta$
    \begin{equation}
        (1-\epsilon)\|\bm{x}\|_2^2 \le \|A D_{\bm{\xi}} \bm{x}\|_2^2 \le (1+\epsilon)\|\bm{x}\|_2^2
    \end{equation}
    uniformly for all $\bm{x} \in E$.
\end{theorem}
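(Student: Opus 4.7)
The plan is to reduce the uniform statement over $E$ to a pointwise concentration inequality for each fixed $\bm{x}$ and then apply a union bound. For fixed $\bm{x} \in \mathbb{R}^P$, I would rewrite the random quantity as a Rademacher quadratic form
\begin{equation*}
    \|AD_{\bm{\xi}}\bm{x}\|_2^2 = \bm{\xi}^T M \bm{\xi}, \qquad M := \mathrm{diag}(\bm{x})\,A^T A\,\mathrm{diag}(\bm{x}).
\end{equation*}
Because $\xi_i^2 = 1$, the diagonal contribution $\mathrm{tr}(M) = \sum_i x_i^2 \|A e_i\|_2^2$ is deterministic, and applying the RIP to each $1$-sparse standard basis vector $e_i$ gives $|\mathrm{tr}(M) - \|\bm{x}\|_2^2| \le \delta \|\bm{x}\|_2^2$. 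It therefore suffices to show that the off-diagonal chaos $\bm{\xi}^T \widetilde M \bm{\xi}$, where $\widetilde M := M - \mathrm{diag}(M)$, is small with high probability.

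For the off-diagonal term I would invoke a Hanson-Wright / Bernstein-type deviation inequality for second-order Rademacher chaos,
\begin{equation*}
    \mathbb{P}\!\left(|\bm{\xi}^T \widetilde M \bm{\xi}| \ge t\right) \le 2\exp\!\left(-c \min\!\left(\tfrac{t^2}{\|\widetilde M\|_F^2},\;\tfrac{t}{\|\widetilde M\|_{2 \to 2}}\right)\right),
\end{equation*}
and use the RIP to control the two matrix norms. Writing $\widetilde M$ in terms of $A^T A - I$ conjugated by $\mathrm{diag}(\bm{x})$, the RIP bound $\|(A^T A - I)|_T\|_{2\to 2} \le \delta$ for every $s$-sparse support $T$ can be upgraded to dense vectors by a standard dyadic partition of the coordinates of $\bm{x}$ into $s$-sparse blocks of decreasing amplitude, yielding estimates of the schematic form $\|\widetilde M\|_F \le C\delta\|\bm{x}\|_2^2$ and $\|\widetilde M\|_{2\to 2} \le C\delta\|\bm{x}\|_2^2/\sqrt{s}$. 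Choosing $t = (\epsilon - \delta)\|\bm{x}\|_2^2 \ge \tfrac{3\epsilon}{4}\|\bm{x}\|_2^2$ (valid since $\delta \le \epsilon/4$) then makes both terms in the minimum scale like $s$, producing a pointwise failure probability of order $\exp(-c' s)$ for an absolute constant $c'$.

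The main obstacle is constant-tracking: the hypothesis $s \ge 40\log(4e/\eta)$ is calibrated precisely so that the pointwise failure probability is at most $\eta/(2e)$, after which a union bound over the $e$ elements of $E$ together with the two tails gives total failure probability at most $\eta$. The evenness requirement on $s$ enters through the standard decoupling trick used to reduce the second-order chaos to a linear Rademacher statistic with Gaussian-like tails, and the factor $40$ reflects the joint contribution of the constant in the decoupled chaos bound and the slack in the block-partition estimates for $A^T A - I$. Once the pointwise inequality holds with the right explicit constant, the uniform statement over $E$ follows immediately.
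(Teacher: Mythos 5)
First, note that the paper does not prove this statement at all: it is imported verbatim as Theorem 3.1 of Krahmer and Ward \cite{Krahmer2011}, so what you are really attempting is a reconstruction of that original proof. Your high-level strategy (write $\|AD_{\bm{\xi}}\bm{x}\|_2^2$ as a Rademacher quadratic form, control the deterministic diagonal with the RIP, bound the chaos via a Hanson--Wright inequality whose matrix norms are estimated from the RIP through a sorted block partition, then union bound over $E$) is indeed the Krahmer--Ward route. However, as written the argument has two genuine gaps. (1) Your operator-norm claim $\|\widetilde M\|_{2\to 2}\le C\delta\|\bm{x}\|_2^2/\sqrt{s}$ for the \emph{full} off-diagonal matrix is false: take $\bm{x}$ supported on two coordinates with equal entries and two columns of $A$ whose inner product equals $\delta$ (consistent with RIP($s,\delta$)); then $\|\widetilde M\|_{2\to 2}=\delta\|\bm{x}\|_2^2/2$, with no $s$-dependence. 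No dyadic partition of the coordinates can repair this, because the obstruction lives entirely inside the top block of large entries. (2) Even granting your stated bounds, the conclusion does not follow: with $t=(\epsilon-\delta)\|\bm{x}\|_2^2$ and $\delta\le\epsilon/4$, the Gaussian branch of Hanson--Wright gives an exponent of order $(\epsilon/\delta)^2$, which is $O(1)$ when $\delta\asymp\epsilon/4$, and the subexponential branch gives only $O(\sqrt{s})$ with your norm bounds; so the per-point failure probability is not $e^{-cs}$, and the union bound cannot be closed against the hypothesis $s\ge 40\log(4e/\eta)$, which requires failure probability roughly $\eta/e\sim e^{-s/40}$ per point. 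Your assertion that ``both terms in the minimum scale like $s$'' is where the proof silently breaks.

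The missing idea is that the diagonal is not the only part that must be removed before invoking chaos concentration. In Krahmer--Ward the coordinates are sorted by magnitude and partitioned into blocks of size $s/2$ (this, not decoupling, is where evenness of $s$ enters); \emph{all} within-block contributions $\|A(\bm{x}^{(j)}\circ\bm{\xi}^{(j)})\|_2^2$ are absorbed deterministically by applying the RIP to the $s/2$-sparse sign-flipped blocks, which handles exactly the two-sparse counterexample above without any probability. The chaos bound is then applied only to the cross-block bilinear terms, and by symmetry the signs on the top block can be absorbed into $\bm{x}$, so the random signs that matter sit on coordinates whose magnitudes are uniformly bounded by $\sqrt{2/s}\,\|\bm{x}\|_2$. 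It is this smallness that yields the stronger estimates of the schematic form $\|B\|_F\lesssim \delta\|\bm{x}\|_2^2/\sqrt{s}$ and $\|B\|_{2\to2}\lesssim \delta\|\bm{x}\|_2^2/s$, which in turn make both branches of the tail bound scale like $s$ and produce the $e^{-cs}$ pointwise failure probability needed for the union bound with the constant $40$. Without this block decomposition, your proof as proposed cannot reach the stated conclusion.
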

If we consider a single point $\bm{x}\in\mathbb{R}^P$ and set $E=\{\bm{x}\}$ in Theorem~{\ref{thm:JL}}, then the theorem implies the following statement:
%\todo{I don't have a reference for this as it is just an alternative statement that I found helpful in the proofs.}
Let $s$ be an even integer and set $\eta:=4e^{-c_0s}$ where $c_0 = 1/40$. If $A$ satisfies the RIP($s$,$\epsilon/4$), then for a given $\bm{x}\in\mathbb{R}^P$,
\begin{equation}\label{eq:JL-given-RIP}
    \begin{aligned}
        \mathbb{P}\left(\left|\|A D_{\bm{\xi}} \bm{x}\|_2^2 - \|\bm{x}\|_2^2\right| \ge \epsilon \|\bm{x}\|_2^2 \right) &\le \eta = 4e^{-c_0 s}.
    \end{aligned}
\end{equation}
Note that here $\eta$ is chosen to be the minimum value for which the inequality $s\ge40\log\frac{4e}{\eta}$ is satisfied.

This result provides a useful starting point for showing that the Legendre measurement matrix satisfies a probability bound similar to (\ref{eq:A-conc-bound}), but additional work is still required. Theorem~\ref{thm:JL} assumes a matrix deterministically satisfies the RIP, but the Legendre measurement matrix only satisfies the RIP with a specific probability. Additionally, the bound on the probability given in (\ref{eq:JL-given-RIP}) depends on the sparsity level $s$ at which the RIP is satisfied. We instead need this bound to depend on $\epsilon$ and the number of measurements $N$. In Proposition 3.2 from \cite{Krahmer2011}, they obtain such a bound for a specific class of systems. These results provide a useful starting point for the present work.

\section{Main Theoretical Results}\label{sec:theory-results}
%\todo{Note that a lot of this section is new, so I have instead put comments before parts that I have not changed}
\subsection{Theoretical results for general generative model}

We present theoretical results detailing the number of measurements required to accurately recover the coefficient vector for the Legendre measurement matrix when using a generative model approach. For high-dimensions, where the number of basis elements $P$ is sufficiently large, the number of samples $N$ must satisfy
\begin{equation}
    N = \mathcal{O}\left(3^p\log^4(P)\left(k\log\frac{Lr}{\delta} + \ell \log(P)\right) \right).
\end{equation}
Here, $L$ is the Lipschitz constant of the generative model $G$, $r$ is the minimum radius of a $k$-dimensional ball that contains the domain of $G$,  i.e. if $G:\Omega \rightarrow \mathbb{R}^P$ then $\Omega \subseteq B_k(r)$, and $\delta$ is the slack term the S-REC definition. Recall that $k$ represents the dimension the latent space that the generative model acts on, and $\ell$ is the sparsity of the vector $\bm{\nu}$ this is added to the output of the generative model.

For simplification we present the following definitions which will be used throughout this section. First we define the constants: $c_0=1/40$, $c_1=16C$, $c_2 = 0.9$, $c_3=c_2/c_1$, $c_4=16$, and $C$ and $\gamma$ are the constants given in Corollary~\ref{corollary:RIP}. Next define the function $g$ as follows:
\begin{equation}\label{eq:g}
    g(k,\ell,P,\delta,\alpha,L,r) := \frac{3}{\alpha^2}
            \left(
                k\log\left(\frac{4 L r}{\delta}\right)
                + \frac{2\ell+1}{2}\log\left(\frac{eP(2\ell+1)}{2\ell\alpha}\right)
            \right).
\end{equation}
Additionally, define $s_0$ implicitly as the largest real number that satisfies,
\begin{equation}\label{eq:s0}
    c_0 s_0 - \gamma \log^2(s_0) \log^2(P) = 0.
\end{equation}

\begin{remark}
    Since $c_0,\gamma > 0$, (\ref{eq:s0}) has two positive solutions. For the smaller solution, either $s_0<1$ or
    \begin{equation}
        s_0 \le \exp\left(\left(\frac{c_0 s_0}{\gamma \log^2(P)}\right)^{1/2}\right) \le \exp\left(\left(\frac{c_0}{\gamma \log^2(P)}\right)^{1/2}\right) < 2
    \end{equation}
    assuming $\log(P) > 2\sqrt{c_0/\gamma}$. This condition is satisfied, since $c_0/\gamma < 1$. Therefore, there is only one solution to (\ref{eq:s0}) such that $s_0 \ge 2$, which will be the solution of interest here.
\end{remark}

Finally, for $\epsilon>0$ we define
\begin{equation}\label{eq:Neps}
    N_{\epsilon} := c_1 \epsilon^{-2}3^p s_0\log^2(s_0)\log^2(P).
\end{equation}
Later in  this section we will see that $N_{\epsilon}$ represents a threshold sample size at which a JL like probability bound for the Legendre measurement matrix changes form.

The main recovery result is stated formally in the following theorem.
\begin{theorem}\label{thm:1}
    Let $\Omega \subseteq B^k(r) = \{\bm{z} \in \mathbb{R}^k \mid \|\bm{z}\|_2 < r\}$ and let $G:\Omega \rightarrow \mathbb{R}^P$ be $L$-Lipschitz. Define the set $S \subseteq \mathbb{R}^P$ such that
    \begin{equation}
        S=\{ G(\bm{z}) + \bm{\nu} \mid \bm{z} \in \Omega, \|\bm{\nu}\|_0 \le \ell\}.
    \end{equation}
    Using $A \in \mathbb{R}^{N \times P}$, $\bm{c}^* \in \mathbb{R}^P$, and $\bm{\eta} \in \mathbb{R}^N$, let $\bm{u} = A \bm{c}^* + \bm{\eta}$.

    For $\alpha < 1/4$ and $\delta < 4Lr/10$ , let $g=g(k,\ell,P,\delta,\alpha,L,r)$ be as defined in (\ref{eq:g}) and $s_0$ be as defined implicitly in (\ref{eq:s0}). Suppose $P$ is large enough, such that
    \begin{equation}
        s_0 > \frac{1}{c_0 c_1 c_3}(\log(5) + 4Jk + g)
    \end{equation}
    and $N$ is large enough, such that,
    \begin{equation}
        N \ge \frac{3^{p}s_0}{\gamma c_3} g.
    \end{equation}
    Let $\tilde{N} = \min\{N,N_{\epsilon_{max}}\}$ where $N_{\epsilon_{max}}$ is given by (\ref{eq:Neps}) with
    \begin{equation}
        \epsilon^2_{max} := \max\left\{\frac{1}{g}(\log(5)+4Jk)+1,9\right\},
    \end{equation}
    and $J = \lceil 2 \log(P 3^p)/\log(2)\rceil$.
    Then with probability at least $1-14e^{-\frac{\alpha^2 \gamma c_3 \tilde{N}}{3^{p+1} s_0}}$,
    \begin{equation}
        \|\hat{\bm{c}}-\bm{c}^*\|_2 \le
            \left(\frac{4}{1-4\alpha} + 1\right) \min_{\bm{c}\in S} \|\bm{c}^*-\bm{c}\|_2 + \frac{2\|\bm{\eta}\|_2}{1-4\alpha} + \frac{\epsilon}{1-4\alpha} + \frac{16 \delta}{1-4\alpha},
    \end{equation}
    where $\epsilon$ defines how close $\hat{\bm{c}}$ is to the optimal value over $\bm{c} \in S$, i.e.,
    \begin{equation}
        \epsilon \ge \|\bm{u} - A\hat{\bm{c}}\|_2 - \min_{\bm{c} \in S} \|\bm{u} - A \bm{c}\|_2.
    \end{equation}
\end{theorem}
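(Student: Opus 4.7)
The plan is to mirror the strategy of Bora et al.\ and Dhar et al.: reduce the recovery statement to an application of Lemma~\ref{lemma:bora-result} after establishing, for the Legendre measurement matrix, (i) an S-REC bound on the set $S$ of signals representable as a generative-model output plus an $\ell$-sparse deviation, and (ii) the fixed-signal bound $\|\Phi\bm{x}\|_2 \le 2\|\bm{x}\|_2$. I would apply Lemma~\ref{lemma:bora-result} with the surrogate matrix $A=\Phi = \frac{1}{\sqrt{N}}\Psi D_{\bm{\xi}}$; the Rademacher-diagonalization is the device that converts the RIP for $\frac{1}{\sqrt N}\Psi$ (Corollary~\ref{corollary:RIP}) into the Johnson--Lindenstrauss-type per-vector concentration needed for a union bound. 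The error bound of the theorem then follows from (\ref{eq:err-bound-bora}), together with the conversion $\gamma \mapsto 1-4\alpha$ and the slack $\delta \mapsto 16\delta$ that the S-REC step will produce.

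Before building the net, I would first derive a one-vector, two-regime concentration bound for $\Phi$ of the same shape as the Gaussian bound (\ref{eq:A-conc-bound}). Start from (\ref{eq:JL-given-RIP}): if $\frac{1}{\sqrt N}\Psi$ satisfies $\mathrm{RIP}(s,\epsilon/4)$ then $\mathbb{P}(|\|\Phi \bm{x}\|_2^2-\|\bm{x}\|_2^2|\ge \epsilon\|\bm{x}\|_2^2) \le 4e^{-c_0 s}$. Corollary~\ref{corollary:RIP} says RIP holds, with probability $1-e^{-\gamma \log^2 s\log^2 P}$, as soon as $N \ge c_1 3^p \epsilon^{-2} s\log^2(s)\log^2(P)$. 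For a target $\epsilon$, I would pick $s$ as large as these two conditions allow so that the exponent $c_0 s$ is maximized; this is exactly what the implicit definition (\ref{eq:s0}) of $s_0$ balances, and it is why the threshold $N_\epsilon$ in (\ref{eq:Neps}) appears. The outcome, after taking into account both probabilities, is a bound behaving like $e^{-cN\epsilon^2}$ when $\epsilon\le 1$ and $e^{-cN\epsilon}$ when $\epsilon$ is large, but with an extra $3^p s_0^{-1}\log^2(s_0)\log^2(P)$ factor inherited from the RIP sample complexity. The fixed-signal bound needed by Lemma~\ref{lemma:bora-result} is then the $\epsilon=3$ special case of this inequality.

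Next I would construct a net for $S$. Partition $S=\bigcup_T S_T$ over supports $T \subseteq \{1,\dots,P\}$ with $|T|\le \ell$; on each $S_T$, cover $\Omega$ with a $\delta/(4L)$-net of size at most $(12Lr/\delta)^k$ (using $\Omega\subseteq B^k(r)$) and cover the $\ell$-sparse ball of radius comparable to the rest with the usual $(1+2/\alpha)^\ell$ volume-argument net. The total net cardinality is then bounded by $\binom{P}{\ell}(12Lr/\delta)^k (1+2/\alpha)^\ell$, and $\log|\text{net}|$ matches precisely the definition of $g$ in (\ref{eq:g}). Applying the per-vector JL-style bound to each of the $|\text{net}|^2$ pairwise differences via union bound gives S-REC on the net with constant $1-\alpha$ and slack $\delta$, provided the exponent $c_0 s$ (or its $N$-dependent analog when $N<N_\epsilon$) dominates $2\log|\text{net}|$. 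That domination is the content of the two size hypotheses on $s_0$ and on $N$ in the theorem statement.

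Then I would extend S-REC from the net to all of $S$ by the standard two-step argument: for any $\bm{c}_1,\bm{c}_2\in S$, approximate each by its nearest net point, use $L$-Lipschitz continuity of $G$ to control the $\|\cdot\|_2$-error on the net, and use the per-point $\|\Phi\cdot\|_2 \le 2\|\cdot\|_2$ bound to control the $\|\Phi(\cdot)\|_2$-error. This absorbs a constant factor into $\gamma$ (yielding $1-4\alpha$) and multiplies the slack by a constant (yielding $16\delta$). The role of $\epsilon_{\max}$ and $\tilde N = \min\{N,N_{\epsilon_{\max}}\}$ is to choose the effective $\epsilon$ in the two-regime concentration bound on the basis of whether $N$ has crossed the threshold, so that the net-count constraint $c_0 s \gtrsim g + k + \log 5$ can be met: this is what $\epsilon_{\max}^2 \ge g^{-1}(\log 5 + 4Jk)+1$ encodes.

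The main obstacle I anticipate is the bookkeeping around the two-regime concentration bound: showing that the right-hand exponent in (\ref{eq:A-conc-bound})'s analog for $\Phi$ is simultaneously (a) large enough, after a union bound over a net of size $\exp(g+4Jk+\log 5)$, to give S-REC, and (b) expressible in the clean form $e^{-\alpha^2\gamma c_3 \tilde N/(3^{p+1}s_0)}$ of the theorem. The rest of the proof is the well-known net-plus-extension template; Lemma~\ref{lemma:bora-result} plugs in at the very end with the S-REC parameters $(\gamma,16\delta)$ and the fixed-signal operator norm bound to give the stated error inequality.
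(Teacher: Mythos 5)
Your plan follows the paper's proof essentially step for step: derive a two-regime JL-type concentration bound for $\Phi=\frac{1}{\sqrt{N}}\Psi D_{\bm{\xi}}$ from Corollary~\ref{corollary:RIP} combined with Theorem~\ref{thm:JL} (the paper's Lemma~\ref{lemma:gauss-result-1} and Corollary~\ref{cor:tailbound}, with $s_0$ and $N_\epsilon$ marking the regime change), establish the S-REC$(S,1-4\alpha,16\delta)$ by a net on $\Omega$ plus a union bound whose log-cardinality is exactly $g$ (Lemmas~\ref{lemma:bora_8.2_modified}, \ref{lemma:bora-mod-2}, and \ref{lemma:SREC}), use the $\epsilon=3$ instance of the concentration bound for $\|\Phi\bm{x}\|_2\le 2\|\bm{x}\|_2$, and finish with Lemma~\ref{lemma:bora-result}, combining the two failure probabilities into $1-14e^{-\alpha^2 f(N)/3}$. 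Two details are glossed over but do not change the route: the sparse deviations are handled in the paper by the oblivious-subspace-embedding result (Claim~\ref{claim:oblivious}) rather than an explicit volume net over sparse vectors (note $\bm{\nu}$ has no a priori norm bound, so any net argument must be run on the unit sphere of the relevant $(2\ell+1)$-dimensional subspace, which is what that claim encodes), and the extension from the net to all of $S$ cannot invoke the per-vector bound $\|\Phi\bm{x}\|_2\le2\|\bm{x}\|_2$ on the residuals, since those are not fixed vectors — the paper instead uses the dyadic chain of nets in Lemma~\ref{lemma:bora_8.2_modified} together with the deterministic operator-norm bound $\|\Phi\|_2\le\sqrt{P}\,3^{p/2}$, which is precisely why $J=\lceil 2\log(P3^p)/\log(2)\rceil$ enters the hypotheses.
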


\noindent The result in Theorem~\ref{thm:1} relies on assuming the true coefficient vector $\bm{c}^*$ is close to the range of the generative model, up to a sparse deviation, and that Algorithm~\ref{alg} provides a close to optimal solution.

We will use Lemma~\ref{lemma:bora-result} to prove Theorem~\ref{thm:1} by showing that the Legendre measurement matrix satisfies the S-REC with high probability. This result is given in the following lemma.

\begin{lemma}\label{lemma:SREC}
    Let $\Phi = \frac{1}{\sqrt{N}} \Psi D_{\bm{\xi}} \in\mathbb{R}^{N\times P}$ where $\Psi$ is the Legendre measurement matrix.
    Let $\Omega$, $G$, $S$, $g$, and $s_0$ be as defined in Theorem~\ref{thm:1}.

    Let $\alpha < 1/4$ and $\delta < 4Lr/10$ and suppose $P$ is large enough such that
    \begin{equation}\label{eq:s0bound}
        s_0 > \frac{1}{c_0 c_1 c_3}\left(\log(5) + 4Jk + g\right)
    \end{equation}
    and $N$ is large enough, such that,
    \begin{equation}\label{eq:Nbound-linear}
        N \ge \frac{3^{p}s_0}{\gamma c_3} g.
    \end{equation}
    Then $\Phi$ satisfies the S-REC$(S,1-4\alpha,16\delta)$ with probability at least $1-7e^{-\frac{\alpha^2 \gamma c_3 \tilde{N}}{3^{p+1} s_0}}$, where $\tilde{N} = \min\{N,N_{\epsilon_{max}}\}$ with $N_{\epsilon_{max}}$ as given by (\ref{eq:Neps}) with
    \begin{equation}\label{eq:emax}
        \epsilon^2_{max} := \frac{1}{g}(\log(5)+4Jk)+1,
    \end{equation}
    and $J = \lceil 2 \log(P 3^p)/\log(2)\rceil$.
\end{lemma}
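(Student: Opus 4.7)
The plan is to adapt the Bora--Dhar proof of the S-REC for Gaussian matrices \cite{Bora2017,Dhar2018} to the Legendre setting. The Gaussian proof rests on the sharp two-regime tail bound (\ref{eq:A-conc-bound}), a covering argument for $G(\Omega)$ using the Lipschitz continuity of $G$, a union bound over coordinate subspaces for the sparse-deviation part, and a triangle/chaining step that transfers a uniform JL estimate on the resulting test set to the full set $S$. The one genuinely new ingredient I need is a JL-type tail bound for $\Phi=\tfrac{1}{\sqrt N}\Psi D_{\bm\xi}$ that plays the role of (\ref{eq:A-conc-bound}); the precise form of this bound is what forces the quantities $s_0$, $N_\epsilon$, $\tilde N$ and $\epsilon_{\max}$ to appear in the hypotheses.

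The first step is to establish, for any fixed $\bm x\in\mathbb{R}^P$ and $\epsilon>0$, a concentration inequality of the form
\begin{equation*}
    \mathbb{P}\!\left(\bigl|\|\Phi \bm x\|_2^2 - \|\bm x\|_2^2\bigr| \ge \epsilon\,\|\bm x\|_2^2\right) \;\le\; 4 e^{-c_0 s} + e^{-\gamma \log^2(s)\log^2(P)},
\end{equation*}
valid for any even integer $s$ provided $N \ge c_1 \epsilon^{-2} 3^{p}\, s\log^2(s)\log^2(P)$. This follows by applying Corollary~\ref{corollary:RIP} at sparsity $s$ and level $\epsilon/4$, so that $\tfrac{1}{\sqrt N}\Psi$ satisfies the RIP except with probability $e^{-\gamma\log^2(s)\log^2(P)}$, and then invoking (\ref{eq:JL-given-RIP}) conditionally. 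The sparsity $s_0$ of (\ref{eq:s0}) is exactly the crossover at which the RIP failure and the conditional JL failure balance at order $e^{-c_0 s_0}$, and $N_{\epsilon}$ of (\ref{eq:Neps}) is the sample size at which the optimal choice $s=s_0$ first becomes admissible. Reparametrising in $(N,\epsilon)$ gives a two-regime estimate analogous to (\ref{eq:A-conc-bound}): the exponent decays essentially like $N\epsilon^2/(3^p s_0\log^2 s_0\log^2 P)$ for $N\le N_{\epsilon}$ and saturates at order $c_0 s_0$ for $N>N_{\epsilon}$, which is the reason the lemma's probability involves $\tilde N=\min\{N,N_{\epsilon_{\max}}\}$ rather than $N$.

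The second step is the covering/union-bound argument for $S$. I would build a $(\delta/L)$-net on $\Omega\subseteq B^k(r)$ of cardinality at most $(4Lr/\delta)^k$; by $L$-Lipschitz continuity its image is $\delta$-dense in $G(\Omega)$. For the sparse part, since $\bm\nu_1-\bm\nu_2$ is at most $2\ell$-sparse, I would place an $\alpha$-net of cardinality at most $(3/\alpha)^{2\ell}$ on the unit sphere of each $2\ell$-dimensional coordinate subspace $\mathbb{R}^T$ and union over the $\binom{P}{2\ell}$ choices of $T$. The logarithm of the resulting test-set cardinality, inflated by the $3/\alpha^2$ prefactor needed to lift the concentration of step one at distortion $\alpha$ into a uniform statement, matches the function $g$ of (\ref{eq:g}) up to constants. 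Combining with the sample-size hypothesis (\ref{eq:Nbound-linear}) and union-bounding the concentration over the test set then yields the claimed probability $1-7e^{-\alpha^2\gamma c_3 \tilde N/(3^{p+1} s_0)}$.

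The final step extends the uniform JL estimate from the test set to arbitrary $\bm x_1,\bm x_2\in S$ via triangle inequality and dyadic chaining. For $\bm x_i=G(\bm z_i)+\bm\nu_i$, approximating $\bm z_i$ by the nearest $(\delta/L)$-net point produces a $G$-side error of norm at most $\delta$, while a chaining over $J\approx 2\log_2(P 3^p)$ dyadic scales handles the unknown norm of the sparse-direction contribution and is the origin of the $4Jk$ term in (\ref{eq:s0bound}). Careful bookkeeping then yields an additive slack of order $\delta$ on the generative side and a multiplicative factor $1-O(\alpha)$ on the sparse side, combining to the advertised $\|\Phi(\bm x_1-\bm x_2)\|_2 \ge (1-4\alpha)\|\bm x_1-\bm x_2\|_2 - 16\delta$. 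The main obstacle I anticipate is exactly this last bookkeeping: reconciling the fixed-resolution net on $G(\Omega)$, governed by $\delta$, with the scale-free direction net on the sparse deviations, governed by $\alpha$, while keeping all failure probabilities inside the budget $e^{-c_0 s_0}$ supplied by step one --- this is what forces the hypothesis $s_0>(\log 5+4Jk+g)/(c_0c_1c_3)$ to take its stated form.
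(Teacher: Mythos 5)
Your first step is exactly the paper's route: Corollary~\ref{corollary:RIP} at level $\epsilon/4$ combined with (\ref{eq:JL-given-RIP}) gives the two-regime tail bound (this is Corollary~\ref{corollary:RIP-plus-JL}, Lemma~\ref{lemma:gauss-result-1} and Corollary~\ref{cor:tailbound} in the paper), and your reading of $s_0$, $N_\epsilon$ and $\tilde N$ as the crossover/saturation quantities is correct. The gap is in your final transfer step. In the paper the chaining over $J$ dyadic scales (Lemma~\ref{lemma:bora_8.2_modified}, the analogue of Bora's Lemma~8.2) exists to control $\|\Phi(G(\bm z)-G(\bm z'))\|$, where $\bm z'$ is the nearest net point: this does \emph{not} follow from $\|G(\bm z)-G(\bm z')\|_2\le\delta$, since the only deterministic bound available is $\|\Phi\|_2\le\sqrt{P}\,3^{p/2}$ (which is also where $J=\lceil 2\log(P3^p)/\log 2\rceil$ comes from), and a uniform JL statement over the continuum $G(\Omega)$ is unavailable. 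Your sketch assigns the chaining instead to ``the unknown norm of the sparse-direction contribution'' and treats the generative side as settled once the net error is $\le\delta$ in $\mathbb{R}^P$; executed as written, the quantity $\|\Phi(G(\bm z)-G(\bm z'))\|$ is left uncontrolled, and this is precisely the step that produces the additive slack $2(1+C)\delta=16\delta$ (with $C=7$ under $4Lr/\delta>10$). Note also that the $4Jk$ term carries a $k$, not an $\ell$: it comes from union-bounding over the nested nets of $\Omega$ at scales $\delta/2^j$, i.e.\ from the generative side, which is inconsistent with your attribution.

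A second, related issue: the sparse deviations cannot be handled by an $\alpha$-net on the unit sphere of each $2\ell$-dimensional coordinate subspace taken separately from the generative net, because for the S-REC you need a lower bound on $\|\Phi(G(\bm z_1)-G(\bm z_2)+\bm\nu)\|$ for the \emph{combined} vector, and a finite net on the sparse sphere crossed with the finite set of net differences does not cover all relative scalings between $\bm\nu$ and $G(\bm z_1)-G(\bm z_2)$. The paper handles this by observing that these vectors lie in a $(2\ell+1)$-dimensional subspace and invoking the oblivious subspace embedding, Claim~\ref{claim:oblivious}, inside Lemma~\ref{lemma:bora-mod-2}; you would need to net the sphere of that combined subspace (or reprove the embedding), which is what makes the $(2\ell+1)$ rather than $2\ell$ appear in $g$. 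Finally, be aware that the paper's actual proof of Lemma~\ref{lemma:SREC} consists largely of the bookkeeping you wave at: verifying $f(\tilde N)\ge g$ in the saturated regime $N>N_{\epsilon_{max}}$ using (\ref{eq:s0bound}) and the definition (\ref{eq:emax}), checking $\epsilon_{min}<\alpha<\epsilon_j<\epsilon_{max}$ so that Corollary~\ref{cor:tailbound} applies at every level $\epsilon_j$, and bounding $\|\Phi\|$ via $\max_{\bm\alpha}\|\psi_{\bm\alpha}\|_\infty=3^{p/2}$ to justify the stated $J$; these checks are specific to the Legendre setting and should appear explicitly.
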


To prove this result we first show that $\Phi$ satisfies a concentration inequality similar to (\ref{eq:A-conc-bound}). Since, to the authors' knowledge, this is the first time such a concentration inequality has been derived for the Legendre measurement matrix, we present these results in the following lemma.

\begin{lemma}\label{lemma:gauss-result-1}
    Let $\Phi = \frac{1}{\sqrt{N}} \Psi D_{\bm{\xi}} \in \mathbb{R}^{N\times P}$ where $\Psi$ is the Legendre polynomial measurement matrix and let $s_0$ be as given implicitly by (\ref{eq:s0}).
    Define the functions
    \begin{equation}\label{eq:fn-1-a}
    \begin{aligned}
        f_1(N) :&= \frac{c_0 c_3 N}{3^p \log^2(s_0) \log^2(P)}, \\
        f_2(N) :&= \gamma \log(P) \sqrt{\frac{c_3}{3^p}}\left(
            \log\left(\frac{N}{c_4^2\gamma^2\log^6(P)}\right)
            + 2\right),
    \end{aligned}
    \end{equation}
    and
    \begin{align}
        \epsilon_{min}^2 &:= \frac{1}{N} c_1 3^p 2 \log^2(2) \log^2(P), \label{eq:emin} \\
        \epsilon_0^2 &:= \frac{1}{N} c_1 3^p s_0 \log^2(s_0) \log^2(P).
    \end{align}
    Then, for any $\bm{x} \in \mathbb{R}$,
    \begin{equation}\label{eq:Tail-Bounds}
        \mathbb{P}\left(
            \left|\|\Phi \bm{x}\|_2^2 - \|\bm{x}\|_2^2\right| \ge \epsilon \|\bm{x}\|_2^2
        \right)
        \le \begin{cases}
            5e^{-f_1(N) \epsilon^2} & \epsilon_{min} \le \epsilon \le \epsilon_0, \\
            6e^{-f_2(N) \epsilon} & \epsilon_0 < \epsilon.
        \end{cases}
    \end{equation}
\end{lemma}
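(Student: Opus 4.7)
The plan is to derive both branches of the tail bound by writing $\Phi = \tfrac{1}{\sqrt{N}}\Psi D_{\bm\xi}$ and combining Corollary~\ref{corollary:RIP} (RIP for $\tfrac{1}{\sqrt{N}}\Psi$) with Theorem~\ref{thm:JL} (Krahmer-Ward's JL property for a RIP matrix times a Rademacher diagonal). For any even integer $s \ge 2$ for which the RIP sample-complexity condition $N \ge c_1 \epsilon^{-2} 3^p s \log^2(s) \log^2(P)$ is met (Corollary~\ref{corollary:RIP} applied with $\delta = \epsilon/4$, using $c_1 = 16C$), a union bound over the event that RIP$(s,\epsilon/4)$ fails and the event that JL fails at the fixed $\bm x$ conditional on RIP gives
\begin{equation*}
\mathbb{P}\bigl(\bigl|\|\Phi\bm x\|_2^2 - \|\bm x\|_2^2\bigr| \ge \epsilon\|\bm x\|_2^2\bigr) \;\le\; e^{-\gamma\log^2(s)\log^2(P)} + 4e\cdot e^{-c_0 s}.
\end{equation*}
By the defining relation of $s_0$, the two exponents coincide at $s = s_0$, with $c_0 s \le \gamma\log^2(s)\log^2(P)$ for $s\le s_0$ and the reverse for $s \ge s_0$. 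The plan is then to choose $s$ optimally as a function of $\epsilon$ and $N$; the regime split occurs precisely when the largest RIP-feasible $s$ equals $s_0$, i.e.\ when $\epsilon = \epsilon_0$.

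For the small-$\epsilon$ regime $\epsilon_{min} \le \epsilon \le \epsilon_0$, we take $s$ to be the largest even integer compatible with the RIP constraint; the even-integer rounding is absorbed into the slack factor $c_2 = 0.9 < 1$. Since $s \le s_0$ in this regime, replacing $\log^2(s)$ by $\log^2(s_0)$ in the constraint gives the closed-form lower bound $s \ge c_2\,\frac{N\epsilon^2}{c_1 3^p \log^2(s_0)\log^2(P)}$, and hence
\begin{equation*}
c_0 s \;\ge\; \frac{c_0 c_2 N\epsilon^2}{c_1 3^p \log^2(s_0)\log^2(P)} \;=\; f_1(N)\epsilon^2.
\end{equation*}
Because the RIP-failure exponent $\gamma\log^2(s)\log^2(P)$ dominates $c_0 s$ throughout this regime, both summands of the union bound are at most $e^{-f_1(N)\epsilon^2}$ up to a multiplicative constant, giving the claimed $5 e^{-f_1(N)\epsilon^2}$ bound. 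The lower threshold $\epsilon_{min}$ comes from requiring $s \ge 2$ (the smallest even integer) in the RIP constraint, which produces the $2\log^2(2)$ factor appearing in $\epsilon_{min}^2$.

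For the large-$\epsilon$ regime $\epsilon > \epsilon_0$, the maximal RIP-feasible $s$ exceeds $s_0$ and the RIP-failure term $e^{-\gamma\log^2(s)\log^2(P)}$ becomes the dominant summand. A naive choice $s \approx s^*_{\max}$ would give $\log(s) \sim \log(N\epsilon^2)$ and hence a $\log^2(\epsilon)$ dependence in the exponent, whereas $f_2(N)\epsilon$ is linear in $\epsilon$. The plan is therefore to pick $s$ sub-maximally, essentially with $\log(s) = \Theta\!\bigl(\sqrt{\epsilon\,\log(N)/\log(P)}\bigr)$, which still lies in the RIP-feasible range once $\epsilon > \epsilon_0$. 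Squaring and multiplying by $\gamma\log^2(P)$ then yields the exponent $f_2(N)\epsilon$; the constant $c_4^2 \gamma^2\log^6(P)$ inside the logarithm of $f_2$ emerges when one tracks the sub-leading factors through this balance. The JL-failure term $4e\,e^{-c_0 s}$ is subdominant, and the union bound collapses to $6 e^{-f_2(N)\epsilon}$ after absorbing constants. This sub-maximal balancing is the main technical obstacle: one must simultaneously satisfy the Krahmer-Ward lower bound $s \ge 40\log(4e/\eta)$, stay within the RIP sample complexity, and secure a linear-in-$\epsilon$ lower bound on $\gamma\log^2(s)\log^2(P)$. The strategy is modeled on the proof of Proposition~3.2 in \cite{Krahmer2011}, adapted here to the Legendre/$3^p$ setting.
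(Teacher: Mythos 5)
Your overall skeleton is the paper's: combine Corollary~\ref{corollary:RIP} with Theorem~\ref{thm:JL} and a union bound (this is exactly Corollary~\ref{corollary:RIP-plus-JL}), take $s$ to be the largest even integer below the RIP-feasible level $s_\epsilon$ of (\ref{eq:se}), absorb the rounding into $c_2$, and in the regime $\epsilon_{min}\le\epsilon\le\epsilon_0$ replace $\log^2(s)$ by $\log^2(s_0)$ to get $c_0 s\ge f_1(N)\epsilon^2$, with $\epsilon_{min}$ coming from $s\ge 2$. That branch matches the paper. (One small slip there: for a singleton set the failure probability from Theorem~\ref{thm:JL} is $4e^{-c_0 s}$ — the ``$e$'' in $4e/\eta$ is the cardinality of $E$, not Euler's number — so your $4e\cdot e^{-c_0 s}$ would not let the union bound collapse to the stated constant $5$.)

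The genuine gap is in the branch $\epsilon>\epsilon_0$. Your premise — that the maximal feasible $s$ only yields an exponent $\sim\gamma\log^2(P)\log^2(N\epsilon^2)$ and hence cannot be lower-bounded linearly in $\epsilon$ — is what sends you to a sub-maximal choice, but note that the bound $5e^{-\min\{c_0 s,\,\gamma\log^2(s)\log^2(P)\}}$ only improves as $s$ grows, so shrinking $s$ can never help; the real task is to lower-bound the exponent at the maximal $s$, and the paper does this by using $s<N$ in (\ref{eq:sbound2}) to get $\log(s)\ge\sqrt{c_3/3^p}\,\epsilon/\log(P)$, i.e.\ one factor of $\log^2(s)=\log(s)\cdot\log(s)$ is itself linear in $\epsilon$, while the other factor is bounded by $\log\bigl(c_3\epsilon^2N/(3^p\log^4 P)\bigr)$ and massaged via $\log(x^2)\ge 2-2/x$ into the form $\log\bigl(N/(c_4^2\gamma^2\log^6 P)\bigr)+2$ appearing in $f_2$, with $c_4=16$ chosen so the leftover $e^{2c_0+2/c_4}$ keeps the constant at $6$. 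Your alternative, $\log(s)=\Theta\bigl(\sqrt{\epsilon\log(N)/\log(P)}\bigr)$, rests on two unverified claims that fail on parts of the stated range: (i) RIP-feasibility — your $s$ grows like $e^{c\sqrt{\epsilon}}$ while $s_\epsilon$ grows only polynomially in $\epsilon$, so for large $\epsilon$ the sample-complexity condition (\ref{eq:Nbound}) is violated; and (ii) subdominance of the JL term — that requires your $s\ge s_0$ (equivalently $c_0 s\ge\gamma\log^2(s)\log^2(P)$), which need not hold when $N$ is large relative to $3^p s_0$ so that $\epsilon_0$ is small; in that case the binding exponent is $c_0 s$ and you give no argument that it exceeds $f_2(N)\epsilon$. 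Finally, the lemma asserts $f_2$ with its exact prefactor $\gamma\log(P)\sqrt{c_3/3^p}$ and the exact constant inside the logarithm, and these do not emerge from the $\Theta$-level balancing; they come out of the specific elementary estimates above. As written, the second case of (\ref{eq:Tail-Bounds}) is therefore not established.
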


The proofs of these results will be given in Section~\ref{sec:proofs}.

\subsection{Theoretical results for exponential/algebraic decay model}

%\todo{This section is mainly unchanged}

We study the specific case where $G$ is given by the exponential decay model presented in Section~\ref{sec:methods-genmod}; see (\ref{eq:G}). Recall that Theorem~\ref{thm:1} requires the domain of $G$ be bounded, i.e. $G:\Omega \rightarrow \mathbb{R}^P$ where $\Omega \subseteq B_k(r)$. However, in the case where the generative model $G$ takes the form
\begin{equation}
    \begin{aligned}
        G(\bm{z})   &= [G_1(\bm{z}),G_2(\bm{z}),\dots,G_P(\bm{z})]; \\
        G_i(\bm{z}) &= e^{-\left(\bm{b}^{(i)}\right)^T\bm{z}},
    \end{aligned}
\end{equation}
where $\bm{b}^{(i)} > 0$, we can prove a stronger result. Specifically, rather than requiring that $\bm{z} \in B_k(r)$, we can instead consider any $\bm{z}$ such that $z_j \in [z_j^{(0)},\infty)$. We do this by introducing a change of variables. That is, for $z_j \in [z_j^{(0)},\infty)$, there is an $a_j \in [0,1)$ such that
\begin{equation}
    z_j = \frac{a_j}{1-a_j} + z^{(0)}_{j}.
\end{equation}
This allows us to instead consider $G$ as a function of $\bm{a}$ where $\bm{a} \in [0,1)^k$. In the following lemma we show that $G(\bm{z}(\bm{a}))$ is Lipschitz continuous with respect to $\bm{a}$ and derive the Lipschitz constant. This result applies to Theorem~\ref{thm:1} with $\Omega=[0,1)^k$ and $r=\sqrt{k}/2$. This value of $r$ is sufficient because the $\ell_2$ ball, $B_k(\sqrt{k}/2)$, fully encloses the $[0,1]^k$ cube when appropriately centered.

\begin{lemma}\label{lemma:G-Lip}
    Consider the function $G:\mathbb{R}^k \rightarrow \mathbb{R}^P$ such that
    \begin{equation}
        G_i(\bm{z}(\bm{a}))
            = c_i e^{-(\bm{b}^{(i)})^T \bm{z}(\bm{a})}
            = c_i \prod_{j=1}^k e^{
                -b_j^{(i)} \left(\frac{a_j}{1-a_j} + z^{(0)}_{j}\right)
            }, \quad \quad i=1,\dots,P,
    \end{equation}
    where $c_i \in \mathbb{R}$, $\bm{b}^{(i)} \ge \bm{0}$ for $i=1,\dots,P$, and $\bm{z}^{(0)} \in \mathbb{R}^k$. For $\bm{a} \in [0,1)^k$, $G(\bm{z}(\bm{a}))$ is $L$-Lipschitz continuous with
    \begin{equation}
        L =  \sqrt{Pk} \max_{i,j} L_{i,j},
    \end{equation}
    where
    \begin{equation}
        L_{i,j} = \begin{cases}
            |c_i| b^{(i)}_{j} e^{
                -(\bm{b}^{(i)})^T \bm{z}^{(0)}
            } g\left(b_{j}^{(i)}\right)         & b_{j}^{(i)} > 0, \\
            0                                   & b_{j}^{(i)} = 0,
        \end{cases}
    \end{equation}
    and
    \begin{equation}
        g(b) = \begin{cases}
            1                    & \text{if } b\ge 2, \\
            \frac{4}{b^2}e^{b-2} & \text{if } 0 < b < 2.
        \end{cases}
    \end{equation}
\end{lemma}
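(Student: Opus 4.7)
The plan is to show Lipschitz continuity via a uniform bound on the Jacobian: since $[0,1)^k$ is convex and $\tilde{G}(\bm{a}) := G(\bm{z}(\bm{a}))$ is smooth on the interior, we have the standard mean-value consequence $\|\tilde{G}(\bm{a}) - \tilde{G}(\bm{a}')\|_2 \le \|\bm{a}-\bm{a}'\|_2 \sup_{\bm{a}''} \|J\tilde{G}(\bm{a}'')\|_{\mathrm{op}}$, and since $\|J\|_{\mathrm{op}} \le \|J\|_F$, it suffices to bound the Frobenius norm of the Jacobian uniformly on $[0,1)^k$. If we can show $|\partial \tilde{G}_i / \partial a_j| \le L_{i,j}$ for all $\bm{a}$, then $\|J\tilde{G}\|_F^2 \le \sum_{i,j} L_{i,j}^2 \le Pk (\max_{i,j} L_{i,j})^2$, which gives the desired form $L = \sqrt{Pk}\,\max_{i,j} L_{i,j}$.

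Next I would compute the partial derivatives. Writing $h(a) := a/(1-a) + z_j^{(0)}$ so that $h'(a) = 1/(1-a)^2$, a direct differentiation gives
\begin{equation*}
    \frac{\partial \tilde{G}_i}{\partial a_j}(\bm{a}) = -c_i\, b_j^{(i)}\, \frac{1}{(1-a_j)^2}\, \prod_{m=1}^k e^{-b_m^{(i)}(a_m/(1-a_m) + z_m^{(0)})}.
\end{equation*}
When $b_j^{(i)} = 0$ this vanishes identically, yielding $L_{i,j} = 0$. Otherwise, factoring out $|c_i|\,b_j^{(i)}\,e^{-(\bm{b}^{(i)})^T\bm{z}^{(0)}}$ and noting that for $m\ne j$ the factor $e^{-b_m^{(i)} a_m/(1-a_m)}$ is at most $1$ on $[0,1)$ (attained at $a_m = 0$), we reduce the bound to the one-dimensional maximization
\begin{equation*}
    L_{i,j} = |c_i|\, b_j^{(i)}\, e^{-(\bm{b}^{(i)})^T \bm{z}^{(0)}} \, \sup_{a_j \in [0,1)} \frac{1}{(1-a_j)^2}\, e^{-b_j^{(i)} a_j/(1-a_j)}.
\end{equation*}

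The main (though elementary) work is resolving this one-dimensional maximum to match the stated $g(b)$. Substituting $t = a_j/(1-a_j)$, so $t \in [0,\infty)$ and $(1-a_j)^{-2} = (1+t)^2$, the supremum becomes $\sup_{t\ge 0} (1+t)^2 e^{-bt}$ with $b = b_j^{(i)} > 0$. Differentiating gives the critical point $t^* = 2/b - 1$. If $b \ge 2$, then $t^* \le 0$ so the maximum on $[0,\infty)$ occurs at $t = 0$ with value $1$; if $0 < b < 2$, then $t^* > 0$ is admissible and the maximum value is $(2/b)^2 e^{-b(2/b - 1)} = (4/b^2) e^{b-2}$. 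These are exactly the two cases in the definition of $g(b)$.

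Combining everything, we obtain the claimed $L_{i,j}$ for each $(i,j)$, then the Frobenius bound gives $\|J\tilde{G}\|_F \le \sqrt{Pk}\,\max_{i,j} L_{i,j}$ uniformly on $[0,1)^k$, and the mean-value argument (applied along the segment $\bm{a} + t(\bm{a}' - \bm{a})$, which stays in the convex set $[0,1)^k$) delivers the Lipschitz constant $L$. I do not anticipate any real obstacle beyond the routine calculus above; the one subtle point is only to verify that the supremum of $|\partial \tilde{G}_i/\partial a_j|$ as $a_j \to 1^-$ is finite (so the bound $L_{i,j}$ is meaningful), which is ensured precisely by the exponential factor $e^{-b_j^{(i)} a_j/(1-a_j)}$ dominating the polynomial blowup $(1-a_j)^{-2}$ when $b_j^{(i)} > 0$.
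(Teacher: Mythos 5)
Your proposal is correct and follows essentially the same route as the paper: bound each partial derivative by $L_{i,j}$ (your substitution $t=a_j/(1-a_j)$ and critical point $t^*=2/b-1$ reproduce exactly the paper's case analysis yielding $g(b)$), then aggregate to $\sqrt{Pk}\max_{i,j}L_{i,j}$. The only cosmetic difference is that you aggregate via the Jacobian's Frobenius norm and the mean-value inequality on the convex set $[0,1)^k$, whereas the paper bounds each $|G_i(\bm{a}^{(1)})-G_i(\bm{a}^{(2)})|$ componentwise and passes from the $\ell_1$ to the $\ell_2$ norm — both give the same constant.
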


In the context of the exponential decay generative model given by (\ref{eq:G}), the constants in Lemma~\ref{lemma:G-Lip} are as follows:
\begin{equation}
    \begin{aligned}
        c_i            &= \frac{|\bm{\alpha}^{(i)}|!}{\bm{\alpha}^{(i)}!}\le 1; \\
        \bm{b}^{(i)}_j &= \begin{cases}
            1                       & j=1\\
            \alpha^{(i)}_j          & j=2,\dots,d+1 \\
            \log(1+\alpha^{(i)}_j)  & j=d+2,\dots,2d+1.
        \end{cases}
    \end{aligned}
\end{equation}
If we only allow exponential and algebraic decay (i.e. $\bm{z}^{(0)} = \bm{0}$), using Lemma~\ref{lemma:G-Lip}, we have that, for the Legendre system with $p\ge 2$,
\begin{equation}
    L \le  p\sqrt{Pk}.
\end{equation}

\section{Proofs}\label{sec:proofs}

In this section we prove the main result. We start with a corollary that combines the RIP and JL results given in Corollary~\ref{corollary:RIP} and Theorem~\ref{thm:JL}.

\begin{corollary}\label{corollary:RIP-plus-JL}
    %\todo{This Corollary and subsequent proof only have slight modifications, highlighted, compared with the first draft.}
    Let $\Phi=\frac{1}{\sqrt{N}} \Psi D_{\bm{\xi}} \in \mathbb{R}^{N\times P}$ where $\Psi$ is the Legendre measurement matrix. Pick $\epsilon>0$ and suppose $s$ is an even integer such that
    \begin{equation}\label{eq:Nbound}
        N \ge c_1 3^p \epsilon^{-2} s \log^2 (s) \log^2 (P),
    \end{equation}
    where $c_1 = 16 C$ and $C$ is as given in Corollary~\ref{corollary:RIP}. Then for $\bm{x} \in \mathbb{R}^P$ and $\gamma>0$
    \begin{equation}\label{eq:RIP-JL-Bound}
        \mathbb{P}\left(
            \left|
                \|\Phi \bm{x}\|_2^2 - \|\bm{x}\|_2^2
            \right| \ge \epsilon \|\bm{x}\|_2^2
        \right)
        \le 5e^{-\min\left\{c_0 s,\gamma\log^2(s)\log^2(P)\right\}}.
    \end{equation}
\end{corollary}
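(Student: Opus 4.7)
The plan is to assemble the corollary as a direct two-step application: first invoke Corollary~\ref{corollary:RIP} to promote $\frac{1}{\sqrt{N}}\Psi$ to a matrix that satisfies the RIP deterministically on a high-probability event, then invoke (the $|E|=1$ specialization of) Theorem~\ref{thm:JL} to convert the RIP into a one-point JL tail bound for $\Phi=\frac{1}{\sqrt{N}}\Psi D_{\bm\xi}$, and finally union-bound the two failure probabilities.

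First I would match the constants. Theorem~\ref{thm:JL} requires the underlying matrix to satisfy the RIP at level $\delta\le\epsilon/4$. Setting $\delta=\epsilon/4$ in Corollary~\ref{corollary:RIP} turns its hypothesis into
\begin{equation*}
    N \;\ge\; C\, 3^p (\epsilon/4)^{-2}\, s\log^2(s)\log^2(P) \;=\; 16C\, 3^p \epsilon^{-2} s\log^2(s)\log^2(P) \;=\; c_1 3^p\epsilon^{-2}s\log^2(s)\log^2(P),
\end{equation*}
which is exactly the assumption \eqref{eq:Nbound} in the corollary. Hence, under this sample-size condition, the event
\begin{equation*}
    \mathcal{E}_{\mathrm{RIP}} := \left\{\tfrac{1}{\sqrt{N}}\Psi \text{ satisfies RIP}(s,\epsilon/4)\right\}
\end{equation*}
occurs with probability at least $1-e^{-\gamma\log^2(s)\log^2(P)}$.

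Next I would condition on $\mathcal{E}_{\mathrm{RIP}}$ and invoke the specialization of Theorem~\ref{thm:JL} to a single point, i.e.\ the inequality \eqref{eq:JL-given-RIP} with $E=\{\bm{x}\}$ and $s$ the given even integer (so $\eta=4e^{-c_0 s}$ is the minimal $\eta$ admissible). Because $\bm{\xi}$ is independent of $\Psi$, on $\mathcal{E}_{\mathrm{RIP}}$ the randomness of $\bm\xi$ alone gives
\begin{equation*}
    \mathbb{P}\!\left(\bigl|\|\Phi \bm{x}\|_2^2 - \|\bm{x}\|_2^2\bigr|\ge \epsilon\|\bm{x}\|_2^2 \,\Big|\, \mathcal{E}_{\mathrm{RIP}}\right) \le 4 e^{-c_0 s}.
\end{equation*}

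Finally, a one-line union bound finishes the proof:
\begin{equation*}
    \mathbb{P}\!\left(\bigl|\|\Phi\bm{x}\|_2^2-\|\bm{x}\|_2^2\bigr|\ge \epsilon\|\bm{x}\|_2^2\right)
    \le 4e^{-c_0 s} + e^{-\gamma\log^2(s)\log^2(P)}
    \le 5\, e^{-\min\{c_0 s,\,\gamma\log^2(s)\log^2(P)\}},
\end{equation*}
since each summand is bounded by $e^{-\min\{c_0 s,\gamma\log^2(s)\log^2(P)\}}$ and $4+1=5$. Essentially nothing in this argument is hard; the only thing requiring care is the bookkeeping that matches Corollary~\ref{corollary:RIP}'s parameter $\delta$ with Theorem~\ref{thm:JL}'s threshold $\epsilon/4$ so that the combined sample-size condition collapses neatly to the stated form \eqref{eq:Nbound} with $c_1=16C$.
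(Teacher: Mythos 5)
Your proposal is correct and follows essentially the same route as the paper's proof: apply Corollary~\ref{corollary:RIP} with $\delta=\epsilon/4$ (which makes the sample-size hypothesis collapse to \eqref{eq:Nbound} with $c_1=16C$), then apply the single-point specialization \eqref{eq:JL-given-RIP} of Theorem~\ref{thm:JL}, and combine the two failure probabilities to reach the factor $5e^{-\min\{c_0 s,\gamma\log^2(s)\log^2(P)\}}$. The only cosmetic difference is that you union-bound the failure events directly while the paper multiplies $\mathbb{P}(\text{JL}\mid\text{RIP})\,\mathbb{P}(\text{RIP})$ and expands, which yields the identical bound.
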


\begin{proof}
    From Corollary~\ref{corollary:RIP} and the bound given by ({\ref{eq:Nbound}}), we have that $\frac{1}{\sqrt{N}} \Psi$ satisfies the RIP($s$,$\epsilon/4$) with probability
    \begin{equation}
        \mathbb{P}\left(\text{RIP}\left(s,\frac{\epsilon}{4}\right)\right)
        \ge 1 - e^{-\gamma\log^2(s)\log^2(P)}.
    \end{equation}
    Additionally, from Theorem~\ref{thm:JL} and (\ref{eq:JL-given-RIP}), given $\frac{1}{\sqrt{N}} \Psi$ satisfies the RIP($s,\epsilon/4$), we have that $\Phi$ satisfies the JL distributional property at level $\epsilon$, i.e., JL($\epsilon$), with probability
    \begin{equation}
        \mathbb{P}\left(
            \text{JL}(\epsilon) \mid \text{RIP}\left(s,\frac{\epsilon}{4}\right)
        \right)
        \ge 1-4e^{-c_0s},
    \end{equation}
    where $c_0=1/40$. We combine these results to obtain the probability that $\Phi$ satisfies the JL distributional property at level $\epsilon$.
    \begin{equation}
        \begin{aligned}
            \mathbb{P}(\text{JL}(\epsilon))
            &\ge \mathbb{P}\left(
                \text{JL}(\epsilon) \mid \text{RIP}\left(s,\frac{\epsilon}{4}\right)
            \right)\mathbb{P}\left(
                \text{RIP}\left(s,\frac{\epsilon}{4}\right)
            \right) \\
            \mathbb{P}\left(
                \left|\|\Phi \bm{x}\|_2^2 - \|\bm{x}\|_2^2\right| \le \epsilon\|\bm{x}\|_2^2 \mid \bm{x} \in \mathbb{R}^P
            \right)
            &\ge (1-4e^{-c_0 s})\left(1-e^{-\gamma\log^2(s)\log^2(P)}\right) \\
            &\ge 1 - 4e^{-c_0 s} - e^{-\gamma\log^2(s)\log^2(P)} \\
            &\ge 1 - 5e^{-\min\left\{c_0 s,\gamma\log^2(s)\log^2(P)\right\}}.
        \end{aligned}
    \end{equation}
\end{proof}

%\todo{This next part is new\dots}

The probability bound given by (\ref{eq:RIP-JL-Bound}) in Corollary~\ref{corollary:RIP-plus-JL} depends on the sparsity order $s$ at which the RIP is satisfied with high probability. This sparsity order is dependent on the number of samples $N$ and the level of the RIP, i.e., $\epsilon/4$. Therefore, we can instead write the bound given by (\ref{eq:RIP-JL-Bound}) as a function of $N$ and $\epsilon$. This will give us the results as stated in Lemma~\ref{lemma:gauss-result-1} and the modified probability bound given by (\ref{eq:Tail-Bounds}).

\begin{proof}[Proof of Lemma \ref{lemma:gauss-result-1}]
    Pick $\epsilon \ge \epsilon_{min}$ and define $s_{\epsilon}$ implicitly as
    \begin{equation}\label{eq:se}
         s_{\epsilon} \log^2(s_{\epsilon}) = \frac{\epsilon^2 N}{c_1 3^p \log^2(P)}.
    \end{equation}
    Let $s$ be the largest even integer such that $s \le s_{\epsilon}$. Since $\epsilon \ge \epsilon_{min}$, we are guaranteed that $s_{\epsilon}\ge 2$ and, hence, $s \ge 2$. Under these conditions, the required bound in Corollary~\ref{corollary:RIP-plus-JL}, given by (\ref{eq:Nbound}), is satisfied, and, therefore, (\ref{eq:RIP-JL-Bound}) holds.

    We will modify {(\ref{eq:RIP-JL-Bound})} by obtaining lower bounds for $s$ and $\log(s)$. First note that, since $s \ge 2 $ is the largest even integer less than $s_{\epsilon}$, we have that
    \begin{equation}\label{eq:s_se}
        \frac{s \log^2(s)}{s_{\epsilon} \log^2(s_{\epsilon})}
        \ge \min_{i\ge2} \frac{i\log^2 i}{(i+2)\log^2(i+2)}
        = \frac{2\log^2(2)}{4\log^2(4)}
        =: c_2.
    \end{equation}
    For larger values of $s$, the value of $c_2$ approaches 1.
    Using (\ref{eq:se}) and (\ref{eq:s_se}), we obtain the following bound
    \begin{equation}\label{eq:s}
        s \log^2(s)
        \ge c_2 s_{\epsilon} \log^2(s_{\epsilon}) =  \frac{c_3 \epsilon^2 N}{3^p \log^2(P)},
    \end{equation}
    where $c_3=c_2/c_1$.

    When $\epsilon \le \epsilon_0$. We have that $s \le s_{\epsilon} \le s_0$ and using (\ref{eq:s}),
    \begin{equation}\label{eq:sbound1}
        s \ge \frac{c_3 \epsilon^2 N}{3^p \log^2(s) \log^2(P)} \ge \frac{c_3 \epsilon^2 N}{3^p \log^2(s_0) \log^2(P)}.
    \end{equation}
    By definition of $s_0$, see (\ref{eq:s0}), we have that $c_0 s \le \gamma \log^2(s)\log^2(P)$ and therefore, using (\ref{eq:sbound1}), the inequality given by (\ref{eq:RIP-JL-Bound}) reduces to
    \begin{equation}\label{eq:RIP-JL-Bound-1}
        \mathbb{P}\left(
            \left|
                \|\Phi \bm{x}\|_2^2 - \|\bm{x}\|_2^2
            \right| \ge \epsilon \|\bm{x}\|_2^2
        \right)
        \le 5e^{-c_0 s}
        \le 5e^{-\epsilon^2 \frac{c_0 c_3 N}{3^p \log^2(s_0) \log^2(P)}}
        = 5e^{-\epsilon^2 f_1(N)},
    \end{equation}
    where $f_1(N)$ is as defined in (\ref{eq:fn-1-a}).

    When $\epsilon > \epsilon_0$, we have that $s_\epsilon > s_0$ and, using the definition of $s_0$, it follows that
    \begin{equation}
        c_0 (s+2) \ge c_0 s_\epsilon > \gamma \log^2(s_\epsilon)\log^2(P) \ge \gamma \log^2(s)\log^2(P).
    \end{equation}
    Therefore, $c_0 s > \gamma \log^2(s)\log^2(P) - 2c_0$, implying that
    \begin{equation}\label{eq:RIP-JL-Bound-2}
        \mathbb{P}\left(
            \left|
                \|\Phi \bm{x}\|_2^2 - \|\bm{x}\|_2^2
            \right| \ge \epsilon \|\bm{x}\|_2^2
        \right)
        \le 5e^{-\gamma\log^2(s)\log^2(P)+2 c_0}
        \le 6e^{-\gamma\log^2(s)\log^2(P)}.
    \end{equation}
    Using (\ref{eq:s}) and the assumption that $s < N < P$, we obtain the follow two bounds
    \begin{equation}\label{eq:sbound2}
        \log^2(s) \ge \frac{c_3 \epsilon^2 N}{3^p \log^2(P) s} \ge \frac{c_3 \epsilon^2}{3^p \log^2(P)}, \quad \quad \quad s \ge \frac{c_3\epsilon^2 N}{3^p \log^4(P)}.
    \end{equation}
    We then write $\log^2(s)=\log(s)\log(s)$ and bound the first and second $\log(s)$ terms using the first and second bound given in (\ref{eq:sbound2}), respectively,
    \begin{equation}
    \begin{aligned}
        \log^2(s) &= \log(s) \log(s)
        \ge \frac{\epsilon}{\log(P)}\sqrt{\frac{c_3}{3^p}}
            \log\left(\frac{c_3\epsilon^2 N}{3^p \log^4(P)}\right) \\
        &= \frac{\epsilon}{\log(P)}\sqrt{\frac{c_3}{3^p}}
            \left(
                \log\left(\frac{N}{c_4^2\gamma^2\log^6(P)}\right) +     \log\left(\frac{c_4^2\gamma^2c_3\epsilon^2\log^2(P)}{3^p}\right)
            \right) \\
        &\ge \frac{\epsilon}{\log(P)}\sqrt{\frac{c_3}{3^p}}
            \left(
                \log\left(\frac{N}{c_4^2\gamma^2\log^6(P)}\right) +     2 - \frac{2 \sqrt{3^{p}}}{c_4 \gamma\sqrt{c_3}\epsilon\log(P)}
            \right) \\
        &= \frac{\epsilon}{\log(P)} \sqrt{\frac{c_3}{3^p}}\left(
            \log\left(\frac{ N}{c_4^2\gamma^2\log^6(P)}\right)
            + 2\right)
            -\frac{2}{c_4\gamma\log^2(P)},
    \end{aligned}
    \end{equation}
    where $c_4>0$ is an arbitrary constant. For the second inequality above, we use that $\log(x^2) = 2\log(x) \ge 2 - 2/x$. We then have that
    \begin{equation}
        \gamma \log^2(s)\log^2(P) \ge \epsilon\gamma \log(P) \sqrt{\frac{c_3}{3^p}}\left(
            \log\left(\frac{ N}{c_4^2\gamma^2\log^6(P)}\right)
            + 2\right)
            -\frac{2}{c_4} = \epsilon f_2(N) - \frac{2}{c_4}.
    \end{equation}
    We apply this inequality to  (\ref{eq:RIP-JL-Bound-2}), to obtain $f_2(N)$ as defined in (\ref{eq:fn-1-a}). Since $c_4$ is an arbitrary constant, we can pick it such that the constant of 6 in front of the exponential given by (\ref{eq:RIP-JL-Bound-2}) is still valid, i.e., $5c^{2c_0+2/c_4} < 6$. This is true if, for example, $c_4=16$.
\end{proof}

We have now obtained a probability tail bound similar to the bound for the random Gaussian matrix, see (\ref{eq:A-conc-bound}). This bound shows us that at lower values of $N$, the exponential decay rate is a linear function of $N$. However, once the value of $N$ surpasses a threshold value the decay rate is only logarithmically dependent on $N$. For a given $\epsilon$, $N_\epsilon$ as given by (\ref{eq:Neps}) is the sample size at which this transition occurs. As we increase the size of our Legendre basis, i.e., as $P$ increase, the value of $N_\epsilon$ also increases. Therefore, if $P$ is large enough, we can assume  that we are in the linear regime of exponential decay.

In our proof showing that  $\Phi$ satisfies the S-REC property, we will find a value of $P$ and hence $s_0$, i.e., see (\ref{eq:s0}), such that for the needed values of $\epsilon$ we will be in this linear decay regime. This will allow us to use the following corollary, which is a simplified version of Lemma~\ref{lemma:gauss-result-1}.

\begin{corollary}\label{cor:tailbound}
    Let $\Phi = \frac{1}{\sqrt{N}} \Psi D_{\bm{\xi}} \in \mathbb{R}^{N\times P}$ where $\Psi$ is the Legendre polynomial measurement matrix and let $s_0$ be as given by (\ref{eq:s0}).
    Define
    \begin{equation}\label{eq:fn}
        f(N;\epsilon) := \frac{c_0 c_3 \tilde{N}}{3^p \log^2(P)\log^2(s_0)}
    \end{equation}
    where $\tilde{N} = \min\{N,N_{\epsilon}\}$ with $N_{\epsilon}$ as given by (\ref{eq:Neps}).

    Then, for any $\bm{x} \in \mathbb{R}$ and $\epsilon>\epsilon_{min}$, where $\epsilon_{min}$ is given by (\ref{eq:emin}),
    \begin{equation}\label{eq:Tail-Bounds-2}
        \mathbb{P}\left(
            \left|\|\Phi \bm{x}\|_2^2 - \|\bm{x}\|_2^2\right| \ge \epsilon \|\bm{x}\|_2^2
        \right)
        \le 5e^{-f(N;\epsilon) \epsilon^2}.
    \end{equation}

\end{corollary}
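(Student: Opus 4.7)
The plan is to split on the two regimes of Lemma~\ref{lemma:gauss-result-1}, separated by whether $N \le N_\epsilon$ or $N > N_\epsilon$ (equivalently whether $\epsilon \le \epsilon_0$ or $\epsilon > \epsilon_0$), and verify that in each regime the bound matches $5 e^{-f(N;\epsilon)\epsilon^2}$ after at most one monotonicity step. Before starting I would record the identity $c_1 c_3 = c_2$ and note that $s_0 \ge 2$ forces $\epsilon_0 \ge \epsilon_{min}$, so that Lemma~\ref{lemma:gauss-result-1} is legitimately applicable at $\epsilon_0$.

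The first case, $N \le N_\epsilon$ (equivalently $\epsilon \le \epsilon_0$), is immediate: by the definition of $\tilde N$ we have $\tilde N = N$, so $f(N;\epsilon)$ reduces exactly to the function $f_1(N)$ of (\ref{eq:fn-1-a}), and Case 1 of Lemma~\ref{lemma:gauss-result-1} directly gives the desired bound $5 e^{-f_1(N)\epsilon^2} = 5 e^{-f(N;\epsilon)\epsilon^2}$.

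The second case, $N > N_\epsilon$ (equivalently $\epsilon > \epsilon_0$), requires a small trick, because Lemma~\ref{lemma:gauss-result-1} produces a bound of the form $6 e^{-f_2(N)\epsilon}$ which is not obviously comparable to the clean $\epsilon^2$-exponent we want. The key observation is that, with $\tilde N = N_\epsilon$, substituting the defining formula for $N_\epsilon$ into $f(N;\epsilon)\epsilon^2$ causes the $\epsilon$ to cancel and leaves the \emph{constant} exponent $c_0 c_3 c_1 s_0 = c_0 c_2 s_0$, independent of both $N$ and $\epsilon$. Since the target exponent no longer depends on $\epsilon$, there is no need to manipulate $f_2(N)\epsilon$ at all. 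Instead, I would use monotonicity of the tail event: for $\epsilon > \epsilon_0$, the event $\{\,|\|\Phi\bm{x}\|_2^2 - \|\bm{x}\|_2^2| \ge \epsilon \|\bm{x}\|_2^2\,\}$ is contained in the corresponding event at level $\epsilon_0$, so its probability is at most the probability at $\epsilon_0$. But at $\epsilon_0$ we are back in the first regime of Lemma~\ref{lemma:gauss-result-1}, which yields $5 e^{-f_1(N)\epsilon_0^2}$, and a one-line computation using $\epsilon_0^2 N = c_1 3^p s_0 \log^2(s_0)\log^2(P)$ confirms $f_1(N)\epsilon_0^2 = c_0 c_2 s_0$, exactly matching the target $f(N;\epsilon)\epsilon^2$.

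There is very little that can go wrong. The only conceptual point worth highlighting is that the monotonicity reduction lets us bypass the Case 2 bound of Lemma~\ref{lemma:gauss-result-1} entirely, which also explains why the prefactor improves from $6$ (in the lemma) to $5$ (in the corollary). The remaining work is purely algebraic unwinding of the definitions of $f_1$, $N_\epsilon$, $\epsilon_0$, and $c_3 = c_2/c_1$.
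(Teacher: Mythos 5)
Your proof is correct, and in the nontrivial regime it takes a genuinely different route from the paper. Both arguments split on $N\le N_{\epsilon}$ versus $N>N_{\epsilon}$ (equivalently $\epsilon\le\epsilon_0$ versus $\epsilon>\epsilon_0$), and the first case is identical: $\tilde N=N$, $f(N;\epsilon)=f_1(N)$, and case 1 of Lemma~\ref{lemma:gauss-result-1} applies verbatim. For the second case, the paper re-enters the internals of the proof of Lemma~\ref{lemma:gauss-result-1}: it takes the largest even integer $s<s_{\epsilon}$, shows $s\log^2(s)\ge c_2\,s_0\log^2(s_0)$, and reruns the chain (\ref{eq:s})--(\ref{eq:RIP-JL-Bound-1}) with $N\to N_{\epsilon}$, which implicitly requires re-checking which branch of the $\min\{c_0 s,\gamma\log^2(s)\log^2(P)\}$ bound from Corollary~\ref{corollary:RIP-plus-JL} is active. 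You instead treat Lemma~\ref{lemma:gauss-result-1} as a black box: you observe that with $\tilde N=N_{\epsilon}$ the target exponent collapses to the constant $f(N;\epsilon)\epsilon^2=c_0c_1c_3\,s_0=c_0c_2\,s_0$, use monotonicity of the tail event to pass from level $\epsilon$ to level $\epsilon_0$, and apply case 1 of the lemma at $\epsilon_0$, where $f_1(N)\epsilon_0^2$ equals the same constant; your preliminary checks ($c_1c_3=c_2$, and $\epsilon_{min}\le\epsilon_0$ because $s_0\ge 2$) are exactly what is needed to make this legitimate. What each approach buys: the paper's version stays inside the machinery used to derive the constants and is the natural continuation of the Lemma~\ref{lemma:gauss-result-1} proof, but its one-line appeal to ``the logic of (\ref{eq:s})--(\ref{eq:RIP-JL-Bound-1})'' needs care when $s$ exceeds $s_0$; your version avoids that re-verification entirely, makes transparent why the prefactor is $5$ rather than $6$ (case 2 of the lemma is never invoked), and makes explicit the structural fact that beyond $N_{\epsilon}$ the guaranteed exponent saturates at the $N$- and $\epsilon$-independent value $c_0c_2 s_0$.
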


\begin{proof}
    Let $\epsilon_0$ and $f_1(N)$ be as defined in Lemma~\ref{lemma:gauss-result-1}.
    If $N \le N_{\epsilon}$ then $\epsilon_{min} < \epsilon  < \epsilon_0$ and $f(N,\epsilon)=f_1(N)$. Thus, (\ref{eq:Tail-Bounds-2}) follows directly from Lemma~\ref{lemma:gauss-result-1}. Suppose instead that $N>N_{\epsilon}$ and therefore $s_\epsilon > s_0$ where $s_\epsilon$ is as defined in (\ref{eq:se}). Let $s$ be the largest even integer such that $s < s_{\epsilon}$. We have that
    \begin{equation}
        \frac{s\log^2(s)}{s_0 \log^2(s_0)} \ge \frac{s\log^2(s)}{s_\epsilon \log^2(s_\epsilon)} \ge c_2,
    \end{equation}
    and therefore the logic given by (\ref{eq:s})-(\ref{eq:RIP-JL-Bound-1}) holds with $N \rightarrow N_{\epsilon}$. Since $f(N,\epsilon)=f_1(N_\epsilon)$ this gives us (\ref{eq:Tail-Bounds-2}).
\end{proof}

%\todo{I did not significantly alter the following part. That is Lemma 5 and 6, and the corresponding proofs are the same as before, except where highlighted.}

We next state two lemmas that show the concentration inequality given by Lemma~\ref{lemma:gauss-result-1} implies the S-REC is satisfied. The first lemma is equivalent to Lemma~8.2 in \cite{Bora2017}, and the proof is nearly identical with few modifications. Specifically, we show that the needed concentration inequality is satisfied at the required values of $\epsilon$. We additionally carry the constants through in all the calculations. Because of the similarity with the result from \cite{Bora2017} the proof of this lemma is moved to the appendix.

\begin{lemma}\label{lemma:bora_8.2_modified}
    Let $\Omega \subseteq B^k(r) = \{\bm{z} \in \mathbb{R}^k \mid \|\bm{z}\|_2 \le r\}$ and let $G:\Omega\rightarrow\mathbb{R}^P$ be a $L$-Lipschitz function. Set $S=G(\Omega)$ and let $M$ be a $\delta/L$-net on $\Omega$ such that $\log |M| \le k\log\left(\frac{4L r}{\delta}\right)$. Let $f$ be a positive, increasing function of $N$ and suppose $N$ is large enough such that
    \begin{equation}\label{eq:Nbound1}
        f(N) \ge 3k\log\left(\frac{4 L r}{\delta}\right).
    \end{equation}
    Let $A \in \mathbb{R}^{N \times P}$ and set $J \ge \log(\|A\|)/\log(2)$. Suppose that for $\bm{x}\in\mathbb{R}^p$ and $j=0,\dots,J$
    \begin{equation}
        \mathbb{P}\left(\|A \bm{x}\|_2^2 \ge (1+\epsilon_j)\|\bm{x}\|_2^2\right) \le 5e^{-f(N) \epsilon_j^2},
    \end{equation}
    where $\epsilon_j$ is such that
    \begin{equation}
        \epsilon_j^2 = \frac{1}{f(N)}(\log(5)+4jk) + 1.
    \end{equation}
    For any $\bm{x} \in S$, if $\bm{x}' = \argmin_{\hat{\bm{x}} \in G(M)} \|\bm{x}-\hat{\bm{x}}\|$, we have that $\|A(\bm{x}-\bm{x}')\| \le C \delta$ with probability at least $1 - 2e^{-\frac{f(N)}{3}}$. Here, $C$ is a constant.
\end{lemma}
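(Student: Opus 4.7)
The plan is a dyadic chaining argument. First, I would refine $M$ into a hierarchy of nets $M_0 \subseteq M_1 \subseteq \cdots \subseteq M_J$ on $\Omega$, where $M_j$ is a $(\delta/(L 2^j))$-net (so $M_0 = M$) and a standard volume bound gives $\log|M_j| \le k\log(4Lr/\delta) + jk\log 2$. For each $\bm{x} \in S$ let $\bm{x}_j$ denote the point of $G(M_j)$ that is nearest to $\bm{x}$, so that in particular $\bm{x}_0 = \bm{x}'$. Using $L$-Lipschitz continuity of $G$ together with the net property, $\|\bm{x} - \bm{x}_j\|_2 \le \delta/2^j$, and the triangle inequality then yields $\|\bm{x}_{j+1} - \bm{x}_j\|_2 \le 3\delta/2^{j+1}$.

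Telescoping gives
\begin{equation*}
\bm{x} - \bm{x}' \;=\; (\bm{x} - \bm{x}_J) + \sum_{j=0}^{J-1}(\bm{x}_{j+1} - \bm{x}_j),
\end{equation*}
and the tail is handled deterministically: since $J \ge \log\|A\|/\log 2$, we have $\|A(\bm{x}-\bm{x}_J)\|_2 \le \|A\|\,\delta/2^J \le \delta$. For each intermediate level $j$, I would apply the concentration hypothesis at level $\epsilon_j$ to the single vector $\bm{x}_{j+1} - \bm{x}_j$ and take a union bound over the at most $|M_j|\cdot|M_{j+1}|$ such possible pair differences, and then over $j = 0,\ldots,J-1$. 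On the resulting joint event, $\|A(\bm{x}_{j+1}-\bm{x}_j)\|_2 \le \sqrt{1+\epsilon_j}\cdot 3\delta/2^{j+1}$ uniformly in $\bm{x} \in S$, and summing the series together with the tail contribution of $\delta$ gives $\|A(\bm{x}-\bm{x}')\|_2 \le C\delta$, since $\epsilon_j$ grows only like $\sqrt{j}$ while the $2^{-j}$ factor decays exponentially.

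The main obstacle is the constant accounting that pins the union-bound failure probability to the stated $2e^{-f(N)/3}$. The prescribed choice $\epsilon_j^2 = (\log 5 + 4jk)/f(N) + 1$ is engineered so that $f(N)\epsilon_j^2$ absorbs both $\log 5$ and the cardinality factor $\log(|M_j|\cdot|M_{j+1}|) \le 2k\log(4Lr/\delta) + (2j+1)k\log 2$, leaving a residual exponent bounded by $-f(N)/3$ via the hypothesis $f(N) \ge 3k\log(4Lr/\delta)$; the $4jk$ term further ensures that the per-level failure probabilities form a convergent geometric series when summed over $j$. Because the argument essentially mirrors Bora et al.\ Lemma 8.2 with only the Gaussian tail replaced by the abstract concentration hypothesis, I would relegate the precise bookkeeping, including the extraction of the absolute constant $C$, to the appendix.
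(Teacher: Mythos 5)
Your proposal is correct and follows essentially the same route as the paper's proof: a dyadic chain of nets $M_0\subseteq\cdots\subseteq M_J$ with the telescoping decomposition, the deterministic bound $\|A(\bm{x}-\bm{x}_J)\|\le\|A\|\delta/2^J\le\delta$ from $J\ge\log(\|A\|)/\log 2$, a union bound over the pair-difference sets at level $\epsilon_j$, and the choice of $\epsilon_j$ together with $f(N)\ge 3k\log(4Lr/\delta)$ absorbing the cardinality factors into the exponent $-jk-f(N)/3$ to yield the $2e^{-f(N)/3}$ failure probability. The only differences are immaterial constants (e.g., your $\|\bm{x}_{j+1}-\bm{x}_j\|\le 3\delta/2^{j+1}$ versus the paper's $\delta/2^{j}$), which only shift the absolute constant $C$.
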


\begin{remark}
    In the above lemma if we assume $4Lr/\delta > 10$, then we can set the constant $C = 7$.
\end{remark}

The next lemma is analogous to Lemma 2 in \cite{Dhar2018} and Lemma 4.1 in \cite{Bora2017}, and again the corresponding proof is nearly identical. Because of this we place the proof in the appendix. Note that in contrast to the previous proofs, we carry constants through the calculation and use the oblivious subspace embedding result given by Claim~\ref{claim:oblivious} in \ref{sec:appB-OSE}. By using this form of the oblivious subspace embedding result, we obtain slightly different bounds on the required number of measurements as compared with \cite{Dhar2018}.

\begin{lemma}\label{lemma:bora-mod-2}
    Let $\Omega \subseteq B^k(r) = \{\bm{z} \in \mathbb{R}^k \mid \|\bm{z}\|_2 <r\}$ and let $G:\Omega \rightarrow \mathbb{R}^P$ be $L$-Lipschitz. Define the set $S \subseteq \mathbb{R}^P$ such that
    \begin{equation}
        S=\{ G(\bm{z}) + \bm{\nu} \mid \bm{z} \in \Omega, \|\bm{\nu}\|_0 = \ell\}.
    \end{equation}
    Let $A \in \mathbb{R}^{N\times P}$ and pick $\alpha \in (0,1/4)$.
    Set $J \ge \log(\|A\|)/\log(2)$ and suppose that for $\bm{x}\in\mathbb{R}^p$ and $j=1,\dots,J$ there exists a positive function $f(N)$ such that
    \begin{equation}\label{eq:req1}
        \mathbb{P}\left(\|A \bm{x}\|_2^2 \ge (1+\epsilon_j)\|\bm{x}\|_2^2\right) \le 5e^{-f(N) \epsilon_j^2},
    \end{equation}
    where $\epsilon_j$ is such that
    \begin{equation}\label{eq:ej}
        \epsilon_j^2 = \frac{1}{f(N)}(\log(5)+4jk) + 1.
    \end{equation}
    Let $\alpha<1/4$ and let $g=g(k,\ell,P,\delta,\alpha,L,r)$ be as given in (\ref{eq:g}), if (\ref{eq:req1}) holds with $\epsilon_j \rightarrow \alpha$ and
    \begin{equation}\label{eq:fN-bound}
        f(N) \ge g,
    \end{equation}
    then $A$ satisfies the S-REC$(S,1-4\alpha,2(1+C)\delta)$ with probability at least $1-7e^{-\alpha^2\frac{f(N)}{3}}$.
\end{lemma}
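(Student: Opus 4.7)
The plan is to mimic the argument of Lemma 2 in \cite{Dhar2018}, but with two adaptations dictated by the present setting: (i) use the matrix tail bound ({\ref{eq:req1}}) in place of the Gaussian concentration inequality, and (ii) track constants carefully so as to land on the specific function $g$ defined in ({\ref{eq:g}}). Concretely, given arbitrary $\bm{x}_1,\bm{x}_2 \in S$, write $\bm{x}_i = G(\bm{z}_i) + \bm{\nu}_i$ with $\|\bm{\nu}_i\|_0 \le \ell$, and decompose
\begin{equation*}
    A(\bm{x}_1 - \bm{x}_2) = A(G(\bm{z}_1) - G(\bm{z}_2)) + A(\bm{\nu}_1 - \bm{\nu}_2).
\end{equation*}
The support $T = \operatorname{supp}(\bm{\nu}_1) \cup \operatorname{supp}(\bm{\nu}_2)$ satisfies $|T| \le 2\ell$, so the sparse part lives in one of $\binom{P}{2\ell}$ fixed $2\ell$-dimensional coordinate subspaces.

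The core discretization has three pieces. First, build a $(\delta/L)$-net $M$ of $\Omega$ with $\log|M|\le k\log(4Lr/\delta)$ and let $G(M)$ be the induced $\delta$-net on $G(\Omega)$; Lemma {\ref{lemma:bora_8.2_modified}} then controls the discretization error $\|A(G(\bm{z}_i) - G(\tilde{\bm z}_i))\|_2$ at cost $C\delta$ on a high-probability event (using the $\epsilon_j$ sequence given in ({\ref{eq:ej}})). Second, enumerate the $\binom{P}{2\ell} \le (eP/(2\ell))^{2\ell}$ possible $T$'s by a union bound. Third, for each fixed $T$, use an $\alpha$-net of size at most $(3/\alpha)^{2\ell}$ on the unit sphere of the $2\ell$-dimensional subspace $\mathrm{span}\{e_i : i\in T\}$ to reduce the sparse contribution to a finite collection of unit directions; equivalently, one may invoke the oblivious subspace embedding statement (Claim {\ref{claim:oblivious}} of \ref{sec:appB-OSE}) on each such subspace. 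Then apply the hypothesis ({\ref{eq:req1}}) at level $\alpha$ to every pair of points in this combined finite set, and take a union bound.

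Counting the log of the net cardinality gives
\begin{equation*}
    k\log\frac{4Lr}{\delta} + 2\ell\log\frac{eP}{2\ell} + 2\ell\log\frac{3}{\alpha} + \mathrm{const.},
\end{equation*}
which, when balanced against the tail bound $5e^{-\alpha^2 f(N)}$, forces a requirement of the form $f(N)\alpha^2 \gtrsim k\log(4Lr/\delta) + (2\ell+1)\log(eP(2\ell+1)/(2\ell\alpha))$, matching exactly the definition of $g$ up to the constant $3$ in front. Chaining the discretization error (which gives an additive $C\delta$ from Lemma {\ref{lemma:bora_8.2_modified}}) with the sphere-net error on the sparse subspace (which gives an additive $\delta$) yields the S-REC slack $2(1+C)\delta$, and the residual factor $(1-4\alpha)$ in the multiplicative constant arises from applying the triangle inequality on the upper/lower-tail concentration at level $\alpha$ applied to the three summands (two from the $G$ part on the net, one from the sparse part on its sphere-net) in the lower-bound direction.

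The main obstacle will be bookkeeping rather than anything conceptually new: to obtain the stated probability $1-7e^{-\alpha^2 f(N)/3}$ one must simultaneously satisfy the condition of Lemma {\ref{lemma:bora_8.2_modified}} (i.e., $f(N)\ge 3k\log(4Lr/\delta)$, which is implied by ({\ref{eq:fN-bound}})) and ensure that the union-bound factor $5|M|\binom{P}{2\ell}(3/\alpha)^{2\ell}$ is absorbed by a constant fraction of the exponent $\alpha^2 f(N)$, leaving at least $\alpha^2 f(N)/3$ in the final rate. The delicate step is verifying that the ``$3$'' in the definition of $g$ correctly partitions the exponent into the three terms needed: covering $\Omega$, choosing the support $T$, and choosing a unit vector in the corresponding subspace, with the remaining third of the exponent feeding the stated probability bound. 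The seven collected tail events (one for each discretized pair plus two for the Lemma {\ref{lemma:bora_8.2_modified}} applications) give the constant $7$ in $7e^{-\alpha^2 f(N)/3}$.
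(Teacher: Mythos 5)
There is a genuine gap at the heart of your argument: you propose to handle the generative part and the sparse part as separate summands, applying concentration to each and then combining ``by triangle inequality \dots in the lower-bound direction.'' This cannot produce the S-REC lower bound. If $\bm{a}=G(\bm{z}_1)-G(\bm{z}_2)$ (a net difference) and $\bm{b}=\bm{\nu}_1-\bm{\nu}_2$, then from $\|A\bm{a}\|_2\ge(1-\alpha)\|\bm{a}\|_2$ and an upper bound on $\|A\bm{b}\|_2$ the reverse triangle inequality only gives $\|A(\bm{a}+\bm{b})\|_2\ge(1-\alpha)\|\bm{a}\|_2-(1+\alpha)\|\bm{b}\|_2$, which is not bounded below by any multiple of $\|\bm{a}+\bm{b}\|_2$: when $\bm{b}\approx-\bm{a}$ the right-hand side is negative while $\|\bm{a}+\bm{b}\|_2$ can be anything small but positive, so no factor $(1-4\alpha)$ emerges. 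Relatedly, your application of Claim~\ref{claim:oblivious} (or a sphere net) only to the $2\ell$-dimensional coordinate subspace $\mathrm{span}\{e_i:i\in T\}$ leaves the cross interaction between the sparse part and the generative part uncontrolled, which is exactly the quantity the S-REC requires.

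The paper's proof avoids this by never splitting the vector: after moving to net points, the entire difference $\bm{y}=G(\bm{z}_1)-G(\bm{z}_2)+\bm{\nu}$ (with $\|\bm{\nu}\|_0\le 2\ell$) lies in a single $(2\ell+1)$-dimensional subspace spanned by the fixed vector $G(\bm{z}_1)-G(\bm{z}_2)$ together with the $2\ell$ coordinate directions of the support. Claim~\ref{claim:oblivious} is applied to that combined subspace, and this is also where the factor $1-4\alpha$ actually comes from: the claim degrades a JL bound at level $\alpha$ (i.e.\ (\ref{eq:req1}) with $\epsilon_j\to\alpha$) to a subspace embedding at level $4\alpha$, at the price of the factor $\left(\frac{2\ell+1}{\alpha}\right)^{2\ell+1}$ — not from any triangle-inequality bookkeeping over summands. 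One then union bounds over the net pairs (costing $2k\log(4Lr/\delta)$, not the $k\log(4Lr/\delta)$ in your count, since both endpoints are discretized) and over the $\binom{P}{2\ell}$ supports, and the hypothesis $f(N)\ge g$ with $g$ as in (\ref{eq:g}) absorbs these counts while leaving the exponent $\alpha^2 f(N)/3$; the constant $7$ is simply $5+2$, the $5$ from the union-bounded subspace-embedding failure and the $2$ from the single chaining event of Lemma~\ref{lemma:bora_8.2_modified} that supplies the $2C\delta$ discretization slack (combined with $2\delta$ from Lipschitz discretization to give $2(C+1)\delta$). Your net construction and use of Lemma~\ref{lemma:bora_8.2_modified} are in the right spirit, but the lower-bound step must be rebuilt around the combined $(2\ell+1)$-dimensional subspace argument.
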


Next we combine the results given in Lemma~\ref{lemma:gauss-result-1} and Lemma~\ref{lemma:bora-mod-2} to prove Lemma~\ref{lemma:SREC}. To apply Lemma~\ref{lemma:bora-mod-2}, we need to show that $\Phi$ satisfies the concentration inequality given by (\ref{eq:req1}) for all $\epsilon_j$ and for $\alpha<1/4$ and that the matrix norm of $\|\Phi\|$ is bounded.

%\todo{The following proof has been significantly changed}
\begin{proof}[Proof of Lemma~\ref{lemma:SREC}]

    First we will show that the required tail concentrations inequalities are satisfied. To do this we will use Lemma~\ref{lemma:bora-mod-2} where $f(N)$ is defined as follows:
    \begin{equation}\label{eq:fN2}
        f(N) = \frac{\gamma c_3 \tilde{N}}{3^p s_0} = \frac{c_0 c_3 \tilde{N}}{3^p \log^2(s_0) \log^2(P)},
    \end{equation}
    where $\tilde{N}=\min\{N,N_{\epsilon_{max}}\}$.
    
    We will first show that the bound given by (\ref{eq:fN-bound}) in Lemma~\ref{lemma:bora-mod-2} holds, i.e., $f(N) \ge g$. If $N<N_{\epsilon_{max}}$, (\ref{eq:Nbound-linear}) immediately implies $f(N) \ge g$. Therefore, we need  to show that (\ref{eq:fN-bound}) holds when $N>N_{\epsilon_{max}}$ and thus, $\tilde{N}=N_{\epsilon_{max}}$ and $f(N)=f(N_{\epsilon_{max}})$. Using (\ref{eq:fN2}) and (\ref{eq:Neps}), we have that
    \begin{equation}\label{eq:fNmax}
        f(N_{\epsilon_{max}}) = c_0 c_1 c_3 \epsilon_{max}^{-2} s_0.
    \end{equation}
    Rearranging (\ref{eq:fNmax}) and applying the bound on $s_0$ given by (\ref{eq:s0bound}), we then have that
    \begin{equation}
        s_0 = \frac{\epsilon_{max}^2 f(N_{\epsilon_{max}})}{c_0 c_1 c_3} \ge \frac{1}{c_0 c_1 c_3}\left(\log(5) + 4Jk + g\right).
    \end{equation}
    Simplifying and using the definition of $\epsilon_{max}$ given by (\ref{eq:emax}), we obtain
    \begin{equation}
        \left(\frac{1}{g}(\log(5)+4Jk)+1\right) f(N_{\epsilon_{max}}) \ge \log(5) + 4Jk + g
    \end{equation}
    which gives us
    \begin{equation}
        f(N_{\epsilon_{max}}) \ge \frac{\log(5) + 4Jk + g}{\frac{1}{g}(\log(5)+4Jk+g)} = g.
    \end{equation}
    Therefore, we have that the condition given by (\ref{eq:fN-bound}) in Lemma~\ref{lemma:bora-mod-2} is satisfied.

    We next need to show that the concentration tail inequality given by (\ref{eq:req1}) is satisfied for $\alpha<1/4$ and $\epsilon_j$ as defined in (\ref{eq:ej}) for $j=1,\dots,J$. To do this we will show that, for $j=1,\dots,J$,
    \begin{equation}\label{eq:e-inequal}
        \epsilon_{min} < \alpha < \epsilon_j < \epsilon_{max}.
    \end{equation}
    This then allows us to apply Corollary~\ref{cor:tailbound} to show that (\ref{eq:req1}) holds for $\alpha$ and $\epsilon_j$, $j=1,\dots,J$. That is suppose $\epsilon$ is such that $\epsilon_{min} < \epsilon < \epsilon_{max}$. Let $f(N;\epsilon)$ and $N_\epsilon$ be as defined in Corollary~\ref{cor:tailbound} and (\ref{eq:Neps}). Since $\epsilon < \epsilon_{max}$, we have that $N_{\epsilon_{max}} < N_\epsilon$ and $f(N;\epsilon) \ge f(N)$. Applying Corollary~\ref{cor:tailbound} gives us
    \begin{equation}\label{eq:Tail-Bounds-old2}
        \mathbb{P}\left(
            \left|\|\Phi \bm{x}\|_2^2 - \|\bm{x}\|_2^2\right| \ge \epsilon \|\bm{x}\|_2^2
        \right)
        \le 5e^{-f(N;\epsilon) \epsilon^2} \le 5e^{-f(N)\epsilon^2}.
    \end{equation}
    This logic shows us that given (\ref{eq:e-inequal}) holds, (\ref{eq:req1}) is satisfied for $\alpha$ and $\epsilon_j$, $j=1,\dots,J$.

    The upper bound in (\ref{eq:e-inequal}) follows immediately from
    \begin{equation}\label{eq:emax-old}
        \epsilon^2_{max} := \frac{1}{g}(\log(5)+4Jk)+1 \ge \frac{1}{f(N)}(\log(5)+4jk)+1 = \epsilon_j^2 \ge \alpha^2.
    \end{equation}
    To prove the lower bound of (\ref{eq:e-inequal}) holds, first note, due to the bound on $N$ given by (\ref{eq:Nbound-linear}) and the definition of $s_0$ given by (\ref{eq:s0}), we have that
    \begin{equation}
        N \ge \frac{c_1 3^p}{c_0 c_2} \log^2(s_0) \log^2(P) g \ge \frac{c_1}{\alpha^2} 3^p 2 \log(2) \log^2(P),
    \end{equation}
    where we pull out the $\alpha^2$ term from the $g$ function and note that the remaining multiplicative terms are greater than 1. Therefore, using the definition of $\epsilon_{min}$ given by (\ref{eq:emin}), we have that $\epsilon_{min}<\alpha<\epsilon_j$ for $j=1,\dots,J$.

    To apply Lemma~\ref{lemma:bora-mod-2}, it remains to show that $J \ge \log(\|\Phi\|)/\log(2)$. It is known that Legendre polynomials are bounded as follows (see Lemma~3.3 from \cite{Doostan2011}),
    \begin{equation}
        \text{max}_{\bm{\alpha}\in\Lambda_{d,p}}\|\psi_{\bm{\alpha}}\|_{\infty} = 3^{\frac{p}{2}}
    \end{equation}
    which gives us the following bound for the matrix norm,
    \begin{equation}\label{eq:Psi-norm-bound}
        \|\Psi\|_2 \le \sqrt{NP} \max_{ij} |\Psi_{ij}| \le \sqrt{NP} 3^{p/2}.
    \end{equation}
    Therefore, $\|\Phi\|_2=\left\|\frac{1}{\sqrt{N}}\Psi D_{\bm{\xi}}\right\|_2\le \sqrt{P}3^{\frac{p}{2}}$ and $J = \lceil 2 \log(P 3^p)/\log(2)\rceil \ge \log(\|\Phi\|)/\log(2)$.

\end{proof}

Finally, we combine Lemma~\ref{lemma:bora-result} and \ref{lemma:SREC} to prove the main result.

%\todo{The proof below is similar to the previous proof. I modified the value of $\epsilon$ as you mentioned.}
\begin{proof}[Proof of Theorem~\ref{thm:1}]
    By Lemma~\ref{lemma:SREC}, we have that $\Phi$ satisfies the S-REC($S$,$1-4\alpha$,$16\delta$) with probability at least $1-7e^{-\frac{\alpha^2 f(N)}{3}}$ where $f(N)$ is as defined in (\ref{eq:fN2}).
    Additionally, using Corollary~\ref{cor:tailbound} and the same logic as described in the proof of Lemma~\ref{lemma:SREC}, we can show (\ref{eq:Tail-Bounds-old2}) holds when $\epsilon=3$, i.e.,
    \begin{equation}
        \mathbb{P}\left(
            \left|\|\Phi \bm{x}\|_2^2 - \|\bm{x}\|_2^2\right| \ge 3 \|\bm{x}\|_2^2
        \right)
        \le 5e^{-9f(N;\epsilon)} \le 5e^{-9 f(N)}.
    \end{equation}
    Note that in the statement of Theorem~\ref{thm:1}, we are guaranteed that $\epsilon_{max} \ge 3$.
    Therefore, $\|\Phi \bm{x}\|_2 \le 2\|\bm{x}\|_2$ with probability at least $1-5e^{-9 f(N)}$. We now apply Lemma~\ref{lemma:bora-result} and set the value of $\mu$ in the lemma such that
    \begin{equation}
        \mu = \max\left\{7e^{-\frac{\alpha^2 f(N)}{3}},5 e^{-9 f(N)}\right\} = 7e^{-\alpha^2 \frac{f(N)}{3}}.
    \end{equation}
\end{proof}

\subsection{Results for the exponential decay generative model}

Next we prove Lemma \ref{lemma:G-Lip}, demonstrating that the results of Theorem~\ref{thm:1} can be applied to the exponential decay model (and variations of this model) described in Section~\ref{sec:methods-genmod}.

\begin{proof}[Proof of Lemma \ref{lemma:G-Lip}]
    For notational simplicity we write $G_i(\bm{a}) = G_i(\bm{z}(\bm{a}))$. First we will bound the derivative of $G_i$ with respect to $a_{\ell}$.
    For $\ell$ such that $b_{\ell}^{(i)}=0$ we have that $\partial G_i(\bm{a})/\partial a_{\ell} = 0$ since
    \begin{equation}
        G_i(\bm{a}) = c_i\prod_{j=1}^k e^{-(b_j^{(i)})^T \left(\frac{a_j}{1-a_j} + z^{(0)}_{j}\right)} = c_i\prod_{\substack{j=1 \\j\ne \ell}}^k e^{-(b_j^{(i)})^T \left(\frac{a_j}{1-a_j} + z^{(0)}_{j}\right)}.
    \end{equation}
    Therefore, below we consider $\ell$ such that $b_{\ell}^{(i)} > 0$. For $a_{\ell} \in [0,1)$,
    \begin{equation}\label{eq:Lip-deriv-1}
    \begin{aligned}
        \left|\frac{\partial G_i(\bm{a})}{\partial a_{\ell}}\right|
            &= \left|\frac{\partial G_i(\bm{a})}{\partial z_{\ell}}\frac{dz_{\ell}}{da_{\ell}}\right| \\
            &= |c_i| b^{(i)}_{\ell} \left(\prod_j e^{-b_j^{(i)} \left(\frac{a_j}{1-a_j}+z_{j}^{(0)}\right)}\right) \frac{1}{(1-a_{\ell})^2} \\
    &= \left(\prod_{j\ne \ell} e^{-b_j^{(i)} \left(\frac{a_j}{1-a_j}+z_{j}^{(0)}\right)}\right) \frac{|c_i| b^{(i)}_{\ell}}{(1-a_{\ell})^2}e^{-b_{\ell}^{(i)} \left(\frac{a_{\ell}}{1-a_{\ell}}+z_{\ell}^{(0)}\right)} \\
    &\le \left(\prod_{j\ne \ell} e^{-b_j^{(i)} z_{j}^{(0)}}\right) \left(|c_i| b^{(i)}_{\ell} e^{-b_{\ell}^{(i)} z_{\ell}^{0}} \right)\left(\frac{1}{(1-a_{\ell})^2}e^{-b_{\ell}^{(i)} \left(\frac{a_{\ell}}{1-a_{\ell}}\right)}\right) \\
    &= |c_i| b^{(i)}_{\ell} e^{-(\bm{b}^{(i)})^T \bm{z}^{(0)}} \left(\frac{1}{(1-a_{\ell})^2}e^{-b_{\ell}^{(i)} \left(\frac{a_{\ell}}{1-a_{\ell}}\right)}\right).
    \end{aligned}
    \end{equation}

    Let us examine what the following term looks like for $a \in [0,1)$ and $b>0$
    \begin{equation}
        f(a) = (1-a)^2 e^{\frac{ba}{1-a}}.
    \end{equation}
    Notice that $f(a)$ is the inverse of the final term in parentheses in (\ref{eq:Lip-deriv-1}). To obtain a bound for $f(a)$, we consider the derivative
    \begin{equation}
        \begin{aligned}
            f'(a)
                &= -2(1-a) e^{\frac{ba}{1-a}} + (1-a)^2 \left(\frac{b}{1-a}+\frac{ba}{(1-a)^2}\right) e^{\frac{ba}{1-a}} \\
                &= (-2 + 2a + b)e^{\frac{ba}{1-a}}.
        \end{aligned}
    \end{equation}
    If $b\ge2$ we have that $f'(a)\ge 0$ for $a \in [0,1)$ and
    \begin{equation}
        f(a) \ge f(0) = 1.
    \end{equation}
    Otherwise $f(a)$ is minimized at $a=(2-b)/2$ and, therefore,
    \begin{equation}
        f(a) \ge f\left(\frac{2-b}{2}\right) = \left(\frac{b}{2}\right)^2 e^{2-b}.
    \end{equation}

    Taken together, this implies that
    \begin{equation}
        \left|\frac{\partial G_i(\bm{a})}{\partial a_{\ell}}\right| \le |c_i| b^{(i)}_{\ell} e^{-(\bm{b}^{(i)})^T \bm{z}^{(0)}} g\left(b_{\ell}^{(i)}\right) =: L_{i,\ell},
    \end{equation}
    where
    \begin{equation}
        g(b) =
            \begin{cases}
                1 & \text{if } b\ge 2 \\
                \frac{4}{b^2}e^{b-2} & \text{if } b \in [0,2).
            \end{cases}
    \end{equation}
    Recall that we are only considering $\ell \in 1,\dots,k$ for which $b_{\ell}^{(i)} \ne 0$. For $\ell$ such that $b_{\ell}^{(i)} = 0$, we define $L_{i,\ell} := 0$ since $\partial G_i(\bm{a})/\partial a_{\ell} = 0$.

    Next, we use the bounds on the partial derivatives to calculate the Lipschitz constant. We have that
    \begin{equation}
        \begin{aligned}
        |G_i(\bm{a}^{(1)}) - G_i(\bm{a}^{(2)})|
            &\le \sum_{i=1}^k L_{i,j} |a_j^{(1)} - a_j^{(2)}| \\
            &\le \max_j L_{i,j} \|a^{(1)} - a^{(2)}\|_1 \\
            &\le \sqrt{k} \max_j L_{i,j} \|a^{(1)} - a^{(2)}\|_2.
        \end{aligned}
    \end{equation}
    Squaring and summing across all $P$ terms gives us
    \begin{equation}
        \begin{aligned}
            \sum_{i=1}^P (G_i(\bm{a}^{(1)}) - G_i(\bm{a}^{(2)}))^2
                &\le \sum_{i=1}^P k (\max_j L_{i,j})^2 \|a_j^{(1)} - a_j^{(2)}\|_2^2 \\
                &\le (\max_{i,j} L_{i,j})^2 P k \|a_j^{(1)} - a_j^{(2)}\|_2^2.
    \end{aligned}
    \end{equation}

    The square root of this equation gives us the $\ell 2$ bound
    \begin{equation}
        \|G(\bm{a}^{(1)}) - G(\bm{a}^{(2)})\|_2 \le \sqrt{Pk}(\max_{ij} L_{i,j}) \|\bm{a}^{(1)} - \bm{a}^{(2)}\|_2.
    \end{equation}
    Therefore, the Lipschitz constant is
    \begin{equation}
        L = \sqrt{Pk} \max_{i,j} L_{i,j}.
    \end{equation}

\end{proof}

\section{Numerical Results}\label{sec:numerical-results}

We examine the performance of the GenMod algorithm on three example problems. The examples are presented in order of increasing complexity, as measured by the random input space dimension. In addition to GenMod, we consider results when no sparse vector is added (i.e., GenMod-NoSparse). In the context of Algorithm~\ref{alg} this is equivalent to setting \textit{max\_iteration}=1 and skipping the weighted lasso step (i.e., skipping lines 22-25). We compare the results of our algorithm with techniques that promote sparsity of the coefficient vector, i.e., OMP and IRW-Lasso (see Algorithm~\ref{alg:IRW-Lasso}). For each replication, we obtain the QoI $u$ at $N$ realizations of the random input vector $\bm{Y}$ and run Algorithm~\ref{alg} with $N_{op} = 4N/5$ and $N_{va}=N/5$. We use the same $N$ data points to perform OMP and Lasso as described in Section~\ref{sec:methods-las-omp}.

To compare the performance of the four approaches, we calculate the relative coefficient and/or reconstruction errors. Specifically, for Examples 1 and 2 we find the relative coefficient error by obtaining the least squares coefficient solution $\bm{c}_{ls}$, using $N_{ls} \gg P$ datapoints. The relative coefficient error is defined as
\begin{equation}
    \varepsilon_{c}(\text{Method})
    := \frac{\|\hat{\bm{c}}(\text{Method})-\bm{c}_{ls}||_2}{\|\bm{c}_{ls}\|_2},
\end{equation}
where $\hat{\bm{c}}(\text{Method})$ is the coefficient vector obtained using a specific method (i.e., GenMod, GenMod-NoSparse, OMP, IRW-Lasso).
For all three examples we report the relative reconstruction error, which quantifies how well the optimized coefficients predict the value of the QoI $u$ for testing data that was not used during training. We let $N_{te}$ denote the number of samples used for testing and note that, for a given sample replication, the same testing data was used for each method. Let $\Psi_{te}\in\mathbb{R}^{N_{te}\times P}$ represent the Legendre measurement matrix evaluated at the $N_{te}$ realizations of $\bm{Y}$ used for testing. Let $\bm{u}_{te}$ represent the vector containing the value of the QoI at each of the $N_{te}$ realizations. The relative reconstruction error is then given as
\begin{equation}
    \varepsilon_u(\text{Method})
    := \frac{\|\Psi_{te} \hat{\bm{c}}(\text{Method})
        - \bm{u}_{te}||_2}{\|\bm{u}_{te}\|_2}.
\end{equation}

We will additionally examine the relative improvement of GenMod compared with OMP or IRW-Lasso. For the relative coefficient and reconstruction error, respectively, this improvement will be reported as a percentage as follows
\begin{align}
    \Delta \varepsilon_c
    &:= \frac{
        \varepsilon_c(\text{Method}) - \varepsilon_c(\text{GenMod})
    }{\varepsilon_c(\text{Method})} \cdot 100 \% \\
    \Delta \varepsilon_u
    &:= \frac{
        \varepsilon_u(\text{Method}) -\varepsilon_u(\text{GenMod})
    }{\varepsilon_u(\text{Method})} \cdot 100 \%.
\end{align}
where Method=OMP or IRW-Lasso.

For Examples 1 and 2, we repeated this training and testing process for $n_r=50$ independent replications at each value of $N$. For Example 3, in order to better quantify the relative reconstruction error distribution, we performed $n_r=100$ independent replications.

\subsection{Example 1: 1D Elliptic Equation}\label{sec:Ex1}

We consider the following elliptic equation
\begin{equation}\label{eq:ex1}
    \begin{aligned}
        -\frac{d}{dx}\left(a(x,\bm{Y})\frac{du}{dx}\right) &= 1,
        \quad x \in \mathcal{D}=(0,1) \\
        u(0,\bm{Y})&=u(1,\bm{Y})=0.
    \end{aligned}
\end{equation}
The stochastic diffusion coefficient $a(x,\bm{Y})$ is given by the expansion
\begin{equation}
    a(x,\bm{Y})=\bar{a}+\sigma \sum_{i=1}^d\sqrt{\lambda_i}\phi_i(x)Y_i,
\end{equation}
where $\bar{a}$ is constant and $\sigma$ controls the magnitude of fluctuations from the mean. Here, $\{\lambda_i\}_{i=1}^d$ and $\{\phi_i(x)\}_{i=1}^d$ are, respectively, the $d$ largest eigenvalues and corresponding eigenfunctions of the Gaussian covariance kernel
\begin{equation}
    C(x_1,x_2) = \exp\left(-\frac{(x_1-x_2)^2}{L^2}\right),
\end{equation}
where $L$ is the correlation length. The random vector $\bm{Y}=(Y_1,\dots,Y_d)$ is assumed to be uniformly distributed on $[-1,1]^d$, i.e., $Y_i \sim U(-1,1)$ for $i=1,..,d$.

We numerically calculated the solution to  the 1D elliptic equation (\ref{eq:ex1}) at $d=14$, $L=1/5$, $\bar{a}=0.1$, and $\sigma=0.03$. These parameter values guarantee that all realizations of $a(x,\bm{Y})$ are strictly positive on $\mathcal{D}$. The solution to the PDE is calculated using the Finite Element Method with quadratic elements. A mesh convergence analysis is performed to ensure that spatial discretization errors are insignificant.

Our QoI is the value of $u$ at the center of the 1D domain, i.e. $u(0.5,\bm{Y})$. We approximate the QoI using a PC expansion containing Legendre polynomials up to degree $p=3$. Using (\ref{eq:P}) this implies that the generative function $G:\mathbb{R}^k \rightarrow \mathbb{R}^P$ maps from a $k=2d+1=29$ dimensional space to a $P=680$ dimensional space (i.e., there are 680 coefficients in the PC expansion). For this example, we examine how the different methods perform as we increase the sample size $N$ from 30 to 320. We use $N_{ls}=40000$ points to find the least squares coefficients and $N_{te}=1000$ testing points to calculate the relative reconstruction error.

We find that at a low sample sizes ($N=30,40$), GenMod consistently outperforms both OMP and IRW-Lasso (see Figure~\ref{fig:data1-a} and \ref{fig:data1-b}). At $N=30$, GenMod outperforms OMP and IRW-Lasso for all 50 sample replications (see Figure~\ref{fig:data1-b}, bottom row) and at $N=40$, GenMod outperforms OMP and IRW-Lasso in all but one or two of the 50 sample replications. Additionally, at low sample sizes (i.e., $N=30,40,80$), GenMod has less variability in the error results compared to both OMP and IRW-Lasso (see Figure~\ref{fig:data1-b}, top row), demonstrating the consistency of the GenMod approach.

When comparing the GenMod and the GenMod-NoSparse methods (see Figure~\ref{fig:data1-b}, middle row), we find that at low values of $N$ the two methods perform similarly (i.e., $N=30,40$). However, as $N$ increases, we find that the error of GenMod-NoSparse plateaus, while the error of GenMod continues to decrease. This is as expected since the number of parameters in the GenMod-NoSparse method does not increase with $N$.

\begin{figure}
   \centering
   \includegraphics[]{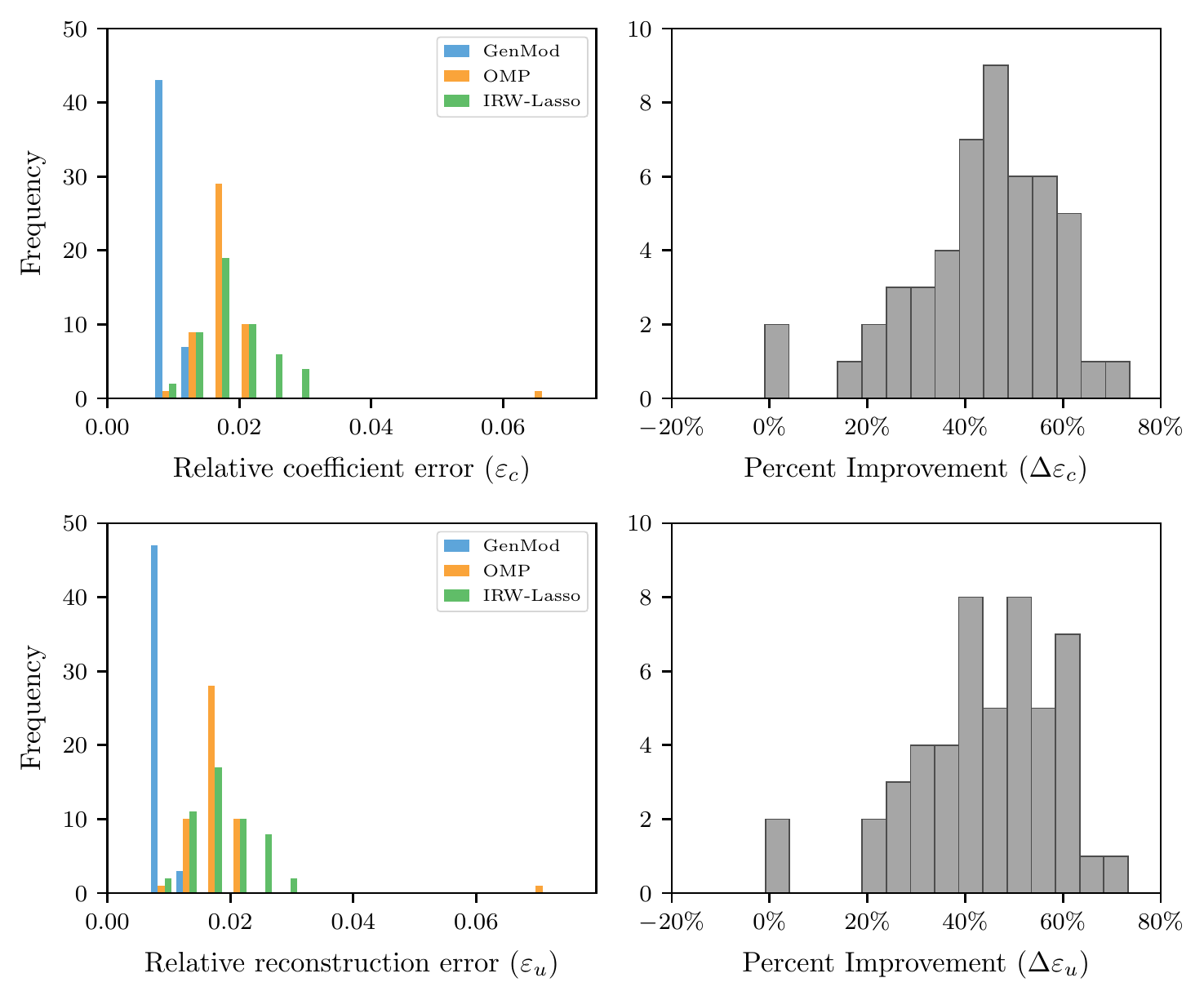}
   \caption{GenMod reduces the coefficient (top row) and reconstruction errors (bottom row) when compared with OMP and IRW-Lasso for Example 1, the 1D elliptic equation. The plots on the right show the percent improvement of GenMod compared with IRW-Lasso. Results are for 50 independent sample replications that used $N=40$ points for training and $N_{te}=1000$ points for testing.}
   \label{fig:data1-a}
\end{figure}

\begin{figure}
    \centering
    \includegraphics[]{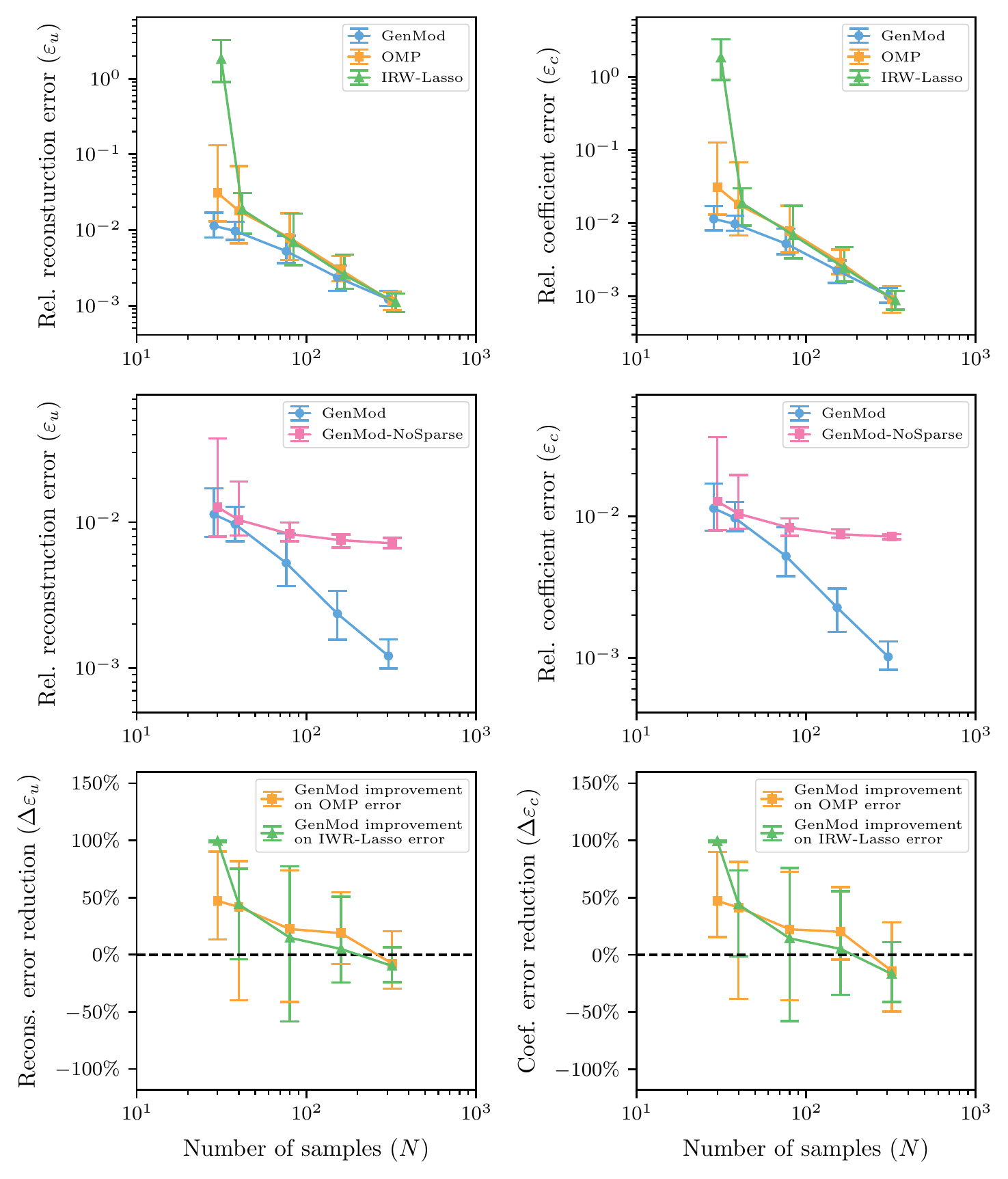}
    \caption{Coefficient and reconstruction errors as a function of sample size $N$ for Example~1, the 1D elliptic equation. Error bars represent the range of results for 50 independent sample replications at each value of $N$. For each replication, $N_{te}=1000$ testing points were used to calculate the relative reconstruction error. Results are given for $N=30,40,80,160,320$. For the distribution of results at $N=40$ see Figure~\ref{fig:data1-a}.}
    \label{fig:data1-b}
\end{figure}

\subsection{Example 2 and 3: Heat driven cavity flows}\label{sec:Ex23}

\begin{figure}
    \centering
    \newcommand{\diagramSize}{5}
    \begin{tikzpicture}
        \draw[->] (-3,0) -- (-3,1) node[above] {$x_2$};
        \draw[->] (-3,0) -- (-2,0) node[right] {$x_1$};

        \filldraw (0,0) circle (1pt) node[below] {\footnotesize (0,0)};
        \filldraw (\diagramSize,\diagramSize) circle (1pt) node[above] {\footnotesize (1,1)};

        \draw[ultra thick,red]  (0,0) -- node[above,rotate=90] {Hot wall} ++(0,\diagramSize);
        \draw[ultra thick,blue]  (\diagramSize,0) -- node[below,rotate=90] {Cold wall} ++(0,\diagramSize);
        \draw[thick]  (0,0) -- node[below] {$\frac{\partial T}{\partial x_2}=0$} ++(\diagramSize,0);
        \draw[thick]  (0,\diagramSize) -- node[above] {$\frac{\partial T}{\partial x_2}=0$} ++(\diagramSize,0);

        \coordinate (A) at (\diagramSize/4,\diagramSize/4);
        \coordinate (B) at (2,2.5);

        \filldraw (A) circle (2pt) node[below] {\footnotesize (0.25,0.25)};
        \draw[-{angle 60},thin] (A) -- node[left] {$\bm{v}$} ++(1,1.5);
        \draw[-{latex},ultra thick] (\diagramSize/4+1,\diagramSize/4) -- node[right] {$v_2$ {\footnotesize $\leftarrow$ Ex~2 QoI}} ++(0,1.5);
        \draw[-{angle 60},thin] (A) -- node[above] {} ++(1,0);

        \filldraw (0,\diagramSize/2) circle (2pt) node[above right,text width=2cm] {{\footnotesize Ex~3 QoI\vspace{-.2cm}
        \hspace{2cm} $\text{   }\downarrow$ \hspace{1cm}}$\partial \Theta/\partial x_1$};
        \draw[-{latex},ultra thick] (0,\diagramSize/2) -- (1.5,\diagramSize/2);

        \draw[blue] (\diagramSize + 1.2,0) to [ curve through={(\diagramSize + 1.4,.2*\diagramSize) . . (\diagramSize + 1,.5*\diagramSize) . . (\diagramSize + 1.4,.6*\diagramSize) . . (\diagramSize + 1,.7*\diagramSize)}] (\diagramSize+1.2,\diagramSize);

        \draw[blue] (\diagramSize + 1,0) to [ curve through={(\diagramSize + .8,0.2*\diagramSize) . . (\diagramSize + 1.6,.6*\diagramSize)}] (\diagramSize + .8,\diagramSize);

        \node[right,blue] at (\diagramSize + 2,\diagramSize/2) {$T_c(x_1=1,x_2,\bm{Y})$};
    \end{tikzpicture}
    \caption{Diagram of the heat driven cavity flow problem. In Example 2 the QoI is the upward velocity $v_2$ at (0.25,0.25). In Example 3 the QoI is the heat flux at the middle of the hot wall.}
    \label{fig:diagram}
\end{figure}
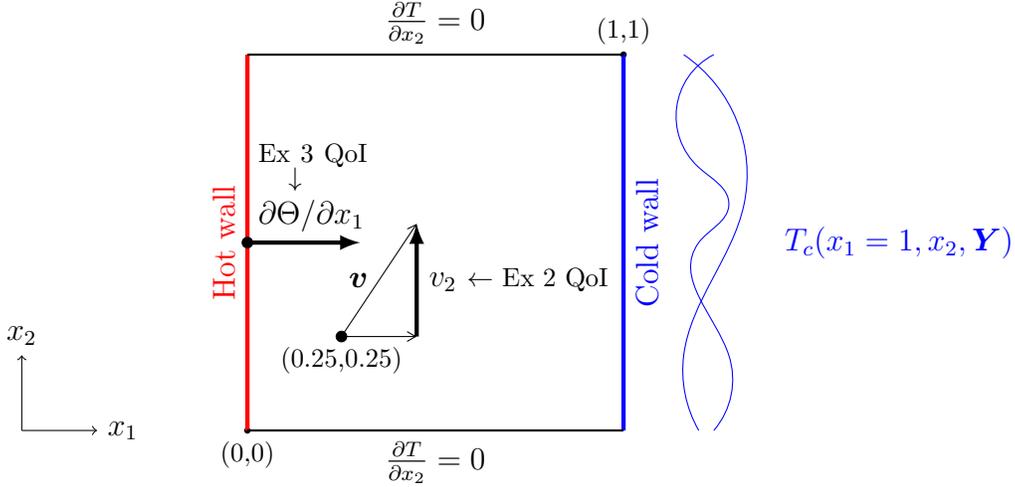

For the next two examples, we consider a 2D heat-driven square cavity flow problem (Figure~\ref{fig:diagram}). This example has been considered previously in \cite{LeMatre2002,Peng2014,Hampton2015,Fairbanks2017,Hampton2018,Hampton2018a}. The left vertical wall has a either a deterministic (Example~2) or random (Example~3) constant temperature $T_h$. For both examples, the right vertical wall has a stochastic, spatially-varying temperature $T_c(x_1=1,x_2,\bm{Y})$ with constant mean $\bar{T}_c$. The top and bottom walls are assumed to be adiabatic, i.e. $\frac{\partial T}{\partial x_2}(x_1,x_2=0,\bm{Y}) = \frac{\partial T}{\partial x_2}(x_1,x_2=1,\bm{Y}) = 0$. Under the assumption of small temperature differences, i.e., the Boussinesq approximation, the governing equations that determine the velocity vector field $\bm{v}=(v_1,v_2)$, the normalized temperature $\Theta$, and the pressure $p$ across the domain are
\begin{equation}
    \begin{aligned}
        \frac{\partial \bm{v}}{\partial t} + \bm{v} \cdot \nabla \bm{v}
        &= -\nabla p + \frac{\text{Pr}}{\sqrt{\text{Ra}}}\nabla^2 \bm{v}
            + \text{Pr} \Theta \bm{e}_{2} \\
        \nabla \cdot \bm{v}
        &= 0 \\
        \frac{\partial \Theta}{\partial t} + \nabla \cdot (\bm{v} \Theta )
        &= \frac{1}{\sqrt{\text{Ra}}}\nabla^2 \Theta,
    \end{aligned}
\end{equation}
where $\bm{e}_{2}$ is the unit vector in the $x_2$ direction, i.e. $\bm{e}_2 = (0,1)$, and $t$ is time. The normalized temperature $\Theta$ is related to the absolute temperature $T$ as follows
\begin{equation}
    \Theta = \frac{T-T_{ref}}{T_h-\bar{T}_c},
\end{equation}
where $T_{ref}$ is the average temperature of the two walls, i.e. $T_{ref} = (T_h + \bar{T}_c)/2$.
This implies the normalized hot and mean cold wall temperatures are $\Theta_h = 0.5$ and $\bar{\Theta}_c = -0.5$, respectively. The dimensionless Prandtl and Rayleigh numbers are defined, respectively, as $\text{Pr}=\nu/\alpha$ and $\text{Ra}=g\tau(T_{h}-\bar{T}_c)L^3/(\nu\alpha)$. Here, $\nu$ is the viscosity, $\alpha$ is thermal diffusivity, $g$ is gravitational acceleration, $\tau$ is the coefficient of thermal expansion, and $L$ is the length of the cavity. For more information on these constants and how they relate to the physical properties of the system see \cite{LeMatre2002}.

On the cold wall, we apply a temperature distribution with stochastic fluctuations as follows
\begin{equation}\label{eq:Ex3-Tc}
    T_c(x_1 = 1,x_2,\bm{Y})
    = \bar{T}_c + \sigma_T\sum_{i=1}^{d_T} \sqrt{\lambda_i}\phi_i(x_2)Y_i,
\end{equation}
or, alternatively, in normalized form,
\begin{equation}\label{eq:Ex2-Thetac}
    \Theta_c(x_1 = 1,x_2,\bm{Y})
    = \bar{\Theta}_c
        + \sigma_{\Theta}\sum_{i=1}^{d_T}\sqrt{\lambda_i}\phi_i(x_2)Y_i,
\end{equation}
where $\sigma_{\Theta} = \sigma_T/(T_h-\bar{T}_c)$. Here, $\sigma_T$ controls the magnitude of the temperature fluctuations along the cold wall, and $\{\lambda_i\}_{i=1}^d$ and $\{\phi_i(x)\}_{i=1}^d$ are, respectively, the $d_T$ largest eigenvalues and corresponding eigenfunctions of the exponential covariance kernel
\begin{equation}
    C(x_1,x_2) = \exp\left(-\frac{|x_1-x_2|}{L^2}\right),
\end{equation}
where $L$ is the correlation length.

Using this physical model as a starting point, we next present details and results for the two examples considered. Note that the QoI and sources of uncertainty differ between these two examples.

\subsubsection{Example 2}\label{sec:Ex2}

In this example, the random input space is composed of the parameters $Y_i$ for $i=1,..,d_T$, which determine the cold wall temperature as given by (\ref{eq:Ex2-Thetac}). We set $d=d_T=20$, $L=1/21$, $\sigma_{\Theta}=11/100$, $\text{Ra}=10^6$ and $\text{Pr}=0.71$, and, by definition we have that $\bar{\Theta}_c=-0.5$ and $\Theta_h=0.5$. Notice that we are setting parameters for the normalized version of the system, as the absolute temperature is not a source of uncertainty.

We will consider polynomial chaos expansions up to order $p=3$. This implies that the generative function $G:\mathbb{R}^k \rightarrow \mathbb{R}^P$ maps from a $k=2d+1=41$ dimensional space to a $P=1771$ dimensional space (i.e., there are 1771 coefficients in the PC expansion). We use $N=60$ points to train the model, $N_{ls}=30000$ points to find the least squares coefficients, and $N_{te}=1000$ points to test and compare the performance of the different methods.

Our results demonstrate that GenMod consistently outperforms both OMP and IRW-Lasso (Figure~\ref{fig:data2-a}). In all but 2 or 4 of the 50 sample replications GenMod decreases the coefficient error compared with OMP and IRW-Lasso, respectively. Similarly, GenMod decreased the reconstruction error in all but 4 of the 50 sample replications for both OMP and IRW-Lasso.

We explored this example further by comparing the absolute values of the coefficients between the three methods (for an example of one replication see Figure~\ref{fig:data2-b}, top row). Lasso often finds nonzero coefficients for polynomials with larger degree that, in reality, do not have significant contributions (e.g., consider the nonzero coefficients with indices over $100$). Meanwhile, OMP often leads to fewer nonzero coefficients and hence misses some coefficients that GenMod accurately predicts (e.g., consider the second coefficient, $\hat{c}_2$). By biasing the expansion towards having decaying coefficients and not enforcing sparsity, GenMod correctly determines that many of the coefficients corresponding to higher degree polynomials are insignificant.

We also examine how well the GenMod algorithm predicts the coefficient signs. For the example replication shown in Figure~\ref{fig:data2-b}, bottom row, we find that, for all coefficients with magnitudes greater than $10^{-5}$, we have correctly predicted the sign. We additionally examine the role of the sparse vector in flipping the signs of the coefficients. We find that there were on average 4.8 sign flips per replication of which 68.1\% were in the correct direction. The example shown in Figure~\ref{fig:data2-b} was chosen to accurately capture the proportion of correct to incorrect sign flips.

\begin{figure}
   \includegraphics{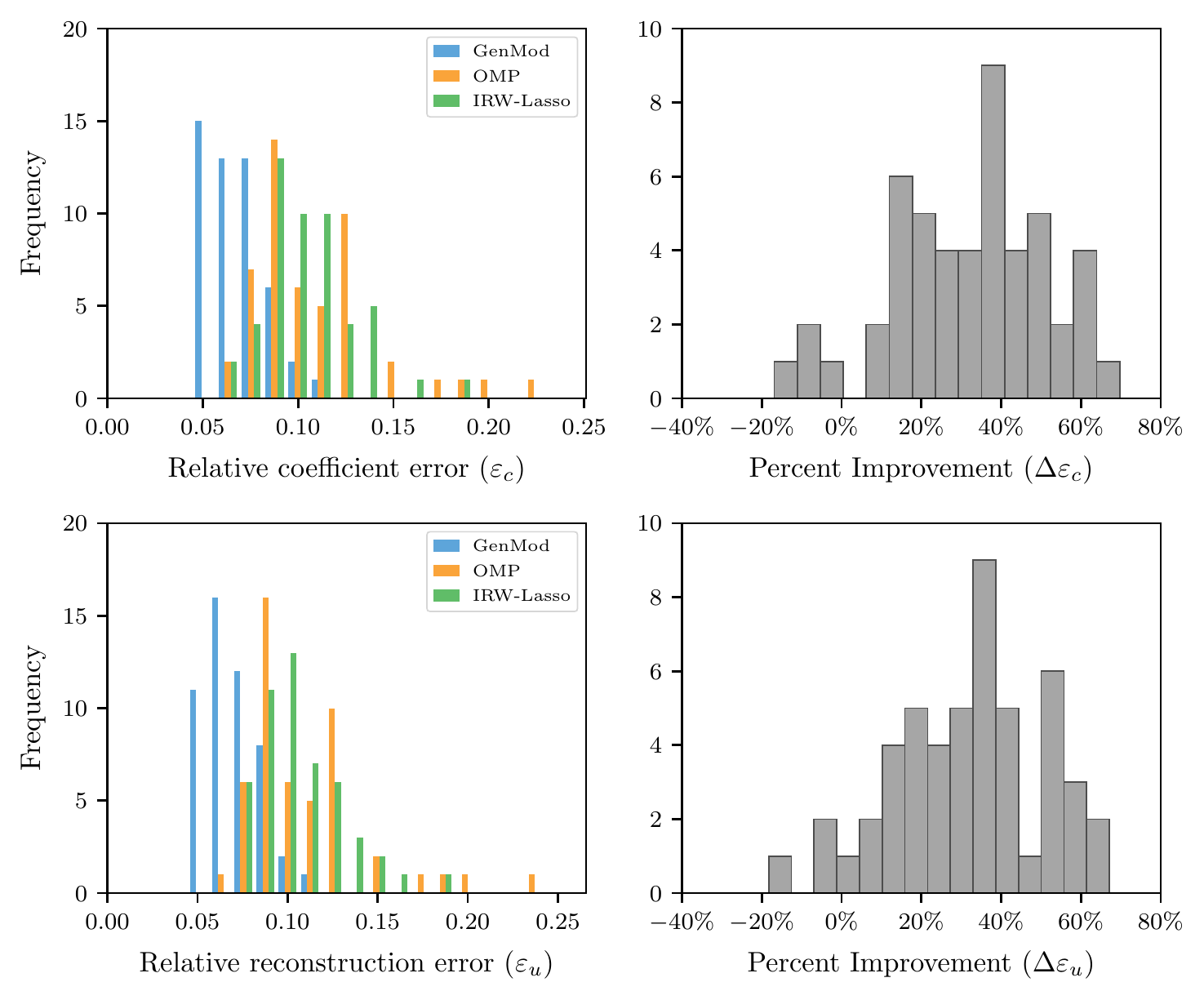}
   \caption{Relative coefficient error (top row) and relative reconstruction error (bottom row) of testing data for Example 2. Results are for 50 independent sample replications that used $N=60$ points for training and $N_{te}=1000$  testing points for calculating the reconstruction error. The right column shows the percent improvement of GenMod compared with IRW-Lasso.}
   \label{fig:data2-a}
\end{figure}

\begin{figure}
    \centering
    \includegraphics{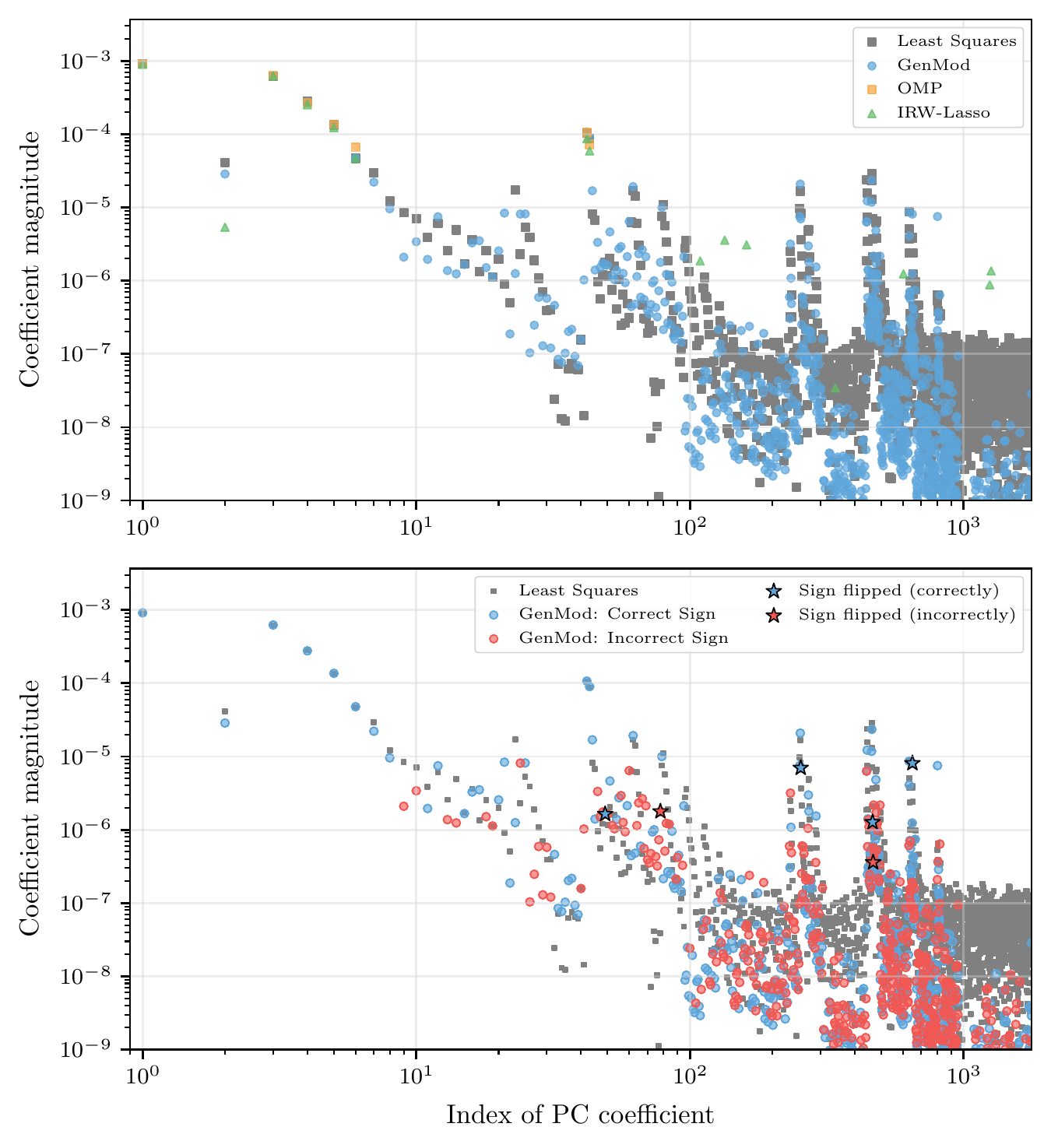}
    \caption{Optimized coefficients for one of the replications in Example 2. The top plot compares the coefficient values obtained for the different optimization methods. The bottom plot shows GenMod coefficients with correct/incorrect signs compared with the least squares optimization. Coefficients with signs that flipped due to the addition of the sparse vector are highlighted with a star.}
    \label{fig:data2-b}
\end{figure}

\subsubsection{Example 3}\label{sec:Ex3}

In this example we increase the size of the random input space by including the hot wall temperature $T_{h}$ and the viscosity $\nu$ as stochastic parameters. We also increase the number of random parameters $d_T$ used to define variations in the cold wall temperature. This example was previously considered in \cite{Hampton2018}. We set $d_T=50$, $\sigma_T=2$, $g=10$, $L=1$, $\tau=0.5$, $\text{Pr}=0.71$, and $\bar{T}_c=100$. Note that here, unlike in Example 2, we are defining the absolute temperature parameters, since one of these parameters $T_h$ is a source of uncertainty. The value of the hot wall temperature is assumed to be uniformly distributed over [105,109] and the value of the viscosity $\nu$ is assumed to be uniformly distributed over [0.004,.01]. Note that these variables can be transformed into corresponding variables $\tilde{T}_h$ and $\tilde{\nu}$  that are uniformly distributed on $[-1,1]$. We then have that the random input vector is $Y=(Y_1,\dots,Y_{d_T}, \tilde{T}_h, \tilde{\nu})$ with size $d=52$

In this example we consider the polynomial chaos expansion up to order $p=2$, implying that the number of coefficients is $P=1431$. Additionally, we have that $k=2d+1=105$. In this example we consider a sample of size $N=50$ and use $N_{te}=250$ points for testing. Note that for this scenario, the sample size $N$ is less than the dimension of the latent space $k$ of the Generative model.

In Example 3, GenMod leads to a decreased variance of the relative reconstruction error (Figure~{\ref{fig:data3}}). More specifically, the mean and standard deviation of the three methods are: $\bar{\varepsilon}_u(\text{GenMod}) = 0.009 \pm 0.0027$, $\bar{\varepsilon}_u(\text{OMP}) = 0.010 \pm 0.0042$, $\bar{\varepsilon}_u(\text{IRW-Lasso})=0.011 \pm 0.0053$. This implies that, although the sparsity promoting methods outperform GenMod for some sample replications (Figure~{\ref{fig:data3}}, left plot), GenMod is more likely to decrease the error below a maximum tolerance level. Additionally, this example shows that GenMod still performs well when the maximum order of the PC expansion, $p$, is small and the number of samples $N$ is less than the dimension of the latent space $k$.

\begin{figure}
    \centering
   \includegraphics{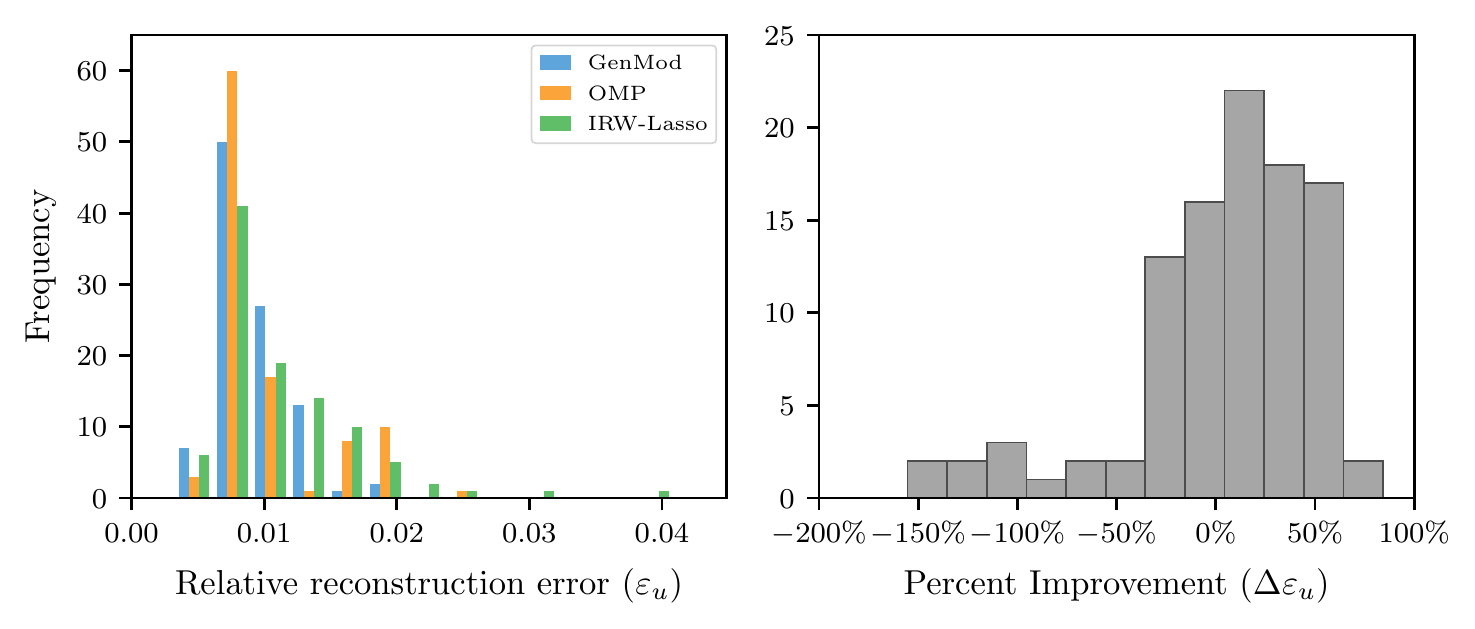}
   \caption{Relative reconstruction error of testing data for Example 3. Results are for 100 independent sample replications that used $N=50$ points for training and $N_{te}=250$ points for validation. The percent improvement of GenMod is shown with respect to IRW-Lasso.}
   \label{fig:data3}
\end{figure}

\section{Discussion}\label{sec:discussion}

We have shown that a nonlinear generative model can improve the prediction of stochastic PDE solutions at small sample sizes, which is the sampling regime of interest in this work. In particular, this work is focused on situations were the number of solution evaluations is roughly equivalent to the number of independent sources of uncertainty. These results have the potential to be further improved by considering alternative generative models and other optimization algorithms.

The generative model we used, see (\ref{eq:G}), is motivated by PDE theory and does not require initial training. Our choice of generative model could be modified in multiple ways. For example, we may explore alternative functions that still encourage exponential decay of the coefficients. The resulting model would still have the quality that initial training is not required. Alternatively, we may consider a generative model that does require initial training. This approach is challenging in systems where measurements are extremely limited. However, by employing ideas from transfer learning or multi-fidelity modeling it might be possible to instead consider a generative model that, for example, is a neural network trained on lower fidelity data. Similar to \cite{Bora2017} our theoretical results apply to a range of generative models and, therefore, are applicable to these alternative scenarios.

We developed the GenMod algorithm (Algorithm~\ref{alg}) to solve the optimization problem given by (\ref{opt:1}), and found that, in practice, this approach performed well. However, this algorithm is likely sub-optimal since it sets the sign of the coefficients independently of the other parameters in the system, i.e., the input to the generative model and the sparse vector. Future work will involve refining this algorithm so that the variables are optimized simultaneously. As another possible improvement, we may develop a way to use the exponential decay function as only a bound on the coefficient magnitude. This would more closely reflect the known PDE theory. For example, at the end of the GenMod algorithm, we may include a relaxation step that allows the coefficients to decrease in magnitude. Determining whether this approach would lead to accurate recovery is a topic of future work.

The generative model approach outperformed OMP and IRW-Lasso for three example physical systems. Notably, alternative sparsity promoting algorithms exist that take into account some of the underlying PC coefficient structure \cite{Baraniuk2010,Peng2014,Chkifa2017}. In future work a more thorough comparative analysis of our approach with these alternative methods is warranted. However, our approach is unique in that we use a nonlinear function to estimate the coefficients and explicitly incorporate the coefficient decay into this function. This work provides a novel method for finding the coefficients in PC expansions for stochastic PDEs.

\section*{Acknowledgment}

The authors acknowledge support by the Department of Energy, National Nuclear Security Administration, Predictive Science Academic Alliance Program (PSAAP) under Award Number DE-NA0003962. The work of AD was also supported by the AFOSR grant FA9550-20-1-0138.

\bibliographystyle{apalike}
\bibliography{library}

\appendix

\section{Additional algorithmic detail}\label{sec:appA}

\subsection{Detail of Adam optimization solver}\label{sec:appA-Adam}

Within the GenMod algorithm (see Algorithm \ref{alg}), we use Adam optimization. For completeness, we present the algorithm for a step of Adam, i.e. AdamStep, in  Algorithm~\ref{alg:AdamStep} (see \cite{Kingma2015} for a more thorough description). In this algorithm we use the analytical gradient, with respect to $\bm{z}$ of the loss function $L(\bm{z},\bm{\nu},\bm{\zeta},\Phi,\bm{u})$ defined by (\ref{eq:L}) where the generative model $G(\bm{z})$ is defined by (\ref{eq:G}).
\begin{algorithm}
    \DontPrintSemicolon
    \SetNoFillComment
    \caption{AdamStep($\bm{z}^{(t-1)}$,$\bm{\nu}$,$\bm{\zeta}$,$\Psi$,$\bm{u}$,$\bm{m}^{(t-1)}$,$\bm{v}^{(t-1)}$,$t$).}
    \label{alg:AdamStep}
    $\bm{g} = \nabla_{\bm{z}} L(\bm{z}^{(t-1)},\bm{\nu},\bm{\zeta},\Psi,\bm{u})$ \;
    $\bm{m}^{(t)} = \beta_1 \bm{m}^{(t-1)} + (1-\beta_1) \bm{g}$ \;
    $\bm{v}^{(t)} = \beta_2 \bm{v}^{(t-1)} + (1-\beta_2) \bm{g}^2$ \;
    $\hat{\bm{m}} = \frac{\bm{m}^{(t)}}{1-\beta_1^t}$ \;
    $\hat{\bm{v}} = \frac{\bm{v}^{(t)}}{1-\beta_2^t}$ \;
    $\bm{z}^{(t)} = \bm{z}^{(t-1)} - \alpha\frac{\hat{\bm{m}}}{\sqrt{\hat{\bm{v}}}+\epsilon}$ \;
    \Return{$\bm{z}^{(t)}$,$\bm{m}^{(t)}$,$\bm{v}^{(t)}$}
    \algorithmfootnote{Adapted from Algorithm 1 of \cite{Kingma2015}. All operations on vectors are element-wise. Throughout the paper we set $\beta_1=0.9$, $\beta_2=0.999$, $\epsilon=10^{-8}$, the values suggested by \cite{Kingma2015}, and the loss function $L$ is given by (\ref{eq:L}).}
\end{algorithm}

\subsection{Detail of iteratively-reweighted Lasso solver}\label{sec:appA-IRW-Lasso}

We next provide the steps of the iteratively-reweighted (IRW) Lasso solver used in this study; see also Section~\ref{sec:methods-las-omp}.

\begin{algorithm}
    \DontPrintSemicolon
    \SetNoFillComment
    \caption{IRW-Lasso($\Phi$,$\bm{u}$).} %\todo{I added this algorithm for completeness. I could move to Appendix if you think that would be better.}}
    \label{alg:IRW-Lasso}
    $\tau = \tau_0$ \;
    $\bm{c}^0$ =  LassoWithStErRule($\Psi$, $\bm{u}$; $\bm{\lambda}$) \;
    $k = 1$ \;
    \While{not Converged} {
        $\bm{x}$ = LassoWithStErRule($\Psi (W(\bm{c}^{(k-1)}))^{-1}$, $\bm{u}$; $\bm{\lambda}$)\;
        $\bm{c}^{(k)} = (W(\bm{c}^{(k)}))^{-1}\bm{x}$ \;
        \If{$\|c^{(k)} - c^{(k-1)}\|_2 < \varepsilon$} {
            Converged = True
            }
        \If{$k > \text{max\_iter}$}{
            $k=0$ \;
            $\tau$ = $10\tau$ \;
        }
        \If{$\tau > \tau_{max}$}{
            Break
        }
        $k=k+1$
    }
    \Return{$\bm{c}^{(k)}$}
    \algorithmfootnote{We set $\varepsilon=10^{-6}$ and the vector $\bm{\lambda}$ in the LassoWithStErRule() contains 100 values of which are generated automatically by the scikit learn LassoCV function.}
\end{algorithm}

\section{Additional proofs}\label{sec:appB}

\subsection{Oblivious subspace Embedding}\label{sec:appB-OSE}
In order to prove results when a sparse vector is added to the solution we will need to use a relationship between the JL distributional property and oblivious subspace embeddings. The following claim is a modified version of Lemma 10 from \cite{Sarlos2006}.
\begin{appclaim}\label{claim:oblivious}
    Let $\epsilon$ be such that $0<\epsilon<1$ and suppose a matrix $\Phi \in \mathbb{N\times P}$ satisfies JL($\epsilon$) with probability $p$.
    Let $V$ be an arbitrary $\ell$ dimensional subspace of $\mathbb{R}^P$. Then, for any $\bm{v} \in V$
    \begin{equation}
        \mathbb{P}\left(\left| \|\Phi \bm{v}\|_2^2 - \|\bm{v}\|_2^2\right| \ge 4 \epsilon\|\bm{v}\|_2^2 \right) \le p\left(\frac{\ell}{\epsilon}\right)^{\ell}.
    \end{equation}
\end{appclaim}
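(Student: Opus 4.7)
The plan is to invoke the standard $\epsilon$-net argument that promotes a pointwise JL guarantee into a uniform guarantee over a low-dimensional subspace; this is the classical route for oblivious subspace embeddings. First I would reduce to the unit sphere of $V$: since the target inequality $|\|\Phi \bm{v}\|_2^2 - \|\bm{v}\|_2^2| \le 4\epsilon\|\bm{v}\|_2^2$ is scale-invariant in $\bm{v}$ (both sides are homogeneous of degree two), it suffices to prove the uniform bound over $S_V := \{\bm{v} \in V : \|\bm{v}\|_2 = 1\}$, which lies in an $\ell$-dimensional subspace.

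Next I would construct an $\epsilon$-net $\mathcal{N} \subset S_V$ in the Euclidean metric, meaning a finite set such that every $\bm{v} \in S_V$ has some $\bm{v}_0 \in \mathcal{N}$ with $\|\bm{v} - \bm{v}_0\|_2 \le \epsilon$. The standard volume argument on the $\ell$-dimensional subspace $V$ gives $|\mathcal{N}| \le (1 + 2/\epsilon)^\ell$, which is dominated by $(\ell/\epsilon)^\ell$ under mild conditions on $\ell$ (the case $\ell \ge 3$ suffices for any $\epsilon \le 1$; for very small $\ell$, a finer net at mesh $\epsilon/c$ inflates $|\mathcal{N}|$ only by a constant-to-the-$\ell$ factor that is easily absorbed). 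Applying the hypothesis JL$(\epsilon)$ pointwise to each element of $\mathcal{N}$ and taking a union bound over at most $(\ell/\epsilon)^\ell$ events shows that, with probability at least $1 - p(\ell/\epsilon)^\ell$, the inequality $(1-\epsilon)\|\bm{v}_0\|_2^2 \le \|\Phi \bm{v}_0\|_2^2 \le (1+\epsilon)\|\bm{v}_0\|_2^2$ holds simultaneously for every $\bm{v}_0 \in \mathcal{N}$.

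The final step extends this from $\mathcal{N}$ to all of $S_V$. Let $M := \sup_{\bm{v} \in S_V}\bigl|\|\Phi \bm{v}\|_2^2 - 1\bigr|$; the goal is to show $M \le 4\epsilon$ on the good event. For any $\bm{v} \in S_V$, pick $\bm{v}_0 \in \mathcal{N}$ with $\|\bm{v} - \bm{v}_0\|_2 \le \epsilon$ and expand
$$\|\Phi \bm{v}\|_2^2 = \|\Phi \bm{v}_0\|_2^2 + 2\langle \Phi \bm{v}_0, \Phi(\bm{v}-\bm{v}_0)\rangle + \|\Phi(\bm{v}-\bm{v}_0)\|_2^2.$$
The first term is in $[1-\epsilon,1+\epsilon]$ by the net bound. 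Since $\bm{v} - \bm{v}_0 \in V$ with norm at most $\epsilon$, applying the definition of $M$ to the unit vector $(\bm{v}-\bm{v}_0)/\|\bm{v}-\bm{v}_0\|_2$ and rescaling gives $\|\Phi(\bm{v}-\bm{v}_0)\|_2^2 \le (1+M)\epsilon^2$. Cauchy--Schwarz on the cross term yields $|2\langle \Phi \bm{v}_0, \Phi(\bm{v}-\bm{v}_0)\rangle| \le 2\epsilon\sqrt{(1+\epsilon)(1+M)}$. Taking the supremum over $\bm{v}$ produces a self-referential inequality $M \le \epsilon + 2\epsilon\sqrt{(1+\epsilon)(1+M)} + (1+M)\epsilon^2$, which for $\epsilon < 1$ rearranges to $M \le 4\epsilon$.

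The main obstacle is closing the self-referential bound for $M$ with the clean constant $4$. A naive expansion of the three terms typically yields a constant slightly larger than $4$, so minor sharpening is required: either use $\epsilon < 1$ aggressively to collapse higher-order terms in the resulting quadratic, or shrink the net mesh to $\epsilon/2$ (inflating $|\mathcal{N}|$ by $2^\ell$, which is absorbed into $(\ell/\epsilon)^\ell$) so that the extension estimates improve enough to recover the constant $4$ comfortably. Apart from this bookkeeping, every step is a direct application of standard covering-number estimates and the stated JL hypothesis, so there are no structural obstacles beyond tracking constants.
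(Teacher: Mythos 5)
The paper never proves this claim: it is stated as a modified version of Lemma~10 of \cite{Sarlos2006} and simply cited, so there is no internal proof to compare against. Your strategy --- restrict to the unit sphere of $V$, cover it with a net, apply the pointwise JL hypothesis to the net via a union bound, then extend to the whole sphere --- is the standard argument underlying that citation, and your reading of $p$ as the per-vector failure probability matches how the claim is actually invoked in the proof of Lemma~\ref{lemma:bora-mod-2}. (Proving the uniform-over-$V$ version is fine; it is stronger than the literal pointwise statement and is what the application needs.)

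The genuine gap is that, with these particular constants, the bookkeeping \emph{is} the content of the claim, and your proposal does not close it. With an $\epsilon$-mesh net the self-referential inequality $M \le \epsilon + 2\epsilon\sqrt{(1+\epsilon)(1+M)} + (1+M)\epsilon^2$ does not rearrange to $M\le 4\epsilon$ on the whole range $0<\epsilon<1$: plugging in $M=4\epsilon$ shows the fixed point already exceeds $4\epsilon$ once $\epsilon\gtrsim 1/6$ (e.g.\ at $\epsilon=0.2$ the right-hand side at $M=0.8$ is about $0.86$), and as $\epsilon\to 1$ the admissible $M$ blows up (at $\epsilon=0.99$, $M=100$ still satisfies the inequality), so ``using $\epsilon<1$ aggressively'' cannot rescue the constant $4$. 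The alternative patch of refining the mesh to $\epsilon/2$ collides with the cardinality requirement: $(1+4/\epsilon)^{\ell}\le(\ell/\epsilon)^{\ell}$ forces $\ell\ge 5$, so the stated prefactor $(\ell/\epsilon)^{\ell}$ is not recovered for small $\ell$; and your remark that small $\ell$ can be handled by a \emph{finer} net goes the wrong way, since refining a net only increases its size. Even with mesh $\epsilon/2$ the distortion bound still fails for $\epsilon\gtrsim 0.6$. A cleaner closure of the extension step is the linear (first-power) bound $a:=\sup_{v\in S_V}\|\Phi v\|_2\le\sqrt{1+\epsilon}+a\delta$, hence $a\le\sqrt{1+\epsilon}/(1-\delta)$, together with $\|\Phi v\|_2\ge\sqrt{1-\epsilon}-a\delta$; with $\delta=\epsilon/4$ this does give the $4\epsilon$ distortion for every $\epsilon\in(0,1)$, but then the volumetric count $(1+8/\epsilon)^{\ell}$ exceeds $(\ell/\epsilon)^{\ell}$ unless $\ell\ge 9$, so matching the exact prefactor for all $\ell$ requires a different cover (e.g.\ the lattice-type construction in \cite{Sarlos2006}) rather than the standard volume bound. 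As written, your argument yields the qualitative statement with worse constants, not the claim as stated.
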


\subsection{Proof of Lemma~\ref{lemma:bora_8.2_modified}}

Let $M = M_0 \subseteq M_1 \subseteq M_2 \dots \subseteq M_{J}$ be a chain of epsilon nets of $\Omega$ such that $M_j$ is a $\frac{\delta_j}{L}$-net and $\delta_j = \frac{\delta_0}{2^j}$. We know that there exist a series of nets such that
\begin{equation}
    \log |M_j| \le k\log\left(\frac{4L r}{\delta_j}\right) \le jk + k\log\left(\frac{4L r}{\delta_0}\right).
\end{equation}
Let $N_i = G(M_i)$. Since $G$ is $L$-Lipschitz in $B^k(r)$, the $N_i$'s form a chain of epsilon nets such that $N_i$ is a $\delta_i$-net of $S=G(B^k(r))$ and $|N_i| = |M_i|$.

Pick $\bm{x} \in S$ and let $\bm{x}_0 = \bm{x}'$ be as given in the lemma statement. The exists a sequence of points $\bm{x}_j \in N_j$ for $j=0,\dots,J$ such that
\begin{align*}
    \bm{x} &= \bm{x}_0 + (\bm{x}_1 - \bm{x}_0) + (\bm{x}_2 - \bm{x}_1) + \dots + (\bm{x}_{J} - \bm{x}_{J-1}) \\
    \bm{x} - \bm{x}_0 &= \sum_{j=0}^{J-1}(\bm{x}_{j+1}-\bm{x}_j) + \bm{x}_f,
\end{align*}
where $\bm{x}_f = \bm{x}-\bm{x}_{J}$, $\|\bm{x}_{j+1}-\bm{x}_j\|<\delta_j$ for $j=1,\dots,J$, and $\|\bm{x}-\bm{x}_{J}\|<\delta_{J}$. Using this result we have that
\begin{equation}\label{eq:2terms}
    \|A(\bm{x}-\bm{x}')\| = \|A(\bm{x}-\bm{x}_0)\| \le \sum_{j=1}^{J-1} \|A(\bm{x}_{j+1}-\bm{x}_j)\| + \|A \bm{x}_f\|.
\end{equation}

Therefore, to prove the lemma we need to bound the two terms on the right hand side of (\ref{eq:2terms}) by $\mathcal{O}(\delta)$. First, we consider the $\|A \bm{x}_f\|$ term. Picking $J \ge \log(\|A\|)/\log(2)$, it follows that
\begin{equation}
    \|A \bm{x}_f \| \le \|A\| \|\bm{x}_f\| = \|A\| \frac{\delta_0}{2^{J}} \le \delta_0 = \delta.
\end{equation}

Next, we consider the summation series given on the right hand side of (\ref{eq:2terms}). For $j \in \{0,1,\dots,J-1\}$, let
\begin{equation}
    T_j = \{\bm{x}_{j+1}-\bm{x}_j \mid \bm{x}_{j+1} \in N_{j+1},\bm{x}_j\in N_j\}.
\end{equation}
The size of this set is bounded as follows, for $j=0,\dots,J-1$,
\begin{equation}\label{eq:Tmag}
\log(|T_j|) \le \log(|N_{j+1}|) + \log(|N_j|) \le 3jk + 2k\log\left(\frac{4L r}{\delta_0}\right).
\end{equation}

From the lemma statement we have that for $\bm{t} \in T_j$,
\begin{equation}
    \mathbb{P}\left(\|A \bm{t}\|_2^2 \ge (1+\epsilon_j)\|\bm{t}\|_2^2\right) \le 5e^{-f(N) \epsilon_j^2} = e^{-(f(N)+4jk)}.
\end{equation}
By union bounding over all $\bm{t} \in T_j$ for $j=1,\dots,J-1$ we have
\begin{equation}
    \mathbb{P}\left(\|\Phi \bm{t}\|_2^2 \le (1+\epsilon_j)\|\bm{t}\|_2^2, \forall \bm{t} \in \cup_{j=0}^{J-1} T_j \right) \ge 1 - \sum_{i=0}^{J-1} |T_j| e^{-(f(N)+4jk)}.
\end{equation}

If $f(N) \ge 3k\log\left(\frac{4L r}{\delta_0}\right)$, using (\ref{eq:Tmag}) we have that
\begin{equation}
\begin{aligned}
    \log\left(|T_j|e^{-(f(N)+4jk)}\right)
        &= \log(|T_j|) - f(N) - 4jk \\
        &\le 3jk +  2k\log\left(\frac{4L r}{\delta_0}\right) - f(N) - 4jk \\
        &\le -jk - \frac{f(N)}{3}
\end{aligned}
\end{equation}
and
\begin{equation}
    \sum_{j=0}^{J-1} |T_j|e^{-(f(N)+4jk)} \le e^{-\frac{f(N)}{3}}\sum_{j=0}^{J-1} e^{-jk} \le 2e^{-\frac{f(N)}{3}}.
\end{equation}

Therefore, with probability at least $1-2e^{-\frac{f(N)}{3}}$,
\begin{equation}
    \begin{aligned}
        \sum_{j=0}^{J-1} \|A (\bm{x}_{j+1}-\bm{x}_j)\|
            &\le \sum_{j=0}^{J-1}\sqrt{1+\epsilon_j}\|\bm{x}_{j+1}-\bm{x}_j\| \\
            &\le \sum_{j=0}^{J-1}(1+\epsilon_j)\delta_j \\
            &\le \sum_{j=0}^{J-1}\left(2+\frac{1}{f(N)}(\log 5 + 4jk)\right)\delta_j \\
            &\le \delta_0 \sum_{j=0}^{J-1} \frac{1}{2^j}\left(2+\frac{\log 5}{f(N)}\right) + \delta_0 \sum_{j=0}^{J-1}\frac{4jk}{2^j f(N)} \\
            &\le 2 \delta_0 \left(2+\frac{\log 5+4k}{f(N)}\right) \\
            &\le 2 \delta_0 \left(2+\frac{\log 5+4k}{3k\log\left(\frac{4Lr}{\delta_0}\right)}\right) \\
            &\le 6 \delta_0.
    \end{aligned}
\end{equation}
In the last inequality we assume that $4 L r/\delta_0 > 10$.

\subsection{Proof of Lemma~\ref{lemma:bora-mod-2}}

Pick $\bm{x},\bm{x}' \in S=\{G(\bm{z}) + \bm{\nu} \mid \bm{z} \in \Omega,\|\bm{\nu}\|_0 = \ell\}$ and note that by definition there is then a $\bm{z},\bm{z}' \in \Omega$ and a $\bm{\nu} \in \mathbb{R}^P$ such that $\|\bm{\nu}\|_0 = 2 \ell$ and $\bm{x}-\bm{x}' = G(\bm{z})-G(\bm{z}')+\bm{\nu}$.

We construct a $\delta/L$-net, $M$, on $\Omega$ such that
\begin{equation}
  \log|M|\le k\log\left(\frac{4L r}{\delta}\right).
\end{equation}
By Lipschitz continuity, $G(M)$ is a $\delta$-cover of $S=G(\Omega)$. Define
\begin{equation}
  T = \{G(\bm{z}_1)-G(\bm{z}_2) \mid \bm{z}_1,\bm{z}_2 \in M \}.
\end{equation}
The size of this set is bounded as follows
\begin{equation}
  \log |T| \le 2k\log\left(\frac{4 L r}{\delta}\right).
\end{equation}

For $\bm{z},\bm{z}'$ pick $\bm{z}_1,\bm{z}_2 \in M$ such that $G(\bm{z}_1)$, $G(\bm{z}_2)$ are $\delta$-close to $G(\bm{z})$ and $G(\bm{z}')$, respectively. We have that
\begin{equation}
  \|G(\bm{z}) - G(\bm{z}') + \bm{\nu}\| \le \|G(\bm{z}_1)-G(\bm{z}_2)+\bm{\nu}\| + 2\delta
\end{equation}
and, by Lemma~\ref{lemma:bora_8.2_modified}
\begin{equation}
  \|\Phi G(\bm{z}_1)- \Phi G(\bm{z}_2)+ \Phi \bm{\nu}\|\le \| \Phi G(\bm{z})- \Phi G(\bm{z}')+ \Phi \bm{\nu}\|+ 2 C \delta
\end{equation}
with probability at least $1-2e^{\frac{-f(N)}{3}}$.

Note that $\bm{y} = G(\bm{z}_1) - G(\bm{z}_2) + \bm{\nu}$ lies in a $2\ell +1$ dimensional subspace. Therefore, using the relationship between the JL property and oblivious subspace embeddings given by Claim~\ref{claim:oblivious}, we have that
\begin{equation}
  \label{eq:obv-bound}
  \mathbb{P}\left(\|\Phi \bm{y}\|_2^2 \le (1 - 4\alpha)\|\bm{y}\|_2^2\right) \le  5 \left(\frac{2 \ell + 1}{\alpha}\right)^{2 \ell + 1}e^{-f(N) \alpha^2}.
\end{equation}
Let $E$ denote that set of all possible values of $\bm{y}$. The size of this set is
\begin{equation}
  \log(|E|) \le \log(|T|) + \log\binom{P}{2\ell} \le 2k\log\left(\frac{4Lr}{\delta}\right) + 2 \ell\log\left(\frac{eP}{2 \ell}\right).
\end{equation}
Union bounding (\ref{eq:obv-bound}) over all elements in $E$, we have that
\begin{equation}
  \label{eq:adj_rq}
  \mathbb{P}\left(\|\Phi \bm{y}\|_2^2 \le (1 - 4\alpha)\|\bm{y}\|_2^2,\forall \bm{y} \in E\right) \le 5 |E| \left(\frac{2\ell + 1}{\alpha}\right)^{2\ell + 1}e^{-f(N) \alpha^2}.
\end{equation}
To bound this probability note that,
\begin{equation}
  \begin{aligned}
  &\log\left(|E| \left(\frac{2\ell + 1}{\alpha}\right)^{2\ell + 1}e^{-f(N) \alpha^4}\right) = \log(|E|) + (2\ell+1)\log\left(\frac{2\ell + 1}{\alpha}\right) - f(N)\alpha^2 \\
     &\le 2k \log \left(\frac{4L r}{\delta}\right) + 2\ell\log\left(\frac{eP}{2 \ell}\right) + (2\ell+1)\log\left(\frac{2\ell + 1}{\alpha}\right) - f(N)\alpha^2 \\
     &\le 2k \log \left(\frac{4L r}{\delta}\right) + (2\ell+1)\log\left(\frac{eP(2\ell+1)}{2 \ell \alpha}\right) - 3k\log\frac{4 L r}{\delta} - \frac{3}{2} (2\ell+1)\log\left(\frac{eP(2\ell+1)}{2\ell\alpha}\right)  \\
     &\le -k \log\frac{4 L r}{\delta} - \frac{1}{2} (2\ell+1)\log\left(\frac{eP(2\ell+1)}{2\ell\alpha}\right)  \\
     &= -\frac{\alpha^2}{3} f(N).
  \end{aligned}
\end{equation}
This gives us the following probability bound
\begin{equation}
  \mathbb{P}\left(\|\Phi \bm{y}\|_2^2 \le (1 - 4\alpha)\|\bm{y}\|_2^2,\forall \bm{y} \in E\right)\le 5e^{-\alpha^2 \frac{f(N)}{3}}.
\end{equation}
Taken together and noting that $0 < \alpha < 1$, this implies that with  probability at least $1 - 5e^{-\alpha^2 \frac{f(N)}{3}} - 2e^{-\frac{f(N)}{3}} \le 1 - 7e^{-\alpha^2\frac{f(N)}{3}}$,
\begin{equation}
\begin{aligned}
(1 - 4\alpha)\|G(\bm{z})-G(\bm{z}')+\bm{\nu}\| &\le (1 - 4 \alpha)\|G(\bm{z}_1)-G(\bm{z}_2)+\bm{\nu}\| + 2(1-4\alpha)\delta \\
  &\le \|\Phi(G(\bm{z}_1)-G(\bm{z}_2)+\bm{\nu})\|+2\delta \\
  &\le \|\Phi(G(\bm{z})-G(\bm{z}')+\bm{\nu})\|+2(C+1)\delta.
\end{aligned}
\end{equation}

\end{document}